\documentclass[final,12pt]{clear2026} %
\usepackage{hyperref}
\usepackage{url}
\usepackage{xcolor}
\usepackage{amsmath, amssymb}
\usepackage{amssymb}
\usepackage{tikz}
\usetikzlibrary{arrows.meta, positioning}
\usepackage{graphicx} %
\usepackage{wrapfig}  %
\usepackage{lipsum}   %
\usepackage{tcolorbox}
\usepackage{float}
\usepackage{fontawesome5}
\usepackage{subcaption}
\usepackage{multicol}
\usepackage{subcaption}
\usepackage{lineno}
\usepackage{bm}
\usepackage{etoc}
\usepackage{lettrine}
\usepackage{enumitem}

\definecolor{oxfordblue}{RGB}{0, 33, 71}
\definecolor{oxfordred}{RGB}{255, 90, 95}
\definecolor{oxfordgrey}{RGB}{143, 145, 162}
\definecolor{oxfordlight}{RGB}{225, 240, 255}
\definecolor{oxfordcream}{RGB}{210, 200, 185}
\definecolor{oxfordmist}{RGB}{235, 238, 245}
\definecolor{oxfordaqua}{RGB}{210, 245, 250}
\definecolor{black-cherry}{RGB}{87, 0, 0}
\definecolor{tea-green}{RGB}{215, 242, 186}
\definecolor{jungle-green}{RGB}{38, 169, 108}

\usepackage{hyperref}
\newif\ifdraft
\draftfalse  %

\usepackage[dvipsnames]{xcolor}
\newcommand{\new}[1]{#1}

\usepackage{thmtools}
\usepackage{thm-restate}

\newcommand{\defineusercomment}[3]{%
  \ifdraft
    \expandafter\newcommand\csname #1\endcsname[1]{%
      \begin{footnotesize}%
        \textcolor{#2}{\textit{\textbf{#3:} ##1}}%
      \end{footnotesize}%
    }%
  \else
    \expandafter\newcommand\csname #1\endcsname[1]{}%
  \fi
}

\defineusercomment{mwb}{red}{Markus}
\defineusercomment{anson}{magenta}{Anson}
\defineusercomment{joe}{blue}{Joe}
\defineusercomment{alex}{teal}{Alex}
\defineusercomment{hip}{violet}{Ingmar}
\defineusercomment{frederik}{oxfordgrey}{Frederik}
\defineusercomment{bradley}{oxfordred}{Bradley}

\newtheorem{assumption}{Assumption}
\newtheorem{cdefinition}{Definition}
\newtheorem{clemma}{Lemma}
\newtheorem{ctheorem}{Theorem}

\newcommand{\bt}{\boldsymbol{\theta}}
\newcommand{\bht}{\hat{\boldsymbol{\theta}}}
\newcommand{\gG}{\mathcal{G}}
\newcommand{\gGh}{\hat{\mathcal{G}}}
\newcommand{\btheta}{\boldsymbol{\theta}}
\newcommand{\bx}{\boldsymbol{x}}
\renewcommand{\bf}{\bm{f}}

\newcommand{\bh}{\bm{h}}
\newcommand{\bv}{\bm{v}}
\newcommand{\bC}{\bm{C}}
\newcommand{\btx}{\boldsymbol{x}}
\newcommand{\btau}{\boldsymbol{\tau}}

\newcommand{\ft}{\bf_{\boldsymbol{\theta}}}
\newcommand{\mg}{\mathcal{G}}

\usepackage{xcolor}
\DeclareRobustCommand{\rv}{{\bm{v}^{-1}}}

\usepackage[table]{xcolor} %
\usepackage{colortbl}      %

\makeatletter
\newcommand{\github}[1]{%
   \href{#1}{\textcolor{gray}{\faGithubSquare}}%
}
\makeatother

\usepackage{totcount}
\newtotcounter{citnum} %
\def\oldbibitem{} \let\oldbibitem=\bibitem
\def\bibitem{\stepcounter{citnum}\oldbibitem}

\title[Disentangling Dynamical Systems]{Disentangling Dynamical Systems: Causal Representation Learning Meets Local Sparse Attention}
\usepackage{times}

\clearauthor{%
 \Name{Markus W. Baumgartner} \Email{markusb@robots.ox.ac.uk}\\
 \Name{Anson Lei} \Email{anson@robots.ox.ac.uk}\\
 \Name{Joe Watson} \Email{joewatson@robots.ox.ac.uk}\\
 \Name{Ingmar Posner} \Email{ingmar@robots.ox.ac.uk} \\
 \addr Applied Artificial Intelligence Lab, Oxford Robotics Institute, Oxford, UK
}

\begin{document}
\maketitle
\vspace{-0.1cm}
\begin{abstract}%
Parametric system identification methods estimate the parameters of explicitly defined physical systems from data. Yet, they remain constrained by the need to provide an explicit function space, typically through a predefined library of candidate functions chosen via available domain knowledge.
In contrast, deep learning can demonstrably model systems of broad complexity with high fidelity, but black-box function approximation typically fails to yield explicit descriptive or disentangled representations revealing the structure of a system.
We develop a novel identifiability theorem, leveraging causal representation learning, to uncover disentangled representations of system parameters without structural assumptions.
We derive a graphical criterion specifying when system parameters can be uniquely disentangled from raw trajectory data, up to permutation and diffeomorphism.
Crucially, our analysis demonstrates that global causal structures provide a lower bound on the disentanglement guarantees achievable when considering local state-dependent causal structures.
We instantiate system parameter identification as a variational inference problem, leveraging a sparsity-regularised transformer to uncover state-dependent causal structures.
We empirically validate our approach across four synthetic domains, demonstrating its ability to recover highly disentangled representations that baselines fail to recover. Corroborating our theoretical analysis, our results confirm that enforcing local causal structure is often necessary for full identifiability.

\end{abstract}%
\begin{keywords}%
  Causal representation learning, dynamical systems identification, sparse attention
\end{keywords}%
\section{Introduction}
Parametric system identification methods aim to estimate the \textit{parameters} that govern a complex system by observing the system's behaviour. 
While these methods have proven successful in finding interpretable and physically meaningful parameters that describe the underlying dynamical systems, classical methods often require substantial domain knowledge, such as
a predetermined function space \citep{sindy,sindy-shred}.
In stark contrast, deep learning methods adopt a domain-agnostic and fully data-driven approach to modelling dynamics.
By bypassing explicit parameter estimation and directly predicting system trajectories, learning-based methods excel in highly complex settings where providing domain knowledge is prohibitively difficult, such as climate systems \citep{weather-fgn,graphcast}, biological systems \citep{AlphaFold3}, and interactive environments like video games and autonomous driving \citep{Dreamer,hu2023gaia}.
However, these methods represent the dynamics of the underlying system \emph{implicitly} within model weights, and therefore lack the insight and mechanistic understanding afforded by system identification techniques.
Motivated by the complementary strengths of these paradigms, this work investigates, both theoretically and empirically, \textit{how} and \textit{when} the underlying parameters of a physical system can be learned from observations without requiring prior functional knowledge.

The key intuition motivating our investigation is that physically meaningful parameters typically exert \textit{sparse} causal influence on system states: a given parameter does not influence all system components at all times. 
Consider a system comprised of colliding objects. System parameters such as the masses and elasticity coefficients influence only the specific objects involved in a collision, and only at the moment of impact.
We argue that this sparsity of influence serves as a practical model-selection criterion for discovering meaningful parameterisations of dynamical systems.
We formalise this intuition within the framework of \textit{parameter estimation as representation learning} \citep{yao2024marrying}, where identifying physically meaningful parametrisations from system trajectories is analogous to learning disentangled representations from observations. 
This allows us to leverage causal representation learning theory to characterise the conditions under which parameters can be disentangled. 
Specifically, we extend prior results on mechanism sparsity \citep{lachapelle2024nonparametric} to the context of dynamical systems and derive novel identifiability results that highlight the role of sparsity regularisation and elucidate the structural properties necessary for disentanglement.

To operationalise and empirically verify our theory, we propose a practical algorithm for learning parameters from observed trajectories.
We instantiate our learning method as a sparsity regularised, VAE-style model that encodes observed trajectories into latent parameters and then decodes them, together with the initial state of the system, to reconstruct future trajectories (figure \ref{fig:intro_figure}).
A key aspect of our theory is that parameter identifiability can be significantly improved if the decoder model captures \textit{local}, \textit{state-dependent} causal structures.
To reflect such structures, we employ a recently proposed sparsity-regularised transformer architecture as the decoder, designed to learn local causal dependencies.
Empirically, we validate our approach across four synthetic domains and demonstrate that the proposed method successfully isolates the underlying system parameters where baseline representation methods fail.
In sum, our core contributions are threefold:

\begin{figure}[t]
\vspace{-0.6cm}
    \centering
    \includegraphics[width=0.88\textwidth]{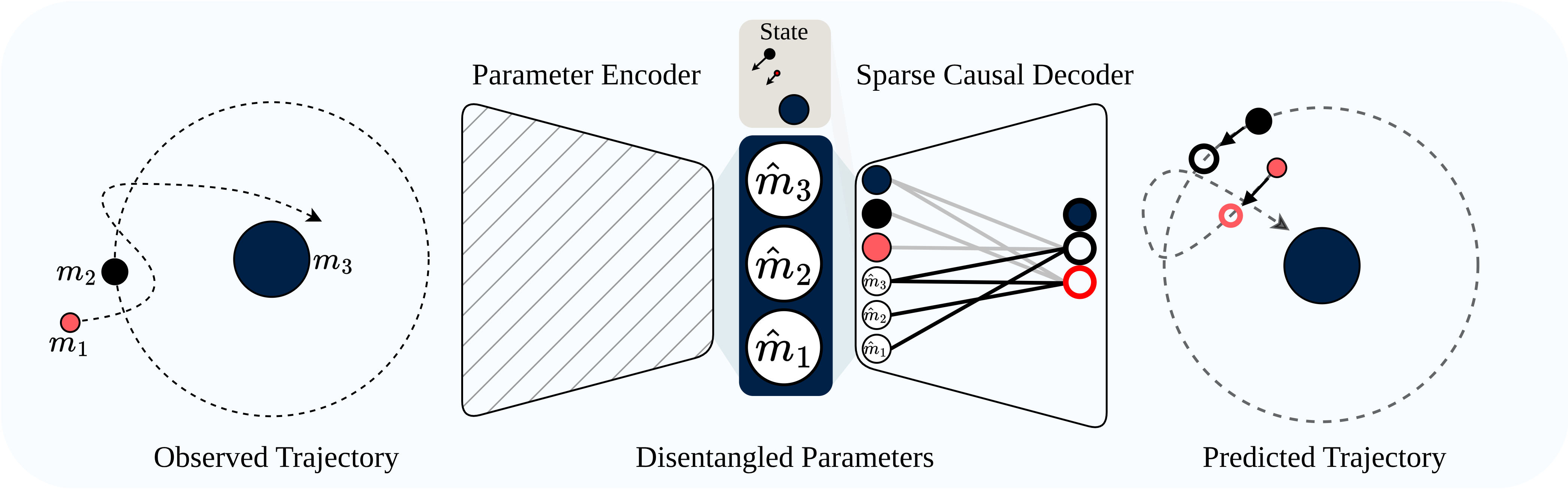}
    \caption{High-level overview of the developed theory. An observed trajectory (left) is encoded into a vector of latent system parameters (marked in dark blue). The developed theory shows that enforcing sparse causal relations between parameters and system components in the decoder (which performs one-step prediction) provably disentangles the system parameter representation.}
    \label{fig:intro_figure}
    \vspace{-0.3cm}
\end{figure}
\begin{itemize}[noitemsep]
    \item A novel theorem establishing identifiability of system parameters for local causal predictors, culminating in a graphical criterion.
    \item A practical implementation using a recent sparse transformer architecture for unsupervised local causal discovery and representation learning.
    \item An empirical validation across multiple domains, including object-centric and discontinuous systems, which corroborates the theory's predictions.
\end{itemize}

\clearpage
\section{Identifiability of System Parameters}
\label{section:theory}
In this section, we present the main identifiability theorem of our work and highlight its practical implications.
Our general approach extends prior theoretical results \citep{lachapelle2024nonparametric}, which uses mechanism sparsity to induce disentanglement, to the setting of learning parameterisations of dynamical systems from observed trajectories. In particular, we consider system parameters that change between the trajectories within a dataset. For example, consider a dataset containing planetary orbits. If each trajectory consists of a single instantiation of the system given an initial state and a particular set of parameters (e.g. planetary and solar masses), then the mass of each planet or sun strongly influences the evolution of the others.
The theory we are establishing in this section defines the conditions on the underlying system and the learning method under which mechanism sparsity leads to disentangled system parameters.
\subsection{Problem Setting}
\label{section:model_structure}
We consider nonlinear deterministic Markovian dynamical systems, which include the orbital example presented above, as well as the majority of natural systems that are not externally influenced. 
\begin{align}
\label{equation:dynamical_systems_form}
\boldsymbol{x}_{t+1}=\bf(\bx_t, \btheta) = \bf_{\theta}(\boldsymbol{x}_t),
\quad \boldsymbol{\theta} \sim \bm{p}_{\boldsymbol{\theta}}(\cdot),
\quad \btheta \in \Theta,
\quad \boldsymbol{x} \in \mathcal{X} , \quad \boldsymbol{x}_0 \in \mathcal{X}_0 \subseteq \mathcal{X},
\\
\boldsymbol{\tau}=\bh(\bx_0, \btheta)=[\btx_0,\ft(\btx_0), (\ft \circ \ft)(\btx_0),...,\ft^T (\btx_0)], \quad
\bx_0, \btheta = \bh^{-1}(\btau),
\label{equation:dynamics_model_form}
\end{align}
where $\boldsymbol{x}_t \in \mathcal{X} \subseteq \mathbb{R}^d$ denotes the state of a system at time $t$, $\bf_{\boldsymbol{\theta}}$ denotes the Markovian dynamics model, characterised by a set of parameters $\boldsymbol{\theta}\in \Theta$, where $\Theta$ has non-zero measure.
Trajectories are initialised from an initial distribution $\boldsymbol{x}_0 \in \mathcal{X}_0$. The set of systems $\bf_{\boldsymbol{\theta}} \in \mathcal{F}$ defines an environment. For instance, $\mathcal{F}$ might represent the family of orbital mechanics. 
Then $\bf_{\boldsymbol{\theta}}$ might be the instantiation of a system where a planet with mass $\theta_0$ is orbiting a sun with mass $\theta_1$. 
We assume that the parameters sparsely affect the state distribution via a causal graph $\mathcal{G}$ (assumption \ref{assumption:markov}). For example, the planet's acceleration is affected by the mass of the sun $\theta_1$ but not its own mass, and vice versa. Each trajectory is defined via a bijective trajectory decoder, $\bh(\cdot)$, which, given a parameter and starting state, auto-regressively applies the Markovian dynamics model.
Likewise, a parameter encoder, $\bh^{-1}(\cdot)$, is defined as the inverse process (equation \ref{equation:dynamics_model_form}).
This encoder-decoder setup is visualised in Figure \ref{fig:intro_figure} and forms a latent information bottleneck that forces learnt approximations to isolate the dynamical factors of variation. Let $\mathcal{S}=(\Theta,\mathcal{X}_0,\mathcal{X},\mathcal{F},\mathcal{G})$ denote the ground truth model of a dynamical family, and $\hat{\mathcal{S}}=(\hat{\Theta},\mathcal{X}_0,\mathcal{X},\hat{\mathcal{F}},\hat{\mathcal{G}})$ denote the learnt latent parameter model. For uniqueness and without loss of generality, we enforce assumptions \ref{assumption:existence} and \ref{assumption:obs_eq}, which in combination also require the bijectivity of $\bh(\cdot)$ and $\bh^{-1}(\cdot)$ with respect to their image. 
\begin{assumption}[Existence and Parameter Influence \citep{yao2024marrying}]
    \label{assumption:existence}
    For every $\bx_0 \in \mathcal{X}_0, \: \btheta\in\Theta$ there exists a unique trajectory, $\boldsymbol{\tau}$, over horizon T, satisfying $\boldsymbol{x}_{t+1}=
    \bf_\theta (\boldsymbol{x}_t)$. Equivalent to requiring $\bh (\bx_0, \bt)$ to be injective.
\end{assumption}
\begin{assumption}[Observational Equivalence]
\label{assumption:obs_eq}
Assume that the two systems, the ground-truth $\mathcal{S}$ and learnt approximation $\hat{\mathcal{S}}$, are \emph{observationally equivalent}, which posits that the modelled systems are equivalent up to invertible parameter representations. Observational equivalence requires, ${
    \forall \boldsymbol{x}_0, \boldsymbol{\theta}: \quad \exists \hat{\boldsymbol{\theta}} \text{ s.t. } \bh(\boldsymbol{x}_0, \boldsymbol{\theta}) = \hat{\bh}(\boldsymbol{x}_0,\hat{\boldsymbol{\theta}})}$ (Equivalently,  $\hat{\bh} (\cdot)$ is surjective).
\end{assumption}
\begin{assumption}[The Transition Model is Markov to a DAG \citep{lachapelle2024nonparametric}]
\label{assumption:markov}
The transition models of $\mathcal{S}$, and $\hat{\mathcal{S}}$, are causally and \emph{faithfully} \new{\citep{pearl,SpirtesBook}} related to directed acyclic graphs (DAG), $\mathcal{G}$, and $\hat{\mathcal{G}}$, which fully define the dependencies between the \textbf{output elements and latent parameters}. The edges of the graph are defined through the non-zero entries in the Jacobian of the transition model, requiring that the model and its first derivative are continuous.
Simply, the following causal equivalence is defined, where $\bt_{\text{Pa}_i^{\mathcal{G}}}$ denotes the subset of parameters that are parents of output $i$ in graph $\mathcal{G}$:
$%
    f_i (\boldsymbol{x}_t,\boldsymbol{\theta})
    \equiv
    f_i(\boldsymbol{x}_t,\boldsymbol{\theta}_{\text{Pa}_i^{\mathcal{G}}})
$.
\begin{equation}
(i,j) \notin \mathcal{G} \implies
\frac{\partial (\bf(\boldsymbol{x}_t,\boldsymbol{\theta}))_i}{\partial \theta_j} = 0 \quad \forall \boldsymbol{x}_t, \mathbf{\theta}.
\end{equation}
\end{assumption}
Assumptions \ref{assumption:existence}, \ref{assumption:obs_eq} enforce separability and uniqueness of dynamical influences, as well as ensuring that the learnt approximation sufficiently captures the system dynamics. Assumption \ref{assumption:markov} formalises the intuition that parameters affect the system components sparsely by defining parameter influence as an inherent property of a system.
All assumptions are commonly made in the non-linear ICA and causal representation learning literature \citep{oncausal,yao2024unifying}.
The fundamental investigation of this work is then to specify the conditions under which the representations of system parameters learnt by the parameter encoder, $\bh (\cdot)$, 
align with the ground-truth system parameters, or equivalently, when the representation is identified \citep{HYVARINEN2023100844,carbonneau2022measuring}.

Relative to a ground-truth representation, a learnt representation is identified (definition \ref{definition_identifiable})\footnote{All definitions can be found in appendix \ref{appendix:useful_definitions}.} 
if it is equivalent to the ground truth. Exact identifiability conditions are, in general, unobtainable in unsupervised settings \citep{disentanglement-impossible-without-structure-locatello}, due to a lack of defined scaling and bias of functional mappings \citep{lachapelle2022synergies}.
Consequently, we consider a looser form of identifiability, identifiability up to an equivalence operator (definition \ref{definition_identifiable_equivalence}). Disentanglement is defined in this setting as identifiability up to permutation and element-wise diffeomorphism.

\subsection{Graphical Criterion}
We present our main identifiability result, which shows how mechanism sparsity, when applied to the context of learning parameters of dynamical systems, can lead to identifiable parameter representations up to permutation and element-wise diffeomorphism.
\setcounter{ctheorem}{0}
\begin{ctheorem}[Disentanglement of Latent System Parameters]
    \label{theorem:1}
    Let $\mathcal{S}$, and $\hat{\mathcal{S}}$ correspond to two systems satisfying assumptions \ref{assumption:existence}, \ref{assumption:obs_eq}, and \ref{assumption:markov}.
    Further assume that,
    \begin{enumerate}
        \item The system, $\mathcal{S}$, is path connected (assumption \ref{assumption:path_connected}).
        \item $\hat{\mathcal{S}}$ satisfies $\Vert \hat{\mathcal{G}} \Vert_0 \leq \Vert \mathcal{G} \Vert_0$ (assumption \ref{assumption:sparsity_reg}). 

        \item The Jacobian of the transition model, $\nabla_{\btheta} \bf$, \emph{varies sufficiently} (assumption \ref{assumption:sufficient_variability}).
    \end{enumerate}
    Then the dynamical representations of $\mathcal{S}$, and $\hat{\mathcal{S}}$ are equivalent up to permutation and element-wise diffeomorphism if $\mathcal{G}$ satisfies the following graphical criterion:
    \begin{equation}
       \forall i: \quad  \cap_{a\in\text{Ch}_\mathcal{G}(i)}\text{Pa}_\mathcal{G}(a)=\{i\},
       \label{equation:main_graphical_criterion}
    \end{equation}
    where $\text{Ch}_\mathcal{G}(j)$, and $\text{Pa}_\mathcal{G}(j)$ denotes the set of children and parent nodes of a node $j$ in a graph $G$.
\end{ctheorem}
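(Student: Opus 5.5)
We follow the mechanism-sparsity argument of \citet{lachapelle2024nonparametric}, with the decoder's one-step transition $\bf(\bx_t,\btheta)$ playing the role of their latent transition model. The plan has four steps.

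\textbf{Step 1 (an invertible reparametrisation).} By assumptions \ref{assumption:existence} and \ref{assumption:obs_eq}, both $\bh$ and $\hat{\bh}$ are injective onto the same set of trajectories. Hence $\bv := \hat{\bh}^{-1}\circ \bh$, restricted to the parameter component, gives a well-defined bijection $\bht=\bv(\btheta)$. It does not depend on $\bx_0$, since the initial state is read off directly from $\btau$. Observational equivalence then gives
\begin{equation*}
\bf(\bx_t,\btheta)=\hat{\bf}(\bx_t,\bv(\btheta)) \quad \text{for all } \bx_t,\btheta .
\end{equation*}
Differentiability of $\bv$ and of its inverse $\rv$ should follow from the $C^1$ requirement in assumption \ref{assumption:markov}, together with full rank of the Jacobians implied by injectivity and sufficient variability. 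Differentiating in $\btheta$ yields the key identity
\begin{equation*}
\nabla_{\btheta}\bf(\bx,\btheta)=\nabla_{\bht}\hat{\bf}(\bx,\bht)\,D\bv(\btheta).
\end{equation*}

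\textbf{Step 2 (sparsity forces a permutation of the support).} Write $L=D\bv(\btheta)$. By sufficient variability (assumption \ref{assumption:sufficient_variability}), for each output $i$ the rows $\nabla_{\btheta} f_i(\bx,\btheta)$, as $\bx$ varies, span the coordinate subspace $\mathbb{R}^{\text{Pa}_\mathcal{G}(i)}$. Since each such row equals $\nabla \hat f_i \, L$, its support is contained in $\hat{\mathcal{G}}$'s parents of $i$ mapped through $L$. This gives the inclusion $\mathcal{G}\subseteq \hat{\mathcal{G}}\,L$ at the level of supports, meaning: $j\in \text{Pa}_\mathcal{G}(i)$ implies there is $k\in\text{Pa}_{\hat{\mathcal{G}}}(i)$ with $L_{kj}\neq 0$. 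Because $L$ is invertible, its support contains a permutation $\sigma$ (Leibniz expansion of the determinant). Hence $\mathcal{G}\subseteq \sigma(\hat{\mathcal{G}})$. Combining this with $\Vert\hat{\mathcal{G}}\Vert_0\le\Vert\mathcal{G}\Vert_0$ gives equality $\hat{\mathcal{G}}=\sigma^{-1}(\mathcal{G})$ up to relabeling. It further gives the entailment: $L_{kj}\neq0$ only if $\text{Ch}_\mathcal{G}(j)\subseteq\text{Ch}_\mathcal{G}(\sigma(k))$, where $\sigma(k)$ is the true parameter matched to $\hat\theta_k$.

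\textbf{Step 3 (the graphical criterion kills off-diagonal entries).} Suppose $L_{kj}\neq0$ with $j\neq\sigma(k)=:m$. Then every child of $j$ also has $m$ as a parent, so $m\in\bigcap_{a\in\text{Ch}_\mathcal{G}(j)}\text{Pa}_\mathcal{G}(a)$, contradicting \eqref{equation:main_graphical_criterion}. So $L$ is a permutation-scaled diagonal matrix at every $\btheta$.

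\textbf{Step 4 (from pointwise to global).} Path connectedness of $\Theta$ (assumption \ref{assumption:path_connected}) ensures the permutation $\sigma$ is constant across $\btheta$. This uses continuity of $L$ and the fact that $L$ stays invertible along paths, so the pattern cannot switch. Each $\hat\theta_k$ then depends on a single $\theta_{\sigma(k)}$, with nonvanishing derivative. So $\bv$ is a permutation composed with element-wise diffeomorphisms.

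\textbf{Main obstacle.} I expect Step 2 to be hardest. The support argument requires care, since the learned graph $\hat{\mathcal{G}}$ and the true one $\mathcal{G}$ are compared through a $\btheta$-dependent $L$. Sufficient variability has to be invoked at a fixed $\btheta$ while varying $\bx$. The cleanest way to get the containment is probably the linear-algebra lemma of Lachapelle et al.: if span conditions hold on both sides, then $\text{supp}(L)$ is constrained by the graphs. I would import that lemma directly rather than re-derive it.
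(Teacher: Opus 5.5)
There is a genuine gap in Step~1, and it sits exactly where the paper's new hypothesis is needed.

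Reading $\bx_0$ off $\btau$ only shows that the \emph{state} component of $\hat{\bh}^{-1}\circ\bh$ is the identity. It says nothing about whether the \emph{parameter} component $\bht=\bv(\btheta,\bx_0)$ changes with $\bx_0$ at fixed $\btheta$, and in general it can. Suppose $\mathcal{X}$ splits into dynamically invariant pieces, like the clockwise and counter-clockwise orbits of Appendix~\ref{sec:appendix:example_path_connectedness}. Then $\hat{\mathcal{S}}$ can apply a different element-wise rescaling of $\btheta$ on each piece, reproduce every trajectory exactly, and keep $\hat{\mathcal{G}}=\mathcal{G}$. No single $\bv(\btheta)$ exists.

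Without this independence you only have $\bf(\bx_t,\btheta)=\hat{\bf}(\bx_t,\bv(\btheta,\bx_0))$ along trajectories from $\bx_0$. The Jacobians at the states $\bx_p$ of the sufficient-variability assumption are then multiplied by different matrices $L(\btheta,\bx_p)$, and the span argument of Step~2 has nothing to act on.

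In the paper, removing this dependence is the entire job of Lemma~\ref{lemma:entanglement_map_independence}. The inferred parameter is argued to be invariant along each trajectory, and then over all of $\mathcal{X}$ by chaining trajectories that share a state. That is what Assumption~\ref{assumption:path_connected} supplies: traversability of the \emph{state space} by trajectories. You instead read that assumption as path-connectedness of $\Theta$ and spend it in Step~4. The theorem gives no connectivity of $\Theta$ (only non-zero measure), so Step~4 rests on a hypothesis you do not have, while the hypothesis that does the new work in this setting is never used.

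Step~2 also contains a fixable error. The Boolean inclusion ($j\in\text{Pa}_{\mathcal{G}}(i)\Rightarrow$ some $k\in\text{Pa}_{\hat{\mathcal{G}}}(i)$ has $L_{kj}\neq0$), together with a permutation in $\text{supp}(L)$, does not give $\mathcal{G}\subseteq\sigma(\hat{\mathcal{G}})$: a dense $L$ satisfies the inclusion vacuously.

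Use the spans instead. Take the combination of the $\nabla_{\btheta}\bf(\bx_p,\btheta)$ equal to $\bm{e}_{i,j}$ and push it through $\nabla_{\bht}\hat{\bf}=\nabla_{\btheta}\bf\,L^{-1}$. This yields $\text{supp}\big((L^{-1})_{j,:}\big)\subseteq\text{Pa}_{\hat{\mathcal{G}}}(i)$ for every $(i,j)\in\mathcal{G}$, and the permutation must be taken from $\text{supp}(L^{-1})$. This is the paper's Lemma~\ref{lemma:inclusion} with $\bC=\nabla_{\bht}\rv\bm{P}$.

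The resulting entailment is $(L^{-1})_{j,k}\neq0\Rightarrow\text{Ch}_{\mathcal{G}}(j)\subseteq\text{Ch}_{\mathcal{G}}(\sigma(k))$; for $L$ itself the inclusion reverses. Your version fails for $f_1=\theta_1+\theta_2 a(\bx)$, $f_2=\theta_2 b(\bx)$ with $\hat{\theta}_1=\theta_1+c\,\theta_2$, $\hat{\theta}_2=\theta_2$. Here $L_{12}=c\neq0$, but $\text{Ch}(\theta_2)=\{f_1,f_2\}\not\subseteq\text{Ch}(\theta_1)=\{f_1\}$.

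With this correction, your Step~3 goes through verbatim for $L^{-1}$, matching the paper's $\bm{A}_{\mathcal{V}^{-1}}\bm{P}=\bm{I}$, and $L$ inherits the permutation-scaling structure.
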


Here, assumption \ref{assumption:sufficient_variability}, sometimes referred to as \textit{assumption of variability} \citep{hyvarinen2019nonlinear}, is a standard assumption that requires the Jacobian of the ground truth model to depend sufficiently strongly on individual parameters such that it is possible to differentiate the parameters by observing their influence.
Assumption~\ref{assumption:path_connected} precludes the existence of disjoint partitions: in some systems, the state space can be partitioned into regions with boundaries that are not traversable by any trajectory\footnote{For an intuitive example of such a system, please refer to Appendix \ref{sec:appendix:example_path_connectedness}}. In such cases, the theory will only hold locally within each partition, guaranteeing a weaker form of conditional disentanglement. Details of the assumptions and proofs are provided in appendix \ref{appendix:proof_of_theorem}.
 
The key intuition behind the theory and its practical implications can be understood through assumption~\ref{assumption:sparsity_reg} and the graphical criterion. Assumption \ref{assumption:sparsity_reg} requires the learnt model to use a parameter representation with at most as many causal influences as the true system. To illustrate how this drives disentanglement, consider a simple 2D system with two parameters, each affecting exactly one state variable: $\theta_0 \rightarrow x_0, \theta_1 \rightarrow x_1$. Suppose an arbitrary learnt model uncovers an \emph{entangled} representation. For example, a linear mixing of the ground truth parameters.
$
\hat{\theta}_0 = \frac{1}{2}(\theta_0 + \theta_1),
$ $
\hat{\theta}_1 = \frac{1}{2}(\theta_0 - \theta_1).
$
In this case, correctly reproducing the system’s behaviour would require each state variable to depend on both mixed parameters. The causal graph would therefore need to be \emph{denser} than the true one. This illustrates a general principle: entangled representations typically require at least as many, and often more, causal edges to account for the same physical effects. By constraining how many causal relations the learned model may use, assumption \ref{assumption:sparsity_reg} restricts the space of admissible representations, pushing the model toward disentangled parameters.

However, this only works if the underlying system is sufficiently sparse. If, for example, the underlying system graph were fully connected, then every representation would be fully dense and sparsity cannot induce disentanglement. 
The presented graphical criterion (equation \ref{equation:main_graphical_criterion}) characterises exactly how sparse the underlying system needs to be: a ground-truth parameter can be identified if no other parameter also influences all of its children. In the following, we show that this criterion can be substantially relaxed by considering local causal graphs, such that a parameter is only unidentifiable if there is another parameter that influences all of its children \textit{at all time}.

\subsection{Broadened Identifiability Guarantees with State-Dependent Local Causal Graphs}
In practice, many dynamical systems exhibit interactions in which parameters exert influence \emph{sparsely} along a trajectory. For example, parameters such as an object's mass and elastic coefficients influence only the specific objects involved in a collision and only at the moment of impact. A global graph must account for every possible interaction that might occur and, as a result, fails to capture the structure of a system that can be exploited to infer disentangled representations. We consider the setting in which local subsets of a global graph are expressible as functions of the state. Imagine there are $b$ distinct local causal graphs, each being active in a non-trivial open partition of the state space $\boldsymbol{x}\in\mathcal{X}_{i\in [1,b]}$. The dynamics model, $\bf(\cdot)$, can then be assumed as Markov (assumption \ref{assumption:markov}) with respect to all local graphs within their respective partitions. 
\begin{align}
     \mathcal{G}_l(\boldsymbol{x})\subseteq\mathcal{G}, \: \forall l \in \left[ 1, 2, ... , b\right].
\end{align}
We can treat each partition as an independent representation learning problem.
Within each partition, the space of admissible parameter representations is constrained separately by each unique graph. The path-connectedness assumption ensures that the system can traverse between distinct partitions of the state space along a trajectory. Therefore, a parameter representation must be reusable across partitions and satisfy each representational constraint enforced by sparsity regularisation individually. This leads to the \emph{local graphical criterion} (equation \ref{equation:local_graphical_criterion}, derivation in appendix \ref{appendix:local_causal_graph_proof}), which formalises the conditions under which parameter reuse results in disentanglement.
 \begin{equation}
   \forall i: \quad \textstyle\bigcap_{l\in[1,b]} \cap_{a\in\text{Ch}_{\mathcal{G}_l}(i)}\text{Pa}_{\mathcal{G}_l}(a)=\{i\}.
   \label{equation:local_graphical_criterion}
\end{equation}
The local graphical criterion reveals that local sparsity constitutes a strictly stronger source of identifiability than global sparsity. As local graphs contribute to disentanglement through an intersection over graphs, the feasible space of parametrisations can only shrink as additional local graphs are introduced. One consequence is that if a parameter is disentangled in \textbf{any} individual sub-graph, then it must be disentangled globally. 
Identifiability may be attainable even when no individual local graph is sufficient to guarantee the disentanglement of any parameter on its own.

\section{Learning Parameters with Sparse Attention}
\label{section:method}
The identifiability guarantees of our theorem are expressed as constraints upon both the system, $\mathcal{S}$ and a model, $\hat{\mathcal{S}}$, where assumptions \ref{assumption:existence}, \ref{assumption:path_connected}, and \ref{assumption:sufficient_variability} apply only to the system. In fact, only two constraints are imposed on the model: observational equivalence to the system and Markovian dynamics with respect to a causal graph with $\Vert \hat{\mathcal{G}} \Vert_0 \leq \Vert \mathcal{G} \Vert_0$. In practice, these conditions are unenforceable directly. We recast these constraints as an optimisation problem by relaxing observational equivalence as a dynamical reconstruction loss and enforcing sparsity through regularisation.

\subsection{Latent Representation Learning}
In accordance with our overall framework of using representation learning techniques for latent parameter estimation, our method uses a Variational Autoencoder setup~\citep{vae}. Specifically, in the dynamical systems setting, the goal is to maximise the likelihood of observed trajectories given the initial condition, $p(\btau \mid \bx_0)$, which requires a \textit{conditional} VAE~\citep{conditionalVAE}.
The overall model consists of two modules, parametrised by $\bm{\phi}$ and $\bm{\psi}$: first, the encoder, $p_{\bm{\phi}}(\bht \vert \btau)$, encodes observed trajectories into latent parameters, and second, the decoder, $p_{\bm{\psi}}(\btau|\bht, \bx_0)$, generates the reconstructed trajectory conditioned on an initial state $\bx_0$. In practice, conditioning on the initial state is implemented by concatenating the latent parameter with the initial state.
In this work, all distributions are parameterised as Gaussian distributions with learnable mean and variance and a unit isotropic Gaussian distribution is used as the prior over the latent parameters.
The encoder and the decoder are trained jointly by maximising the Evidence Lower bound, which consists of a reconstruction term, $\mathcal{L}_{\text{rec}}(\bm{\psi}, \bm{\phi})$, and a KL divergence term, $\mathcal{L}_{\text{KL}}(\bm{\phi})$. Following standard practice, we scale strength of the KL term using a hyperparameter~\citep{higgins2017betavae}. In effect, forcing the model to reconstruct trajectories enforces the observational equivalence assumption.

\subsection{Optimising for Sparse Local Causal Structure}
The theory places a sparsity constraint on the \textit{decoder}, which generates trajectories from parameters. To this end, we leverage the SPARTAN architecture introduced by \citet{spartan} to both measure and regularise the causal relationship between input parameters and system components. SPARTAN builds upon the transformer architecture by masking the flow of information between tokens and penalising the connections between them. Starting with the key, query, and value embeddings per token, $i$, for a layer $l$ out of $L$ layers, $\{ \bm{k}_i,\bm{q}_i,\bm{v}_i\}$, SPARTAN samples an adjacency matrix, $\bm{A}^l$, treating the sigmoid, $\sigma (\cdot)$, of the attention logit as the parameter of a Bernoulli distribution. The adjacency matrix is used as a mask to restrict information flow through the softmax when computing the next hidden state $h_i$. The adjacency matrix is a function of the current state and can be interpreted as a layer wise \emph{local causal graph}. When stacking multiple transformer layers, the information flow between different tokens is tracked through the \emph{number of paths}, $\bar{\bm{A}}_{i,j}$ connecting token $j$ to $i$. 
\begin{align}
    A_{i,j}^l \sim \text{Bern}(\sigma ( \bm{q}_i^T \bm{k}_j ))
    , \quad
    \bm{h}_i &= 
    \sum_j 
    \frac{
        A_{ij} \text{exp}({
            \frac{
                \bm{q}_i^T \bm{k}_j
            }{
                \sqrt{d_k}
            }
        })\bm{v}_j
    }{
        \sum_i \text{exp}({
            \frac{
                \bm{q}_i^T \bm{k}_j
            }{
                \sqrt{d_k}
            }
            })
    }
    , \quad
    \bar{\bm{A}}(\bx_0, \bht) = 
    \prod_{l=1}^{L}
    \left(
    \bm{A}^l + \bm{\mathbb{I}}
    \right)
    , \\
    \mathcal{L}_{\text{path}}(\bm{\psi},\bm{\phi}) &= 
        \mathbb{E}_{
        \bht \sim p_{\bm{\phi}}(\cdot \mid \btau), \:
        \bx_t \in \btau\in D
    }
    \big[ 
    \Vert \bar{\bm{A}}(\bx_0, \bht)\Vert_1. 
    \big]
\end{align}
The identity matrix is added to account for residual connections. By inspecting the values of the path matrix, SPARTAN reveals a state-dependent causal graph between tokens that is differentiable and can be regularised using a path loss, $\mathcal{L}_{\text{path}}(\bm{\psi},\bm{\psi})$. 

\subsection{Training Objective}
Sparsity, reconstruction, and latent regularisation are jointly optimised in our proposed implementation of the theory. However, it is the sparsity of causal relations that drives disentanglement in our theory and not the other losses. To ensure the optimisation procedure learns the \emph{sparsest} representation that models the dynamics, training is formulated as a constrained optimisation problem, where sparsity is minimised subject to a constraint on non-causal losses set by the target loss $\mathcal{L}_\star$. Learning the \emph{sparsest} representation ensures that $\Vert \hat{\mathcal{G}} \Vert_0 \leq \Vert \mathcal{G} \Vert_0$ without explicitly defining $\Vert \mathcal{G} \Vert_0$.
\begin{align}
    \min_{\bm{\phi},\bm{\psi}} \mathcal{L}_{\text{path}}(\bm{\psi},\bm{\phi}) \text{ subject to } \mathcal{L}_{\text{rec}}(\bm{\psi}, \bm{\phi})+ \mathcal{L}_{\text{KL}}(\bm{\phi}) + \mathcal{L}_{\text{logit}}(\bm{\psi}) \leq  \mathcal{L}_\star
    .
\end{align}
Auxiliary losses are introduced in subsequent sections. The constrained optimisation problem is solved by introducing a Lagrange multiplier, $\lambda_{\text{dual}}$, and solving via a dual min-max optimisation scheme by taking alternating gradient steps on the parameters and $\lambda_{\text{dual}}$ \citep{rezende2018taming}.
\begin{align}
\label{equation:loss}
        \max_{\lambda_{\text{dual}}>0}\: \min_{\bm{\psi}, \bm{\phi}} \: 
        \mathcal{L}_{path}(\bm{\psi},\bm{\phi}) + 
\lambda_{\text{dual}}\Big( 
        \mathcal{L}_{\text{rec}}(\bm{\psi}, \bm{\phi})
        +\mathcal{L}_{\text{KL}}(\bm{\phi})+\mathcal{L}_{\text{logit}}(\bm{\psi})
        -  \mathcal{L}_\star    
        \Big)
    .
\end{align}

\paragraph{Attention Logit Regularisation.} 
The min-max optimisation scheme results in a two-phase learning process. First, the model learns to reconstruct whilst ignoring the sparsity regularisation loss ($\lambda_\text{dual}\rightarrow \infty$). At some point during training, the reconstruction threshold is reached, and the sparsity loss becomes dominant ($\lambda_\text{dual}\rightarrow 0$), which will push the entangled representation toward disentanglement. The representational \emph{switch} poses a complex optimisation problem, particularly as transformers learn increasingly peaked and low-entropy attention patterns during training \citep{zhai2023stabilizing}, leading to vanishing gradients via the softmax, thereby hindering representational plasticity during the second phase of training. Vanishing gradients are combatted through a small loss that penalises large attention logits, $\mathcal{L}_\text{logit}(\bm{\psi})$. An ablation exemplifying the necessity of the logit losses is provided in appendix \ref{appendix:attention_logit_ablation}. The functional form of the logit loss is shown below, where $l \in L$, $i \in T$, $j \in T$ represent the layer and two token dimensions respectively, and the query and key vector for each layer and token are functions of $\bx_t$ and $\bht$.
\begin{align}
    \mathcal{L}_{\text{logit}}(\bm{\psi}) &=
    \mathbb{E}_{
            \bht \sim p_{\bm{\phi}}(\cdot \mid \btau), \:
        \bx_t \in \btau\in D}
    \Bigg[
    \frac{\lambda_{logit}}{LT^2}\sum_{l,i,j=1}^{L,T,T}
    e^{
    \bm{q}_{l,i}^T \bm{k}_{l,j}
    }+ e^{
    -\bm{q}_{l,i}^T \bm{k}_{l,j}
    }
    \Bigg].
    \label{equation:logit_loss}
\end{align}

\section{Experiments}
\label{section:experiments}
Building on the local causal graph discovery method outlined in section \ref{section:method} and the conditions specified in our main theorem, this section empirically examines the feasibility of recovering disentangled representations amidst modelling and approximation errors. Additionally, by testing in controlled domains, we assess the identifiability claims in section \ref{section:theory} and validate the theoretical framework through practical experiments.
\paragraph{Environments.} We evaluate our method across four synthetic domains with known ground-truth parameters and causal structure. Disentanglement of a learnt representation up to permutation and element-wise diffeomorphism is measured by computing the \emph{mean-max correlation coefficient} (MCC)\footnote{Details on the MCC metric are provided in appendix \ref{appendix:mcc_metric}.} metric between the ground-truth and learnt parameters, as is common in related literature \citep{lachapelle2022synergies,yao2022temporally,lachapelle2022partial}.

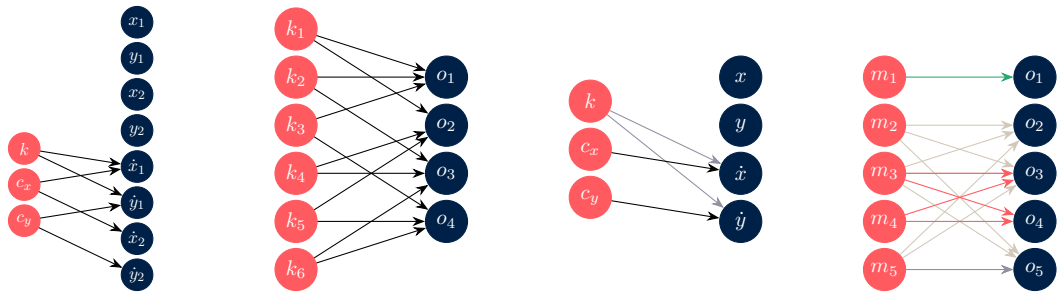
\begin{figure}[t]
    \centering
    \begin{minipage}[t]{0.23\textwidth}
        \centering
        \begin{tikzpicture}[
            scale=0.6,
            every node/.style={transform shape},
            node distance=0.5cm and 2.5cm, 
            >=Stealth,
            leftnode/.style={circle, draw=oxfordred, fill=oxfordred, text=white, minimum size=7mm, font=\bfseries, inner sep=0pt},
            rightnode/.style={circle, draw=oxfordblue, fill=oxfordblue, text=white, minimum size=7mm, font=\bfseries, inner sep=0pt}
        ]
            \node[leftnode] (k) at (0,-0.0) {$k$};
            \node[leftnode] (cx) at (0,-0.8) {$c_x$};
            \node[leftnode] (cy) at (0,-1.6) {$c_y$};
            \node[rightnode] (x1) at (2.5, 2.8) {$x_1$};
            \node[rightnode] (y1) at (2.5, 2.0) {$y_1$};
            \node[rightnode] (x2) at (2.5, 1.2) {$x_2$};
            \node[rightnode] (y2) at (2.5, 0.4) {$y_2$};
            \node[rightnode] (vx1) at (2.5, -0.4) {$\dot{x}_1$};
            \node[rightnode] (vy1) at (2.5, -1.2) {$\dot{y}_1$};
            \node[rightnode] (vx2) at (2.5, -2.0) {$\dot{x}_2$};
            \node[rightnode] (vy2) at (2.5, -2.8) {$\dot{y}_2$};

            \draw[->] (k) -- (vx1);
            \draw[->] (k) -- (vy1);
            \draw[->] (cx) -- (vx1);
            \draw[->] (cx) -- (vx2);
            \draw[->] (cy) -- (vy1);
            \draw[->] (cy) -- (vy2);

        \end{tikzpicture}
    \end{minipage}
    \hfill
    \begin{minipage}[t]{0.23\textwidth}
        \centering
        \begin{tikzpicture}[
            scale=0.8,
            every node/.style={transform shape},
            node distance=0.75cm and 2.5cm, 
            >=Stealth,
            leftnode/.style={circle, draw=oxfordred, fill=oxfordred, text=white, minimum size=7mm, font=\small, inner sep=0pt},
            rightnode/.style={circle, draw=oxfordblue, fill=oxfordblue, text=white, minimum size=7mm, font=\small, inner sep=0pt}
        ]
        \node[leftnode] (k_1) at (0, 2.0) {$k_1$};
        \node[leftnode] (k_2) at (0, 1.2) {$k_2$};
        \node[leftnode] (k_3) at (0, 0.4) {$k_3$};
        \node[leftnode] (k_4) at (0, -0.4) {$k_4$};
        \node[leftnode] (k_5) at (0, -1.2) {$k_5$};
        \node[leftnode] (k_6) at (0, -2.0) {$k_6$};

        \node[rightnode] (p_1) at (2.5, 1.2) {$o_1$};
        \node[rightnode] (p_2) at (2.5, 0.4) {$o_2$};
        \node[rightnode] (p_3) at (2.5, -0.4) {$o_3$};
        \node[rightnode] (p_4) at (2.5, -1.2) {$o_4$};

        \draw[->] (k_1) -- (p_1);
        \draw[->] (k_1) -- (p_2);

        \draw[->] (k_2) -- (p_1);
        \draw[->] (k_2) -- (p_3);

        \draw[->] (k_3) -- (p_1);
        \draw[->] (k_3) -- (p_4);

        \draw[->] (k_4) -- (p_2);
        \draw[->] (k_4) -- (p_3);

        \draw[->] (k_5) -- (p_2);
        \draw[->] (k_5) -- (p_4);

        \draw[->] (k_6) -- (p_3);
        \draw[->] (k_6) -- (p_4);

        \end{tikzpicture}
        \label{fig:environments:springs}
    \end{minipage}
    \hfill
    \begin{minipage}[t]{0.23\textwidth}
        \centering
        \begin{tikzpicture}[
            scale=0.8,
            every node/.style={transform shape},
            node distance=0.75cm and 2.5cm, 
            >=Stealth,
            leftnode/.style={circle, draw=oxfordred, fill=oxfordred, text=white, minimum size=7mm, font=\small, inner sep=0pt},
            rightnode/.style={circle, draw=oxfordblue, fill=oxfordblue, text=white, minimum size=7mm, font=\small, inner sep=0pt},
            blank/.style={circle,draw=white,fill=white,text=white,minimum size = 7mm, font=\small, inner sep = 0pt},
        ]
            \node[leftnode] (k) at (0,0.8) {$k$};
            \node[leftnode] (cx) at (0,0) {$c_x$};
            \node[leftnode] (cy) at (0,-0.8) {$c_y$};
            \node[rightnode] (x) at (2.5, 1.2) {$x$};
            \node[rightnode] (y) at (2.5, 0.4) {$y$};
            \node[rightnode] (vx) at (2.5, -0.4) {$\dot{x}$};
            \node[rightnode] (vy) at (2.5, -1.2) {$\dot{y}$};
            \node[blank]    (blank) at (2.5,-2.0) {$ $};

            \draw[->, oxfordgrey] (k) -- (vx);
            \draw[->, oxfordgrey] (k) -- (vy);
            \draw[->] (cx) -- (vx);
            \draw[->] (cy) -- (vy);

        \end{tikzpicture}
        \label{fig:environments:local}

    \end{minipage}
    \hfill
    \begin{minipage}[t]{0.23\textwidth}
        \centering
        \begin{tikzpicture}[
            scale=0.8,
            every node/.style={transform shape},
            node distance=0.75cm and 2.5cm, 
            >=Stealth,
            leftnode/.style={circle, draw=oxfordred, fill=oxfordred, text=white, minimum size=7mm, font=\small, inner sep=0pt},
            rightnode/.style={circle, draw=oxfordblue, fill=oxfordblue, text=white, minimum size=7mm, font=\small, inner sep=0pt}
        ]
            \node[leftnode] (m1) at (0, 1.6) {$m_1$};
            \node[leftnode] (m2) at (0, 0.8) {$m_2$};
            \node[leftnode] (m3) at (0, 0.0) {$m_3$};
            \node[leftnode] (m4) at (0, -0.8) {$m_4$};
            \node[leftnode] (m5) at (0, -1.6) {$m_5$};

            \node[rightnode] (o1) at (2.5, 1.6) {$o_1$};
            \node[rightnode] (o2) at (2.5, 0.8) {$o_2$};
            \node[rightnode] (o3) at (2.5, -0.0) {$o_3$};
            \node[rightnode] (o4) at (2.5, -0.8) {$o_4$};
            \node[rightnode] (o5) at (2.5, -1.6) {$o_5$};

            \draw[->, oxfordcream] (m2) -- (o2);
            \draw[->, oxfordcream] (m2) -- (o3);
            \draw[->, oxfordcream] (m2) -- (o5);
            \draw[->, oxfordcream] (m3) -- (o2);
            \draw[->, oxfordcream] (m3) -- (o3);
            \draw[->, oxfordcream] (m3) -- (o5);
            \draw[->, oxfordcream] (m5) -- (o2);
            \draw[->, oxfordcream] (m5) -- (o3);
            \draw[->, oxfordcream] (m5) -- (o5);
            
            \draw[->, jungle-green] (m1) -- (o1);
            \draw[->, oxfordgrey] (m5) -- (o5);

            \draw[->, oxfordred] (m3) -- (o4);
            \draw[->, oxfordred] (m4) -- (o3);
            \draw[->, oxfordred] (m3) -- (o3);
            \draw[->, oxfordred] (m4) -- (o4);

        \end{tikzpicture}
        \label{fig:environments:bounce}
    \end{minipage}
    \caption{
    Depiction of the ground-truth DAGs of the evaluation environments. The \emph{left} and \emph{centre-left} correspond to the dual particle and springs environments, respectively, and both satisfy the global graphical criterion for disentanglement. The \emph{centre-right} and \emph{right} graphs correspond to the local particle and bounce environments, respectively. Coloured arrows indicate causal edges that are only active in subsets of the state space, and only a limited number of subsets are shown for the bounce environment. The local particle and bounce environments satisfy only the local, not the global, graphical criterion for disentanglement.
    }    \label{fig:particles_dag_three}
\end{figure}

The \emph{dual particle} and \emph{local particle} environments consist of modified 2D mass-spring-damper systems. The former contains two point masses with independent damping on the vertical and horizontal axes, where one point mass is additionally tethered to the origin via a spring. The latter models a single point mass under independent x-y damping and a spring which becomes \emph{slack} near the origin. The \emph{springs} environment features four point masses coupled by six springs, and the \emph{bounce} environment simulates five variable mass balls that elastically collide, creating dynamic local causal graphs. Figure \ref{fig:particles_dag_three} depicts the corresponding causal structures, with expanded details on each environment provided in appendix \ref{appendix:dataset}. Together, these environments span systems in which the global graphical criterion is sufficient for identifiability, the local criterion is adequate, and the global criterion is insufficient; the parameters causally influence scalar state values, and the parameters causally influence the system at the object level (modelled using vector representations). The \emph{springs} and \emph{bounce} environments are inspired by similar variations from prior works in causal discovery and representation learning \citep{ACD,battaglia2016interaction, li2020causal}.

\paragraph{Baselines.}
We compare against baselines chosen to isolate the role of causal structure and sparsity in parameter identifiability. 
All models share the same variational autoencoder learning objective and differ only in their inductive bias on dependencies between parameters and state, i.e. the decoder.
Since the theory only places sparsity constraints on the decoder, the encoder architecture is held fixed across all baselines. An MLP is used as the encoder in all environments except the \emph{bounce} environment, where a convolutional network is found to be more efficient for longer sequences.
For the decoder, we compare against three baselines: 1. \textit{MLP}, which imposes no structural constraints and serves as a control for disentanglement arising from the variational bottleneck alone;
2. \textit{VCD}~\citep{vcd}, which uses a masked MLP decoder based on a learned global causal graph and satisfies identifiability assumptions only when the global graphical criterion holds;
3. \textit{Transformer}, which uses a transformer that does not explicitly impose sparsity constraints, but nonetheless can provide an implicit bias towards sparse information flow due to the softmax attention.
Finally, our method, labelled SPARTAN, uses a sparsity-regularised transformer as the decoder, as described in section \ref{section:method}.
\subsection{Results}
\begin{figure}[t]
    \centering
    \includegraphics[width=0.9\textwidth]{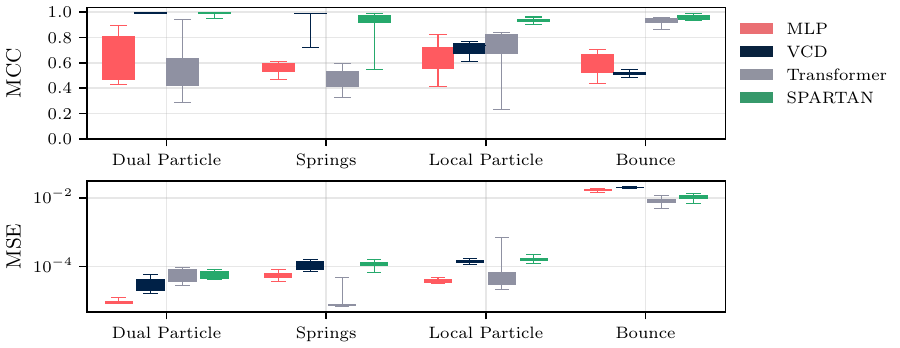}
    \caption{Comparison of disentanglement across the test environments, where an MCC of $1.0$ represents perfect disentanglement. All trials are repeated over eight random seeds. Box plots display the minimum, lower quartile, upper quartile, and maximum values. The validation reconstruction loss is shown at the bottom, indicating that all models are approximately equiperformant in these environments. The VCD baseline, which learns static graphs, strongly disentangles in the first two environments, which satisfies the global graph criterion. In contrast, only SPARTAN, which learns state-dependent graphs, consistently disentangles in all environments.}
    \label{figure:primary:results}
\end{figure}
\begin{figure}[p]
    \centering
    \subfigure{\includegraphics[width=0.49\textwidth, trim=4pt 4pt 4pt 6pt, clip]{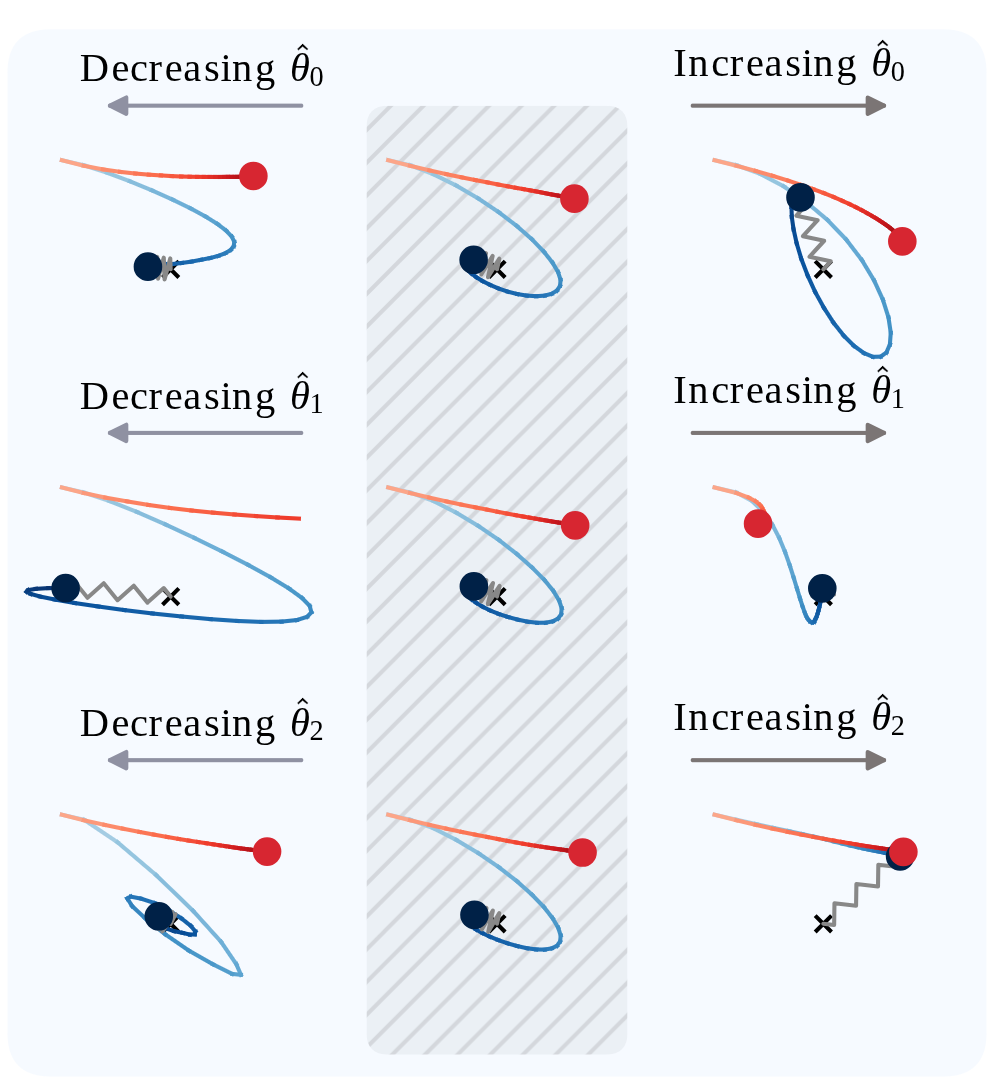}} 
    \subfigure{\includegraphics[width=0.49\textwidth, trim=4pt 4pt 4pt 6pt, clip]{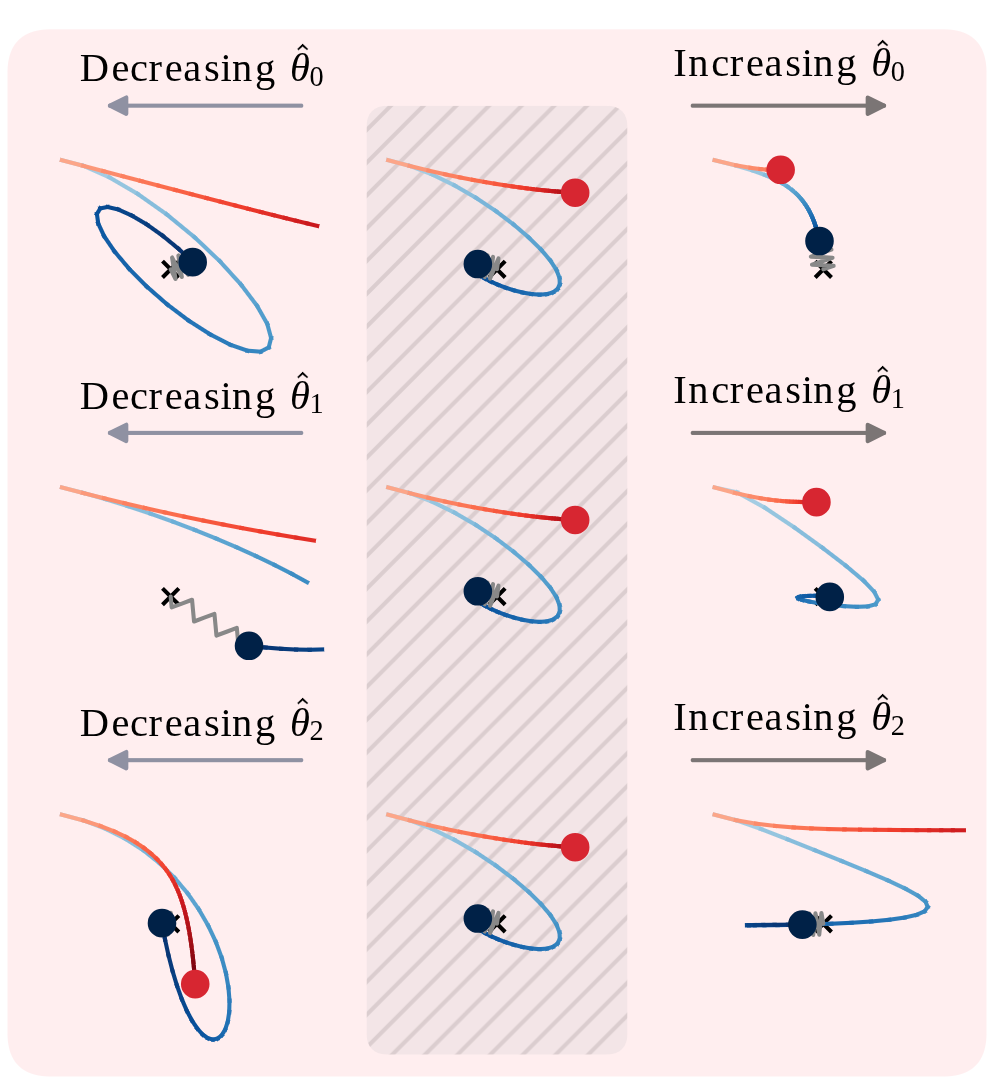}} 
    \caption{
    Autoregressive rollouts of the dual particle environment produced from the learnt (a) SPARTAN, and (b) MLP models. Starting from the same initial state, the red particle experiences resistive damping forces independently in the horizontal and vertical axes, whilst the blue particle additionally experiences a spring force to the origin. The ground-truth system parameters are the damping coefficients and the spring constant.   
    (a) shows a clear separation of parameters, increasing $\hat{\theta}_2$ evidently reduces spring strength. As the red particle is unaffected by the spring, it is invariant to changes in $\hat{\theta}_2$. Increasing $\hat{\theta}_1$ \emph{increases} horizontal damping, and increasing $\hat{\theta}_0$ \emph{reduces} vertical axis damping. In contrast, the entangled representations produced by the MLP yield no such insight.}
    \label{fig:example_rollouts}
\end{figure}
\begin{figure}[p]
    \centering
    \includegraphics[width=\textwidth]{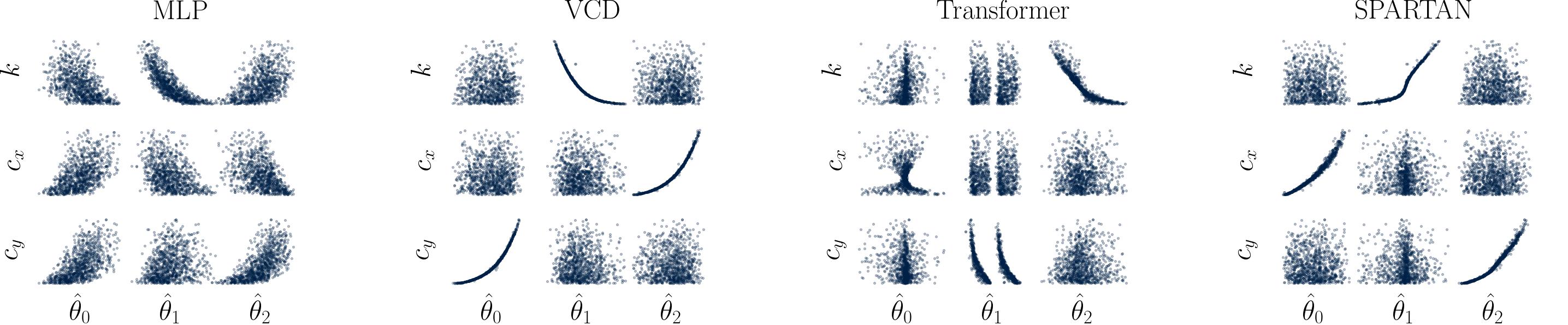}
    \caption{
    Plots of the entanglement mapping for all models in the dual particle environment. Each subplot shows the marginal distribution of a learnt parameter (x-axis) against the ground-truth (y-axis). Representative trials were chosen. Disentanglement is exemplified by the clear element-wise diffeomorphic structure learnt by both the SPARTAN and VCD architectures, whereas the MLP and Transformer architectures learn entangled representations.    
    Extended results for all environments and representative samples of the causal graphs learnt are provided in appendix \ref{appendix:extended_results}.}
    \label{fig:learnt_representations}
\end{figure}

Figure \ref{figure:primary:results} shows the distribution of MCC scores (higher is better) obtained when training each baseline in each environment across eight random seeds. Reconstruction error is reported to verify that differences in disentanglement are not attributable to failures in modelling the system's dynamics. 

In environments where the global causal graphs satisfy the graphical criterion, namely \emph{dual particle} and \emph{springs}, models enforcing global causal sparsity recover disentangled parameter representations. Both the VCD and SPARTAN baselines consistently achieve MCC scores close to one (the upper bound), corroborating our theoretical analysis. In contrast, both the MLP and transformer baselines exhibit a substantially lower and broader distribution of achieved disentanglement.
Figure \ref{fig:learnt_representations} illustrates samples of the latent representations learnt by each model in the \emph{dual particle} environment. 
Here, we observe that VCD and SPARTAN both learn latent parameters that can be mapped to the ground-truth parameters in a one-to-one manner, i.e. each ground-truth parameter can be predicted from exactly one learned parameter, qualitatively illustrating the disentanglement induced by sparsity regularisation.
Similar examples for other environments can be found in Appendix \ref{appendix:representations}.
To demonstrate the \textit{local} extension of our theory, in the \emph{local particle} and \emph{bounce} environments, the global graphical criterion is not satisfied, and local causal graphs are required for disentanglement. Consequently, the VCD baseline fails to recover disentangled parameters, while
SPARTAN consistently learns highly disentangled representations. 
In the \textit{local particle} environment, VCD is still able to attain stronger disentanglement compared to the other baselines, since the system graph still supports partial disentanglement.
In the \textit{bounce} environment, however, since the local causal graphs are significantly sparser (e.g. edges only exist for one timestep during collision), and therefore the VCD baseline is unable to recover any disentanglement.
We note that the Transformer baseline also achieves high disentanglement in the \textit{Bounce} environment. We hypothesise that this is due to the implicit bias from softmax attention, which has been shown to be sufficient for inducing local causal graphs in simple situations~\citep{Coda}.
Nonetheless, SPARTAN is the only model that is able to learn disentangled parameters robustly across all tested environments.
To further demonstrate the usefulness of the learned parameters, we perform latent \textit{interventions} on the learned dynamical systems to generate counterfactual rollout trajectories. Figure \ref{fig:example_rollouts} visualises how intervening on the learnt parameters in the \emph{dual particle} environment results in physically meaningful changes to trajectories, whereas intervening on entangled parameters lead to uninterpretable changes in system behaviours that affect all states.

Overall, the empirical results corroborate the identifiability claims of section~\ref{section:theory} and show that in settings where the relevant criteria are satisfied, disentangled representations are recoverable by realising sparse causal relations via appropriate sparsity regularisation.

\section{Related Works}
The theoretical backbone of our work draws inspiration from prior identifiability results in Causal Representation Learning (CRL) \citep{scholkopf2021toward} and Non-Linear Independent Component Analysis \citep{hyvarinen1999nonlinear}, which explore the conditions under which recovering unobserved variables from data is possible.
Works in this area show that disentanglement can be achieved via various inductive biases (e.g., additive decoders \citep{lachapelle2023additive}), assumptions on the data-generating processes (e.g. equivariances \citep{ahuja2021properties}, multi-view data \citep{yao2024multi}, and auxiliary information such as interventions \citep{lippe2022citris}). Variations exist within each subgenre; interventions can be known a priori \citep{ahuja2023}, unknown \citep{von2023nonparametric,varici2024linear}, or differ in their assumptions about intervention targets. 
Furthermore, when distinguishing between non-temporal and temporal data, temporal CRL leverages distinct assumptions, including non-stationary time series \citep{hyvarinen2016unsupervised,song2023}, interventions on the temporal transition function \citep{lippe2022citris}, or sparse relations between ground-truth latent factors \citep{li2025}.

Our approach builds specifically on the mechanism sparsity principle \citep{lachapelle2022partial} to achieve identifiability.
The key distinction between our approach and existing CRL efforts is that, whereas prior works explore \textit{state} representation for observations such as images \citep{buchholz2023learning,lachapelle2024nonparametric}, our setting concerns learning dynamical \textit{parameters}.
In this sense, our work can be viewed as an instantiation of the broader framework of \textit{parameter learning as representation learning}, as advocated in recent literature \citep{yao2024marrying}.

Central to our theory and proposed method is the notion of \textit{local causal graphs}, which capture the causal dependencies between variables at a fine-grained temporal resolution. This nascent idea has recently been explored in the context of world models \citep{zhao2025curiouscausalityseekingagentslearn,spartan} and reinforcement learning \citep{hwang2024fine,seitzer2021causal}, where state-dependent local structures are shown to improve robustness and adaptation efficiency.
To the best of our knowledge, our work is the first to theoretically demonstrate how local causal graphs can extend identifiability guarantees for parameter learning and, more broadly, representation learning.

On a conceptual level, our work shares the motivation of System Identification methods, which aim to estimate the underlying parameters of a system given observed trajectories.
Here, classical methods require significant prior knowledge of the system, such as known functional forms in PINN-based ODE learning methods \citep{PINNs}, and a pre-specified library of functions in the SINDy family of works \citep{sindy,sindy-pde}. 
In contrast, our work relaxes these requirements and relies only on the general assumption of mechanism sparsity. 
Beyond System Identification, the notion of extracting latent descriptions of systems from observed trajectories also appears in adjacent fields such as meta-RL \citep{MetaCARD,DreamToAdapt}, where latent task embeddings are inferred from past trajectories, and in-context inductive reasoning \citep{lpn}, where the system rules are inferred from context examples as latent program codes. 
We believe the insights developed in this work are readily transferable to these lines of work.

\section{Conclusion}
In this work, we analyse the identifiability of dynamical system parameters by extending prior results in causal representation learning.
Our main result shows that mechanism sparsity can induce identifiable representation of dynamical parameters and make explicit the conditions on the class of dynamical systems where this is possible.
In settings where parameters exert \textit{temporally sparse} causal influence on the system, we extend our theory to show that state-dependent local causal graphs strictly strengthen these identifiability guarantees, enlarging the class of identifiable systems.

Motivated by the theory, we propose a practical algorithm for identifying latent parameters using variational inference and sparse transformers.
Experiments across four synthetic domains corroborate the theoretical analysis: mechanism sparsity allows consistent learning of disentangled representations, and in scenarios where globally sparse methods are insufficient, local sparse attention enables consistent disentanglement. Furthermore, we empirically show that the disentangled parameters learnt under these conditions enable physically meaningful and localised interventions on system behaviour, as well as counterfactual trajectory generation. 
\paragraph{Limitations and Future work.}
The present work has several limitations that point toward promising directions for future investigation.
It has long been conjectured that disentangled representations play a central role in improving generalisation and robustness \citep{lachapelle22a,bengio2013representation}. While our experiments demonstrate that disentanglement is possible, future work should evaluate its generalisation properties in downstream tasks.
Moreover, in its current form, our theory requires a priori knowledge of the dimension of the parameter space, which can be relaxed in future work by, for example, performing model selection on the latent dimension.
Finally, the theorem makes explicit \emph{what} can be learnt from a given dataset and system, and, as such, future work can exploit the graphical criterion for active data acquisition strategies.
\newpage
\acks{
This research was supported by an EPSRC Programme Grant (EP/V000748/1). The authors would like to acknowledge the use of the University of Oxford Advanced Research Computing (ARC) facility (http://dx.doi.org/10.5281/zenodo.22558) and the SCAN computing cluster in carrying out this work. Ingmar Posner holds concurrent appointments as a Professor of Applied AI at the University of Oxford and as an Amazon Scholar. This paper describes work performed at the University of Oxford and is not associated with Amazon.

The authors would also like to thank Alexander Mitchell and Frederik Nolte, whose keen insight and invaluable discussions were instrumental in making this work possible.
}

\bibliography{sections/citations.bib}

\begin{thebibliography}{76}
\providecommand{\natexlab}[1]{#1}
\providecommand{\url}[1]{\texttt{#1}}
\expandafter\ifx\csname urlstyle\endcsname\relax
  \providecommand{\doi}[1]{doi: #1}\else
  \providecommand{\doi}{doi: \begingroup \urlstyle{rm}\Url}\fi

\bibitem[Abramson et~al.(2024)Abramson, Adler, Dunger, Evans, Green, Pritzel, Ronneberger, Willmore, Ballard, Bambrick, Bodenstein, Evans, Hung, O'Neill, Reiman, Tunyasuvunakool, Wu, {\v{Z}}emgulyt{\.{e}}, Arvaniti, Beattie, Bertolli, Bridgland, Cherepanov, Congreve, Cowen-Rivers, Cowie, Figurnov, Fuchs, Gladman, Jain, Khan, Low, Perlin, Potapenko, Savy, Singh, Stecula, Thillaisundaram, Tong, Yakneen, Zhong, Zielinski, {\v{Z}}{\'i}dek, Bapst, Kohli, Jaderberg, Hassabis, and Jumper]{AlphaFold3}
Josh Abramson, Jonas Adler, Jack Dunger, Richard Evans, Tim Green, Alexander Pritzel, Olaf Ronneberger, Lindsay Willmore, Andrew~J. Ballard, Joshua Bambrick, Sebastian~W. Bodenstein, David~A. Evans, Chia-Chun Hung, Michael O'Neill, David Reiman, Kathryn Tunyasuvunakool, Zachary Wu, Akvil{\.{e}} {\v{Z}}emgulyt{\.{e}}, Eirini Arvaniti, Charles Beattie, Ottavia Bertolli, Alex Bridgland, Alexey Cherepanov, Miles Congreve, Alexander~I. Cowen-Rivers, Andrew Cowie, Michael Figurnov, Fabian~B. Fuchs, Hannah Gladman, Rishub Jain, Yousuf~A. Khan, Caroline M.~R. Low, Kuba Perlin, Anna Potapenko, Pascal Savy, Sukhdeep Singh, Adrian Stecula, Ashok Thillaisundaram, Catherine Tong, Sergei Yakneen, Ellen~D. Zhong, Michal Zielinski, Augustin {\v{Z}}{\'i}dek, Victor Bapst, Pushmeet Kohli, Max Jaderberg, Demis Hassabis, and John~M. Jumper.
\newblock Accurate structure prediction of biomolecular interactions with alphafold 3.
\newblock \emph{Nature}, 630\penalty0 (8016):\penalty0 493--500, Jun 2024.
\newblock ISSN 1476-4687.
\newblock \doi{10.1038/s41586-024-07487-w}.
\newblock URL \url{https://doi.org/10.1038/s41586-024-07487-w}.

\bibitem[Ahuja et~al.(2022)Ahuja, Hartford, and Bengio]{ahuja2021properties}
Kartik Ahuja, Jason Hartford, and Yoshua Bengio.
\newblock Properties from mechanisms: an equivariance perspective on identifiable representation learning.
\newblock In \emph{International Conference on Learning Representations}. Proceedings of Machine Learning Research, 2022.

\bibitem[Ahuja et~al.(2023)Ahuja, Mahajan, Wang, and Bengio]{ahuja2023}
Kartik Ahuja, Divyat Mahajan, Yixin Wang, and Yoshua Bengio.
\newblock Interventional causal representation learning.
\newblock In \emph{International conference on machine learning}, pages 372--407. PMLR, 2023.

\bibitem[Alet et~al.(2025)Alet, Price, El-Kadi, Masters, Markou, Andersson, Stott, Lam, Willson, Sanchez-Gonzalez, et~al.]{weather-fgn}
Ferran Alet, Ilan Price, Andrew El-Kadi, Dominic Masters, Stratis Markou, Tom~R Andersson, Jacklynn Stott, Remi Lam, Matthew Willson, Alvaro Sanchez-Gonzalez, et~al.
\newblock Skillful joint probabilistic weather forecasting from marginals.
\newblock \emph{arXiv preprint arXiv:2506.10772}, 2025.

\bibitem[Battaglia et~al.(2016)Battaglia, Pascanu, Lai, Jimenez~Rezende, et~al.]{battaglia2016interaction}
Peter Battaglia, Razvan Pascanu, Matthew Lai, Danilo Jimenez~Rezende, et~al.
\newblock Interaction networks for learning about objects, relations and physics.
\newblock \emph{Advances in neural information processing systems}, 29, 2016.

\bibitem[Bengio et~al.(2013)Bengio, Courville, and Vincent]{bengio2013representation}
Yoshua Bengio, Aaron Courville, and Pascal Vincent.
\newblock Representation learning: A review and new perspectives.
\newblock \emph{IEEE transactions on pattern analysis and machine intelligence}, 35\penalty0 (8):\penalty0 1798--1828, 2013.

\bibitem[Bronstein et~al.(2021)Bronstein, Bruna, Cohen, and Veli{\v{c}}kovi{\'c}]{bronstein2021geometric}
Michael~M Bronstein, Joan Bruna, Taco Cohen, and Petar Veli{\v{c}}kovi{\'c}.
\newblock Geometric deep learning: Grids, groups, graphs, geodesics, and gauges.
\newblock \emph{arXiv preprint arXiv:2104.13478}, 2021.

\bibitem[Brunton et~al.(2016)Brunton, Proctor, and Kutz]{sindy}
Steven~L. Brunton, Joshua~L. Proctor, and J.~Nathan Kutz.
\newblock Discovering governing equations from data by sparse identification of nonlinear dynamical systems.
\newblock \emph{Proceedings of the National Academy of Sciences}, 113\penalty0 (15):\penalty0 3932--3937, 2016.
\newblock \doi{10.1073/pnas.1517384113}.
\newblock URL \url{https://www.pnas.org/doi/abs/10.1073/pnas.1517384113}.

\bibitem[Buchholz et~al.(2023)Buchholz, Rajendran, Rosenfeld, Aragam, Sch{\"o}lkopf, and Ravikumar]{buchholz2023learning}
Simon Buchholz, Goutham Rajendran, Elan Rosenfeld, Bryon Aragam, Bernhard Sch{\"o}lkopf, and Pradeep Ravikumar.
\newblock Learning linear causal representations from interventions under general nonlinear mixing.
\newblock \emph{Advances in Neural Information Processing Systems}, 36:\penalty0 45419--45462, 2023.

\bibitem[Burgess et~al.(2018)Burgess, Higgins, Pal, Matthey, Watters, Desjardins, and Lerchner]{burgess2018understanding}
Christopher~P Burgess, Irina Higgins, Arka Pal, Loic Matthey, Nick Watters, Guillaume Desjardins, and Alexander Lerchner.
\newblock Understanding disentangling in beta-vae.
\newblock \emph{arXiv preprint arXiv:1804.03599}, 2018.

\bibitem[Carbonneau et~al.(2022)Carbonneau, Zaidi, Boilard, and Gagnon]{carbonneau2022measuring}
Marc-Andr{\'e} Carbonneau, Julian Zaidi, Jonathan Boilard, and Ghyslain Gagnon.
\newblock Measuring disentanglement: A review of metrics.
\newblock \emph{IEEE transactions on neural networks and learning systems}, 35\penalty0 (7):\penalty0 8747--8761, 2022.

\bibitem[Chang et~al.(2017)Chang, Ullman, Torralba, and Tenenbaum]{chang2017compositional}
Michael Chang, Tomer Ullman, Antonio Torralba, and Joshua Tenenbaum.
\newblock A compositional object-based approach to learning physical dynamics.
\newblock In \emph{International Conference on Learning Representations}, 2017.

\bibitem[Chen et~al.(2018)Chen, Li, Grosse, and Duvenaud]{chen2018isolating}
Ricky~TQ Chen, Xuechen Li, Roger~B Grosse, and David~K Duvenaud.
\newblock Isolating sources of disentanglement in variational autoencoders.
\newblock \emph{Advances in neural information processing systems}, 31, 2018.

\bibitem[Esmaeili et~al.(2019)Esmaeili, Wu, Jain, Bozkurt, Siddharth, Paige, Brooks, Dy, and Meent]{esmaeili2019structured}
Babak Esmaeili, Hao Wu, Sarthak Jain, Alican Bozkurt, Narayanaswamy Siddharth, Brooks Paige, Dana~H Brooks, Jennifer Dy, and Jan-Willem Meent.
\newblock Structured disentangled representations.
\newblock In \emph{The 22nd International Conference on Artificial Intelligence and Statistics}, pages 2525--2534. PMLR, 2019.

\bibitem[Fu et~al.(2025)Fu, Huang, Li, Zheng, Ng, Hu, and Zhang]{fu2025identification}
Minghao Fu, Biwei Huang, Zijian Li, Yujia Zheng, Ignavier Ng, Yingyao Hu, and Kun Zhang.
\newblock Identification of nonparametric dynamic causal structure and latent process in climate system.
\newblock \emph{arXiv preprint arXiv:2501.12500}, 2025.

\bibitem[Gao et~al.(2025)Gao, Williams, and Kutz]{sindy-shred}
Mars~Liyao Gao, Jan~P Williams, and J~Nathan Kutz.
\newblock Sparse identification of nonlinear dynamics and koopman operators with shallow recurrent decoder networks.
\newblock \emph{arXiv preprint arXiv:2501.13329}, 2025.

\bibitem[Gerken et~al.(2023)Gerken, Aronsson, Carlsson, Linander, Ohlsson, Petersson, and Persson]{Gerken2023}
Jan~E. Gerken, Jimmy Aronsson, Oscar Carlsson, Hampus Linander, Fredrik Ohlsson, Christoffer Petersson, and Daniel Persson.
\newblock Geometric deep learning and equivariant neural networks.
\newblock \emph{Artificial Intelligence Review}, 56\penalty0 (12):\penalty0 14605--14662, Dec 2023.
\newblock ISSN 1573-7462.
\newblock \doi{10.1007/s10462-023-10502-7}.
\newblock URL \url{https://doi.org/10.1007/s10462-023-10502-7}.

\bibitem[Gresele et~al.(2021)Gresele, Von~K{\"u}gelgen, Stimper, Sch{\"o}lkopf, and Besserve]{gresele2021independent}
Luigi Gresele, Julius Von~K{\"u}gelgen, Vincent Stimper, Bernhard Sch{\"o}lkopf, and Michel Besserve.
\newblock Independent mechanism analysis, a new concept?
\newblock \emph{Advances in neural information processing systems}, 34:\penalty0 28233--28248, 2021.

\bibitem[Gu and Borovykh(2023)]{gu2023original}
Boyang Gu and Anastasia Borovykh.
\newblock On original and latent space connectivity in deep neural networks.
\newblock \emph{arXiv preprint arXiv:2311.06816}, 2023.

\bibitem[Hafner et~al.(2025)Hafner, Pasukonis, Ba, and Lillicrap]{Dreamer}
Danijar Hafner, Jurgis Pasukonis, Jimmy Ba, and Timothy Lillicrap.
\newblock Mastering diverse control tasks through world models.
\newblock \emph{Nature}, 640\penalty0 (8059):\penalty0 647--653, Apr 2025.
\newblock ISSN 1476-4687.
\newblock \doi{10.1038/s41586-025-08744-2}.
\newblock URL \url{https://doi.org/10.1038/s41586-025-08744-2}.

\bibitem[Higgins et~al.(2017)Higgins, Matthey, Pal, Burgess, Glorot, Botvinick, Mohamed, and Lerchner]{higgins2017betavae}
Irina Higgins, Loic Matthey, Arka Pal, Christopher Burgess, Xavier Glorot, Matthew Botvinick, Shakir Mohamed, and Alexander Lerchner.
\newblock beta-{VAE}: Learning basic visual concepts with a constrained variational framework.
\newblock In \emph{International Conference on Learning Representations}, 2017.
\newblock URL \url{https://openreview.net/forum?id=Sy2fzU9gl}.

\bibitem[Hu et~al.(2023)Hu, Russell, Yeo, Murez, Fedoseev, Kendall, Shotton, and Corrado]{hu2023gaia}
Anthony Hu, Lloyd Russell, Hudson Yeo, Zak Murez, George Fedoseev, Alex Kendall, Jamie Shotton, and Gianluca Corrado.
\newblock Gaia-1: A generative world model for autonomous driving.
\newblock \emph{arXiv e-prints}, pages arXiv--2309, 2023.

\bibitem[Hwang et~al.(2024)Hwang, Kwak, Choi, Zhang, and Lee]{hwang2024fine}
Inwoo Hwang, Yunhyeok Kwak, Suhyung Choi, Byoung-Tak Zhang, and Sanghack Lee.
\newblock Fine-grained causal dynamics learning with quantization for improving robustness in reinforcement learning.
\newblock \emph{Proceedings of Machine Learning Research}, 235:\penalty0 20842--20870, 2024.

\bibitem[Hyvarinen and Morioka(2016)]{hyvarinen2016unsupervised}
Aapo Hyvarinen and Hiroshi Morioka.
\newblock Unsupervised feature extraction by time-contrastive learning and nonlinear ica.
\newblock \emph{Advances in neural information processing systems}, 29, 2016.

\bibitem[Hyv{\"a}rinen and Pajunen(1999)]{hyvarinen1999nonlinear}
Aapo Hyv{\"a}rinen and Petteri Pajunen.
\newblock Nonlinear independent component analysis: Existence and uniqueness results.
\newblock \emph{Neural networks}, 12\penalty0 (3):\penalty0 429--439, 1999.

\bibitem[Hyvarinen et~al.(2019)Hyvarinen, Sasaki, and Turner]{hyvarinen2019nonlinear}
Aapo Hyvarinen, Hiroaki Sasaki, and Richard Turner.
\newblock Nonlinear ica using auxiliary variables and generalized contrastive learning.
\newblock In \emph{The 22nd international conference on artificial intelligence and statistics}, pages 859--868. PMLR, 2019.

\bibitem[Hyvärinen et~al.(2023)Hyvärinen, Khemakhem, and Morioka]{HYVARINEN2023100844}
Aapo Hyvärinen, Ilyes Khemakhem, and Hiroshi Morioka.
\newblock Nonlinear independent component analysis for principled disentanglement in unsupervised deep learning.
\newblock \emph{Patterns}, 4\penalty0 (10):\penalty0 100844, 2023.
\newblock ISSN 2666-3899.
\newblock \doi{https://doi.org/10.1016/j.patter.2023.100844}.
\newblock URL \url{https://www.sciencedirect.com/science/article/pii/S2666389923002234}.

\bibitem[Jiang et~al.(2024)Jiang, Deng, Singh, Lee, and Ahn]{jiang2024slot}
Jindong Jiang, Fei Deng, Gautam Singh, Minseung Lee, and Sungjin Ahn.
\newblock Slot state space models.
\newblock \emph{Advances in Neural Information Processing Systems}, 37:\penalty0 11602--11633, 2024.

\bibitem[Khemakhem et~al.(2020)Khemakhem, Kingma, Monti, and Hyvarinen]{khemakhem2020variational}
Ilyes Khemakhem, Diederik Kingma, Ricardo Monti, and Aapo Hyvarinen.
\newblock Variational autoencoders and nonlinear ica: A unifying framework.
\newblock In \emph{International conference on artificial intelligence and statistics}, pages 2207--2217. PMLR, 2020.

\bibitem[Kim and Mnih(2018)]{kim2018disentangling}
Hyunjik Kim and Andriy Mnih.
\newblock Disentangling by factorising.
\newblock In \emph{International conference on machine learning}, pages 2649--2658. PMLR, 2018.

\bibitem[Kingma and Welling(2013)]{vae}
Diederik~P Kingma and Max Welling.
\newblock Auto-encoding variational bayes.
\newblock \emph{arXiv preprint arXiv:1312.6114}, 2013.

\bibitem[Kipf et~al.(2018)Kipf, Fetaya, Wang, Welling, and Zemel]{NRI}
Thomas Kipf, Ethan Fetaya, Kuan-Chieh Wang, Max Welling, and Richard Zemel.
\newblock Neural relational inference for interacting systems.
\newblock In \emph{International conference on machine learning}, pages 2688--2697. Pmlr, 2018.

\bibitem[Lachapelle and Lacoste-Julien(2022)]{lachapelle2022partial}
Sebastien Lachapelle and Simon Lacoste-Julien.
\newblock Partial disentanglement via mechanism sparsity.
\newblock In \emph{UAI 2022 Workshop on Causal Representation Learning}, 2022.

\bibitem[Lachapelle et~al.(2022{\natexlab{a}})Lachapelle, Deleu, Mahajan, Mitliagkas, Bengio, Lacoste-Julien, and Bertrand]{lachapelle2022synergies}
S{\'e}bastien Lachapelle, Tristan Deleu, Divyat Mahajan, Ioannis Mitliagkas, Yoshua Bengio, Simon Lacoste-Julien, and Quentin Bertrand.
\newblock Synergies between disentanglement and sparsity: Generalization and identifiability in multi-task learning.
\newblock \emph{arXiv preprint arXiv:2211.14666}, 2022{\natexlab{a}}.

\bibitem[Lachapelle et~al.(2022{\natexlab{b}})Lachapelle, Rodriguez, Sharma, Everett, PRIOL, Lacoste, and Lacoste-Julien]{lachapelle22a}
Sebastien Lachapelle, Pau Rodriguez, Yash Sharma, Katie~E Everett, R{\'e}mi~LE PRIOL, Alexandre Lacoste, and Simon Lacoste-Julien.
\newblock Disentanglement via mechanism sparsity regularization: A new principle for nonlinear {ICA}.
\newblock In Bernhard Schölkopf, Caroline Uhler, and Kun Zhang, editors, \emph{Proceedings of the First Conference on Causal Learning and Reasoning}, volume 177 of \emph{Proceedings of Machine Learning Research}, pages 428--484. PMLR, 11--13 Apr 2022{\natexlab{b}}.
\newblock URL \url{https://proceedings.mlr.press/v177/lachapelle22a.html}.

\bibitem[Lachapelle et~al.(2023)Lachapelle, Mahajan, Mitliagkas, and Lacoste-Julien]{lachapelle2023additive}
S{\'e}bastien Lachapelle, Divyat Mahajan, Ioannis Mitliagkas, and Simon Lacoste-Julien.
\newblock Additive decoders for latent variables identification and cartesian-product extrapolation.
\newblock \emph{Advances in Neural Information Processing Systems}, 36:\penalty0 25112--25150, 2023.

\bibitem[Lachapelle et~al.(2024)Lachapelle, L{\'o}pez, Sharma, Everett, Priol, Lacoste, and Lacoste-Julien]{lachapelle2024nonparametric}
S{\'e}bastien Lachapelle, Pau~Rodr{\'\i}guez L{\'o}pez, Yash Sharma, Katie Everett, R{\'e}mi~Le Priol, Alexandre Lacoste, and Simon Lacoste-Julien.
\newblock Nonparametric partial disentanglement via mechanism sparsity: Sparse actions, interventions and sparse temporal dependencies.
\newblock \emph{arXiv preprint arXiv:2401.04890}, 2024.

\bibitem[Lam et~al.(2023)Lam, Sanchez-Gonzalez, Willson, Wirnsberger, Fortunato, Alet, Ravuri, Ewalds, Eaton-Rosen, Hu, Merose, Hoyer, Holland, Vinyals, Stott, Pritzel, Mohamed, and Battaglia]{graphcast}
Remi Lam, Alvaro Sanchez-Gonzalez, Matthew Willson, Peter Wirnsberger, Meire Fortunato, Ferran Alet, Suman Ravuri, Timo Ewalds, Zach Eaton-Rosen, Weihua Hu, Alexander Merose, Stephan Hoyer, George Holland, Oriol Vinyals, Jacklynn Stott, Alexander Pritzel, Shakir Mohamed, and Peter Battaglia.
\newblock Learning skillful medium-range global weather forecasting.
\newblock \emph{Science}, 382\penalty0 (6677):\penalty0 1416--1421, 2023.
\newblock \doi{10.1126/science.adi2336}.
\newblock URL \url{https://www.science.org/doi/abs/10.1126/science.adi2336}.

\bibitem[Lee(2012)]{intro_to_smooth_manifolds}
John~M. Lee.
\newblock \emph{Introduction to smooth manifolds}.
\newblock Graduate texts in mathematics, 218. Springer, New York, second edition. edition, 2012.
\newblock ISBN 9781489994752.

\bibitem[Lei et~al.(2023)Lei, Sch{\"o}lkopf, and Posner]{vcd}
Anson Lei, Bernhard Sch{\"o}lkopf, and Ingmar Posner.
\newblock Variational causal dynamics: Discovering modular world models from interventions.
\newblock \emph{Transactions on Machine Learning Research}, 2023.

\bibitem[Lei et~al.(2025)Lei, Posner, and Sch{\"o}lkopf]{spartan}
Anson Lei, Ingmar Posner, and Bernhard Sch{\"o}lkopf.
\newblock Spartan: A sparse transformer learning local causation.
\newblock \emph{Advances in Neural Information Processing Systems}, 2025.

\bibitem[Li et~al.(2024)Li, Pan, and Bareinboim]{li2024disentangled}
Adam Li, Yushu Pan, and Elias Bareinboim.
\newblock Disentangled representation learning in non-markovian causal systems.
\newblock In \emph{The Thirty-eighth Annual Conference on Neural Information Processing Systems}, 2024.
\newblock URL \url{https://openreview.net/forum?id=uLGyoBn7hm}.

\bibitem[Li et~al.(2019)Li, Hooi, and Lee]{liidentifying}
Shen Li, Bryan Hooi, and Gim~Hee Lee.
\newblock Identifying through flows for recovering latent representations.
\newblock In \emph{International Conference on Learning Representations}, 2019.

\bibitem[Li et~al.(2025{\natexlab{a}})Li, Zheng, Xiong, Wang, and Huang]{li2025understanding}
Yuke Li, Yujia Zheng, Tianyi Xiong, Zhenyi Wang, and Heng Huang.
\newblock Understanding catastrophic interference on the identifibility of latent representations.
\newblock \emph{arXiv preprint arXiv:2509.23027}, 2025{\natexlab{a}}.

\bibitem[Li et~al.(2020)Li, Torralba, Anandkumar, Fox, and Garg]{li2020causal}
Yunzhu Li, Antonio Torralba, Anima Anandkumar, Dieter Fox, and Animesh Garg.
\newblock Causal discovery in physical systems from videos.
\newblock \emph{Advances in Neural Information Processing Systems}, 33:\penalty0 9180--9192, 2020.

\bibitem[Li et~al.(2025{\natexlab{b}})Li, Fu, Huang, Shen, Cai, Sun, Chen, and Zhang]{li2025towards}
Zijian Li, Minghao Fu, Junxian Huang, Yifan Shen, Ruichu Cai, Yuewen Sun, Guangyi Chen, and Kun Zhang.
\newblock Towards identifiability of hierarchical temporal causal representation learning.
\newblock In \emph{The Thirty-ninth Annual Conference on Neural Information Processing Systems}, 2025{\natexlab{b}}.
\newblock URL \url{https://openreview.net/forum?id=0J0y9vSCWf}.

\bibitem[Li et~al.(2025{\natexlab{c}})Li, Shen, Zheng, Cai, Song, Gong, Chen, and Zhang]{li2025}
Zijian Li, Yifan Shen, Kaitao Zheng, Ruichu Cai, Xiangchen Song, Mingming Gong, Guangyi Chen, and Kun Zhang.
\newblock On the identification of temporal causal representation with instantaneous dependence.
\newblock International Conference on Learning Representations, ICLR, 2025{\natexlab{c}}.

\bibitem[Lippe et~al.(2022)Lippe, Magliacane, L{\"o}we, Asano, Cohen, and Gavves]{lippe2022citris}
Phillip Lippe, Sara Magliacane, Sindy L{\"o}we, Yuki~M Asano, Taco Cohen, and Stratis Gavves.
\newblock Citris: Causal identifiability from temporal intervened sequences.
\newblock In \emph{International Conference on Machine Learning}, pages 13557--13603. PMLR, 2022.

\bibitem[Locatello et~al.(2019)Locatello, Bauer, Lučić, Rätsch, Gelly, Schölkopf, and Bachem]{disentanglement-impossible-without-structure-locatello}
Francesco Locatello, Stefan Bauer, Mario Lučić, Gunnar Rätsch, Sylvain Gelly, Bernhard Schölkopf, and Olivier~Frederic Bachem.
\newblock Challenging common assumptions in the unsupervised learning of disentangled representations.
\newblock In \emph{International Conference on Machine Learning}, 2019.
\newblock URL \url{http://proceedings.mlr.press/v97/locatello19a.html}.
\newblock Best Paper Award.

\bibitem[L{\"o}we et~al.(2022)L{\"o}we, Madras, Zemel, and Welling]{ACD}
Sindy L{\"o}we, David Madras, Richard Zemel, and Max Welling.
\newblock Amortized causal discovery: Learning to infer causal graphs from time-series data.
\newblock In \emph{Conference on Causal Learning and Reasoning}, pages 509--525. PMLR, 2022.

\bibitem[Macfarlane and Bonnet(2025)]{lpn}
Matthew~V Macfarlane and Clement Bonnet.
\newblock Searching latent program spaces, 2025.
\newblock URL \url{https://arxiv.org/abs/2411.08706}.

\bibitem[Mathieu et~al.(2019)Mathieu, Rainforth, Siddharth, and Teh]{mathieu2019disentangling}
Emile Mathieu, Tom Rainforth, Nana Siddharth, and Yee~Whye Teh.
\newblock Disentangling disentanglement in variational autoencoders.
\newblock In \emph{International conference on machine learning}, pages 4402--4412. PMLR, 2019.

\bibitem[Pearl(2000)]{pearl}
Judea Pearl.
\newblock \emph{Causality : models, reasoning, and inference}.
\newblock Cambridge University Press, Cambridge, 2nd edition edition, 2000.
\newblock ISBN 1-139-63780-0.

\bibitem[Pervez et~al.(2024)Pervez, Locatello, and Gavves]{pervezmechanistic}
Adeel Pervez, Francesco Locatello, and Stratis Gavves.
\newblock Mechanistic neural networks for scientific machine learning.
\newblock In Ruslan Salakhutdinov, Zico Kolter, Katherine Heller, Adrian Weller, Nuria Oliver, Jonathan Scarlett, and Felix Berkenkamp, editors, \emph{Proceedings of the 41st International Conference on Machine Learning}, volume 235 of \emph{Proceedings of Machine Learning Research}, pages 40484--40501. PMLR, 21--27 Jul 2024.
\newblock URL \url{https://proceedings.mlr.press/v235/pervez24a.html}.

\bibitem[Pitis et~al.(2020)Pitis, Creager, and Garg]{Coda}
Silviu Pitis, Elliot Creager, and Animesh Garg.
\newblock Counterfactual data augmentation using locally factored dynamics.
\newblock \emph{Advances in Neural Information Processing Systems}, 33:\penalty0 3976--3990, 2020.

\bibitem[Raissi et~al.(2019)Raissi, Perdikaris, and Karniadakis]{PINNs}
M.~Raissi, P.~Perdikaris, and G.E. Karniadakis.
\newblock Physics-informed neural networks: A deep learning framework for solving forward and inverse problems involving nonlinear partial differential equations.
\newblock \emph{Journal of Computational Physics}, 378:\penalty0 686--707, 2019.
\newblock ISSN 0021-9991.
\newblock \doi{https://doi.org/10.1016/j.jcp.2018.10.045}.
\newblock URL \url{https://www.sciencedirect.com/science/article/pii/S0021999118307125}.

\bibitem[Rezende and Viola(2018)]{rezende2018taming}
Danilo~Jimenez Rezende and Fabio Viola.
\newblock Taming vaes.
\newblock \emph{arXiv preprint arXiv:1810.00597}, 2018.

\bibitem[Rudy et~al.(2017)Rudy, Brunton, Proctor, and Kutz]{sindy-pde}
Samuel~H. Rudy, Steven~L. Brunton, Joshua~L. Proctor, and J.~Nathan Kutz.
\newblock Data-driven discovery of partial differential equations.
\newblock \emph{Science Advances}, 3\penalty0 (4):\penalty0 e1602614, 2017.
\newblock \doi{10.1126/sciadv.1602614}.
\newblock URL \url{https://www.science.org/doi/abs/10.1126/sciadv.1602614}.

\bibitem[Sch\"{o}lkopf et~al.(2012)Sch\"{o}lkopf, Janzing, Peters, Sgouritsa, Zhang, and Mooij]{oncausal}
Bernhard Sch\"{o}lkopf, Dominik Janzing, Jonas Peters, Eleni Sgouritsa, Kun Zhang, and Joris Mooij.
\newblock On causal and anticausal learning.
\newblock In \emph{Proceedings of the 29th International Coference on International Conference on Machine Learning}, ICML'12, page 459–466, Madison, WI, USA, 2012. Omnipress.
\newblock ISBN 9781450312851.

\bibitem[Sch{\"o}lkopf et~al.(2021)Sch{\"o}lkopf, Locatello, Bauer, Ke, Kalchbrenner, Goyal, and Bengio]{scholkopf2021toward}
Bernhard Sch{\"o}lkopf, Francesco Locatello, Stefan Bauer, Nan~Rosemary Ke, Nal Kalchbrenner, Anirudh Goyal, and Yoshua Bengio.
\newblock Toward causal representation learning.
\newblock \emph{Proceedings of the IEEE}, 109\penalty0 (5):\penalty0 612--634, 2021.

\bibitem[Schug et~al.(2024)Schug, Kobayashi, Akram, Wolczyk, Proca, von Oswald, Pascanu, Sacramento, and Steger]{schug2024discovering}
Simon Schug, Seijin Kobayashi, Yassir Akram, Maciej Wolczyk, Alexandra~Maria Proca, Johannes von Oswald, Razvan Pascanu, Joao Sacramento, and Angelika Steger.
\newblock Discovering modular solutions that generalize compositionally.
\newblock In \emph{The Twelfth International Conference on Learning Representations}, 2024.
\newblock URL \url{https://openreview.net/forum?id=H98CVcX1eh}.

\bibitem[Seitzer et~al.(2021)Seitzer, Sch{\"o}lkopf, and Martius]{seitzer2021causal}
Maximilian Seitzer, Bernhard Sch{\"o}lkopf, and Georg Martius.
\newblock Causal influence detection for improving efficiency in reinforcement learning.
\newblock \emph{Advances in Neural Information Processing Systems}, 34:\penalty0 22905--22918, 2021.

\bibitem[Sohn et~al.(2015)Sohn, Lee, and Yan]{conditionalVAE}
Kihyuk Sohn, Honglak Lee, and Xinchen Yan.
\newblock Learning structured output representation using deep conditional generative models.
\newblock \emph{Advances in neural information processing systems}, 28, 2015.

\bibitem[Song et~al.(2023)Song, Yao, Fan, Dong, Chen, Niebles, Xing, and Zhang]{song2023}
Xiangchen Song, Weiran Yao, Yewen Fan, Xinshuai Dong, Guangyi Chen, Juan~Carlos Niebles, Eric Xing, and Kun Zhang.
\newblock Temporally disentangled representation learning under unknown nonstationarity.
\newblock In \emph{Thirty-seventh Conference on Neural Information Processing Systems}, 2023.
\newblock URL \url{https://openreview.net/forum?id=V8GHCGYLkf}.

\bibitem[Spirtes et~al.(2001)Spirtes, Glymour, and Scheines]{SpirtesBook}
Peter Spirtes, Clark Glymour, and Richard Scheines.
\newblock \emph{Causation, Prediction, and Search}.
\newblock The MIT Press, 01 2001.
\newblock ISBN 9780262284158.
\newblock \doi{10.7551/mitpress/1754.001.0001}.
\newblock URL \url{https://doi.org/10.7551/mitpress/1754.001.0001}.

\bibitem[Van~Steenkiste et~al.(2018)Van~Steenkiste, Greff, Chang, and Schmidhuber]{van2018relational}
Sjoerd Van~Steenkiste, Klaus Greff, Michael Chang, and J{\"u}rgen Schmidhuber.
\newblock Relational neural expectation maximization: Unsupervised discovery of objects and their interactions.
\newblock In \emph{6th International Conference on Learning Representations, ICLR 2018-Conference Track Proceedings}. International Conference on Learning Representations, ICLR, 2018.

\bibitem[Var{\i}c{\i} et~al.(2024)Var{\i}c{\i}, Acart{\"u}rk, Shanmugam, and Tajer]{varici2024linear}
Burak Var{\i}c{\i}, Emre Acart{\"u}rk, Karthikeyan Shanmugam, and Ali Tajer.
\newblock Linear causal representation learning from unknown multi-node interventions.
\newblock \emph{Advances in Neural Information Processing Systems}, 37:\penalty0 111614--111648, 2024.

\bibitem[von K{\"u}gelgen et~al.(2023)von K{\"u}gelgen, Besserve, Wendong, Gresele, Keki{\'c}, Bareinboim, Blei, and Sch{\"o}lkopf]{von2023nonparametric}
Julius von K{\"u}gelgen, Michel Besserve, Liang Wendong, Luigi Gresele, Armin Keki{\'c}, Elias Bareinboim, David Blei, and Bernhard Sch{\"o}lkopf.
\newblock Nonparametric identifiability of causal representations from unknown interventions.
\newblock \emph{Advances in Neural Information Processing Systems}, 36:\penalty0 48603--48638, 2023.

\bibitem[Wang et~al.(2024)Wang, Li, Zhang, and Wang]{MetaCARD}
Min Wang, Xin Li, Leiji Zhang, and Mingzhong Wang.
\newblock Metacard: Meta-reinforcement learning with task uncertainty feedback via decoupled context-aware reward and dynamics components.
\newblock \emph{Proceedings of the AAAI Conference on Artificial Intelligence}, 38\penalty0 (14):\penalty0 15555--15562, Mar. 2024.
\newblock \doi{10.1609/aaai.v38i14.29482}.
\newblock URL \url{https://ojs.aaai.org/index.php/AAAI/article/view/29482}.

\bibitem[Wen et~al.(2024)Wen, Tseng, Peng, and Zhang]{DreamToAdapt}
Lu~Wen, Eric~H. Tseng, Huei Peng, and Songan Zhang.
\newblock Dream to adapt: Meta reinforcement learning by latent context imagination and mdp imagination.
\newblock \emph{IEEE Robotics and Automation Letters}, 9\penalty0 (11):\penalty0 9701--9708, 2024.
\newblock \doi{10.1109/LRA.2024.3417114}.

\bibitem[Yao et~al.(2024{\natexlab{a}})Yao, Muller, and Locatello]{yao2024marrying}
Dingling Yao, Caroline Muller, and Francesco Locatello.
\newblock Marrying causal representation learning with dynamical systems for science.
\newblock \emph{Advances in Neural Information Processing Systems}, 37:\penalty0 71705--71736, 2024{\natexlab{a}}.

\bibitem[Yao et~al.(2024{\natexlab{b}})Yao, Rancati, Cadei, Fumero, and Locatello]{yao2024unifying}
Dingling Yao, Dario Rancati, Riccardo Cadei, Marco Fumero, and Francesco Locatello.
\newblock Unifying causal representation learning with the invariance principle.
\newblock In \emph{38th Conference on Neural Information Processing Systems}, volume~37, 2024{\natexlab{b}}.

\bibitem[Yao et~al.(2024{\natexlab{c}})Yao, Xu, Lachapelle, Magliacane, Taslakian, Martius, K{\"u}gelgen, and Locatello]{yao2024multi}
Dingling Yao, Danru Xu, S{\'e}bastien Lachapelle, Sara Magliacane, Perouz Taslakian, Georg Martius, Julius~von K{\"u}gelgen, and Francesco Locatello.
\newblock Multi-view causal representation learning with partial observability.
\newblock In \emph{12th International Conference on Learning Representations}, 2024{\natexlab{c}}.

\bibitem[Yao et~al.(2022)Yao, Chen, and Zhang]{yao2022temporally}
Weiran Yao, Guangyi Chen, and Kun Zhang.
\newblock Temporally disentangled representation learning.
\newblock \emph{Advances in Neural Information Processing Systems}, 35:\penalty0 26492--26503, 2022.

\bibitem[Zhai et~al.(2023)Zhai, Likhomanenko, Littwin, Busbridge, Ramapuram, Zhang, Gu, and Susskind]{zhai2023stabilizing}
Shuangfei Zhai, Tatiana Likhomanenko, Etai Littwin, Dan Busbridge, Jason Ramapuram, Yizhe Zhang, Jiatao Gu, and Joshua~M Susskind.
\newblock Stabilizing transformer training by preventing attention entropy collapse.
\newblock In \emph{International Conference on Machine Learning}, pages 40770--40803. PMLR, 2023.

\bibitem[Zhao et~al.(2025)Zhao, Li, Zhang, Wang, Faccio, Schmidhuber, and Yang]{zhao2025curiouscausalityseekingagentslearn}
Zhiyu Zhao, Haoxuan Li, Haifeng Zhang, Jun Wang, Francesco Faccio, Jürgen Schmidhuber, and Mengyue Yang.
\newblock Curious causality-seeking agents learn meta causal world, 2025.
\newblock URL \url{https://arxiv.org/abs/2506.23068}.

\end{thebibliography}
\appendix
\newpage

\section{Useful Definitions}
\label{appendix:useful_definitions}

\begin{cdefinition}[Identifiable Models \citep{khemakhem2020variational}]
    \label{definition_identifiable}
    Two systems $\mathcal{S}$, $\hat{\mathcal{S}}$, are said to be identifiable if and only if
    \begin{equation}
        \forall (\btheta, \hat{\btheta}, \bx_0 ): \quad 
        \btau = \hat{\btau} \implies \btheta = \hat{\btheta}.
    \end{equation}
\end{cdefinition}

\begin{cdefinition}[Identifiable up to Equivalence]
    \label{definition_identifiable_equivalence}
    Two systems $\mathcal{S}$, $\hat{\mathcal{S}}$, are said to be identifiable up to some equivalence operator if and only if
    \begin{equation}
        \forall (\btheta, \hat{\btheta}, \bx_0): \quad 
        \btau = \hat{\btau} \implies \btheta \sim_{equiv} \hat{\btheta}.
    \end{equation}
\end{cdefinition}

\begin{cdefinition}[Diffeomorphism]
\label{definition:diffeomorphism}
A map $\bf: \mathbb{R}^M \to \mathbb{R}^N$ between two differentiable manifolds $\mathcal{M}$ and $\mathcal{N}$ is called a \emph{diffeomorphism} if it satisfies the following equivalent conditions:
\begin{enumerate}
    \item $\bf$ is a one-to-one continuously differentiable mapping of $\mathcal{M}$ onto $\mathcal{N}$ with a continuously differentiable inverse mapping $\bf^{-1}$.
    \item $\bf$ is a $C^1$ bijection (see definition \ref{definition:ck_functions}), meaning that:
    \begin{itemize}
        \item $\bf$ is injective (one-to-one): $\bf(\bm{z}_1) = f(\bm{z}_2)$ implies $\bm{z}_1 = \bm{z}_2$ for all $\bm{z}_1, \bm{z}_2 \in \mathcal{M}$,
        \item $\bf$ is surjective: for every $\bm{x} \in \mathcal{N}$, there exists a $\bm{z} \in \mathcal{M}$ such that $\bm{f}(\bm{z}) = \bm{x}$.
    \end{itemize}
\end{enumerate}
\end{cdefinition}

\begin{cdefinition}[Surjectivity]
\label{definition:surjective}
    A function, $\bf$, with codomain and domain $\bm{y}\in \mathcal{Y}, \bm{x} \in \mathcal{X}$, is surjective if for every $\bm{y} \in \mathcal{Y}$ there exists $\bm{x} \in \mathcal{X}$ such that $\bm{y}=\bf(\bm{x})$. 
\end{cdefinition}

\begin{cdefinition}[$C^k$ Functions]
\label{definition:ck_functions}
A function $\bf: U \to \mathbb{R}^m$ defined on an open subset $U \subset \mathbb{R}^n$ is said to be of class $C^k$ if it is $k$-times continuously differentiable on $U$. In other words, the partial derivatives of $\bf$ up to order $k$ on $U$ are continuous functions.
\end{cdefinition}

\begin{cdefinition}[Equiv. up to Elementwise Perm. \& Diffeomorphism \citep{lachapelle2022synergies}]
    \label{definition_disentanglement_diffeomorphism}
    A representation, $\hat{\boldsymbol{\theta}}$, is said to be equivalent \textit{with respect to} a ground truth representation, $\boldsymbol{\theta}$ up to permutation and elementwise diffeomorphism (see definition \ref{definition:diffeomorphism}) if and only if there exists a permutation matrix, $P$, and a set of one-to-one diffeomorphisms, $c_i(\cdot)$, such that 
    \begin{equation}
        \boldsymbol{\theta} \sim_\text{diff} \hat{\boldsymbol{\theta}} \implies
        \forall (i, \hat{\boldsymbol{\theta}},\boldsymbol{\theta}): \theta_i = c_i([P\hat{\boldsymbol{\theta}}]_i).
    \end{equation} 
\end{cdefinition}

\begin{cdefinition}[Disentanglement]
    Two representations, $\boldsymbol{\theta}$, $\hat{\boldsymbol{\theta}}$, are provably disentangled if they are identifiable up to elementwise diffeomorphism and permutation equivalence. 
    \label{definition_disentanglement}
\end{cdefinition}

\begin{cdefinition}[Subsets of Graphs]
\begin{align}
    \mathcal{G}_a \subseteq \mathcal{G}_b \Leftrightarrow \big[A_{\mathcal{G}_b}\big]_{i,j}=0 \implies \big[A_{\mathcal{G}_a}\big]_{i,j}=0.
\end{align}
Graphs subsets are defined as follows. Every zero in the adjacency matrix of the superset graph must also be a zero in the adjacency matrix of the subset graph.
\label{definition:graph_subset}
\end{cdefinition}

\begin{cdefinition}[Graph Arithmetic]

    Graph arithmetic allows the product and sum of Jacobians (and consequently their respective graphs) to be expressed in the adjacency matrix domain. Graph operations are defined via Boolean matrix algebra. All summation operations are overloaded by element-wise OR operations, and all scalar multiplication with the AND operation. 
    \label{definition:graph_arithmetic}
\end{cdefinition}

\begin{cdefinition}[Right and Left Graph Consistency]
    Any graph $\mathcal{C}$ is said to be left graph consistent with another graph $\mathcal{G}$ if and only if the graph is invariant to adjacency matrix multiplication:
    \begin{equation}
        A_\mathcal{G}=A_\mathcal{C} A_\mathcal{G}.
    \end{equation}
    The same applies for right consistency:
    \begin{equation}
        A_\mathcal{G}= A_\mathcal{G}A_\mathcal{C}.
    \end{equation}
    \label{definition_graph_consistency}
\end{cdefinition}

\begin{cdefinition}[Real Subset]
    \label{definition:real_subset} The real subset relative to a directed acyclic graph, $\mathcal{G}$, with $n$ input nodes and $m$ output nodes
    is denoted as $\mathbb{R}_\mathcal{G}$. It is defined as the set of real matrices where all elements and only the elements with edges in $\mathcal{G}$ can take any non-zero value, and all elements with no edges in $\mathcal{G}$ are zero.
    \begin{align}
        M \in \mathbb{R}_\mathcal{G} \implies \Big( M_{i,j} \neq 0 \iff (i,j) \in \mathcal{G} \Big).
    \end{align}
    An alternative definition defines $\mathbb{R}_\mathcal{G}$ as the set of matrices for which there exists a $\bm{\lambda}$ which satisfies:
\begin{align}
    \mathbb{R}_\gG : \quad \{\ M \vert \exists \boldsymbol{\lambda} \in \mathbb{R}^{\Vert \mathcal{G} \Vert_0}  \text{ s.t. } \bm{M} = 
    \sum_{(i,j) \in \gG} \lambda_{i,j} \bm{e}_{i,j} \}.
    \label{equation:set_of_all_jacobians}
\end{align}
Where $\bm{e}_{i,j}$ is an indicator matrix.
\end{cdefinition}

\newpage
\section{Formal Proofs and Derivation of Graphical Criteria}
\label{appendix:proof_of_theorem}
\new{\localtableofcontents}
\vspace{0.5cm}

\new{%
This appendix provides the formal mathematical foundations and proofs for the identifiability of system parameters in latent dynamical systems. We recapitulate the main theorem and problem setting from the main text to keep the proof self-contained. The proof of the main theorem is found in section \ref{sec:proof:main}. Below, we provide the precise setting and the required lemmas. We consider a non-linear deterministic Markovian dynamical system defined as the tuple $\mathcal{S}=(\Theta,\mathcal{X}_0,\mathcal{X},\mathcal{F},\mathcal{G})$ with state variables $\bx_t$ and dynamical parameters $\btheta$, following the evolution:
\begin{align}
\label{equation:dynamical_systems_form}
\boldsymbol{x}_{t+1}=\bf(\bx_t, \btheta) = \bf_{\theta}(\boldsymbol{x}_t),
\quad \btheta \in \Theta,
\quad \boldsymbol{x} \in \mathcal{X} , \quad \boldsymbol{x}_0 \in \mathcal{X}_0 \subseteq \mathcal{X},
\\
\boldsymbol{\tau}=\bh(\bx_0, \btheta)=[\btx_0,\ft(\btx_0), (\ft \circ \ft)(\btx_0),...,\ft^T (\btx_0)], \quad
\bx_0, \btheta = \bh^{-1}(\btau),
\label{equation:dynamics_model_form}
\end{align}
Where $\bm{f}(\cdot)$ is the dynamics model which describes the system's evolution, $\btau$ is a trajectory generated by the autoregressive application of the dynamics model. The dynamics model is faithful to a directed acyclic graph (DAG) $\mathcal{G}$ in modelling how parameters influence the state variables. $\bm{h}(\cdot)$ is a trajectory \emph{decoder} which encapsulates the autoregressive process, and $\bm{h}^{-1}(\cdot)$ is the trajectory \emph{encoder} which recovers a latent parameter representation from a trajectory.
Our primary objective is to establish the conditions under which a learnt representation of dynamics $\bht$ (belonging to a learnt approximation $\hat{\mathcal{S}}$) is equivalent to the ground truth parameters $\btheta$ up to permutation and element-wise diffeomorphism.}
\setcounter{assumption}{0}
\new{%
For uniqueness and without loss of generality, we enforce assumptions \ref{assumption:existence} and \ref{assumption:obs_eq}, which in combination also require the bijectivity of $\bh(\cdot)$ and $\bh^{-1}(\cdot)$ with respect to their image. We further assume that the parameters sparsely affect the state distribution via a causal graph $\mathcal{G}$.
\begin{assumption}[Existence and Parameter Influence \citep{yao2024marrying}]
    \label{assumption:existence}
    For every $\bx_0 \in \mathcal{X}_0, \: \btheta\in\Theta$ there exists a unique trajectory, $\boldsymbol{\tau}$, over horizon T, satisfying $\boldsymbol{x}_{t+1}=
    \bf_\theta (\boldsymbol{x}_t)$. Equivalent to requiring $\bh (\bx_0, \bt)$ to be injective.
\end{assumption}
\begin{assumption}[Observational Equivalence]
\label{assumption:obs_eq}
Assume that the two systems, the ground-truth $\mathcal{S}$ and learnt approximation $\hat{\mathcal{S}}$, are \emph{observationally equivalent}, which posits that the modelled systems are equivalent up to invertible parameter representations. Observational equivalence requires, ${
    \forall \boldsymbol{x}_0, \boldsymbol{\theta}: \quad \exists \hat{\boldsymbol{\theta}} \text{ s.t. } \bh(\boldsymbol{x}_0, \boldsymbol{\theta}) = \hat{\bh}(\boldsymbol{x}_0,\hat{\boldsymbol{\theta}})}$ (Equivalently,  $\hat{\bh} (\cdot)$ is surjective).
\end{assumption}
\begin{assumption}[The Transition Model is Markov to a DAG \citep{lachapelle2024nonparametric}]
\label{assumption:markov}
The transition models of $\mathcal{S}$, and $\hat{\mathcal{S}}$, are causally and \emph{faithfully} \new{\citep{pearl,SpirtesBook}} related to directed acyclic graphs (DAG), $\mathcal{G}$, and $\hat{\mathcal{G}}$, which fully define the dependencies between the \textbf{output elements and latent parameters}. The edges of the graph are defined through the non-zero entries in the Jacobian of the transition model, requiring that the model and its first derivative are continuous.
Simply, the following causal equivalence is defined, where $\bt_{\text{Pa}_i^{\mathcal{G}}}$ denotes the subset of parameters that are parents of output $i$ in graph $\mathcal{G}$:
$%
    f_i (\boldsymbol{x}_t,\boldsymbol{\theta})
    \equiv
    f_i(\boldsymbol{x}_t,\boldsymbol{\theta}_{\text{Pa}_i^{\mathcal{G}}})
$.
\begin{equation}
(i,j) \notin \mathcal{G} \implies
\frac{\partial (\bf(\boldsymbol{x}_t,\boldsymbol{\theta}))_i}{\partial \theta_j} = 0 \quad \forall \boldsymbol{x}_t, \mathbf{\theta}.
\end{equation}
\end{assumption}
}
These assumptions establish a framework in which multiple parametrisations can yield identical observations, allowing the study of the conditions under which additional assumptions lead to disentanglement.
We begin by introducing the concept of an entanglement mapping and an entanglement graph as necessary prerequisites to understand the theory. Subsequently, we build intermediate results in forms of lemmas. Section \ref{sec:proof:main} combines the lemmas into the proof of Theorem 1. Finally, section \ref{sec:proof:local_graphs} extends the main result to state-dependent local causal graphs and demonstrates that state-dependent sparsity improves identifiability guarantees.

\subsection{Entanglement Map and Entanglement Graph}

The trajectory encoder, $\btheta = \bh^{-1}(\btau)$, allows us to define an \textit{entanglement map}, $\bv(\cdot)$, which relates the latent representations learnt by $\hat{\mathcal{S}}$, with the parameters in $\hat{\mathcal{S}}$. Given that the two systems are modelling the same underlying system, it follows that for each parameter of the ground truth system, $\btheta$, there must exist a learnt representation $\bht$ such that the modelled process is equivalent $\bf(\boldsymbol{x}_0,\boldsymbol{{\theta}})=\hat{\bf}(\boldsymbol{x}_0,\bv(\btheta,\boldsymbol{x}_0))$. The equivalence of the system is formalised in the next section as assumption \ref{assumption:obs_eq}. Observational equivalence requires the injectivity of the \emph{entanglement map}.
\begin{align}
    \hat{\boldsymbol{\theta}}=\bv(\boldsymbol{\theta},\boldsymbol{x}_0)= (\hat{\bh}^{-1} \circ \bh)
(\boldsymbol{\theta},\boldsymbol{x}_0).
\end{align}

The dependence of the entanglement mapping upon a trajectory's starting state, $\bm{x}_0$, is a consequence of the application to dynamical domains. This dependence allows the dynamical representation to become entangled with the state representation.
The entanglement map is also associated with an \textit{entanglement graph}, $\mathcal{V}$, which quantifies the dependencies between the two representations using the same fundamental definition used for the system graphs $\mathcal{G}$, and $\hat{\mathcal{G}}$. The \emph{entanglement graph} denotes which system parameters, $\btheta$, are required to map to the learnt representation. Equivalently, $\bht_i = v_i( \btheta_{\text{Pa}_i^{\mathcal{V}}}, \bx_0)$ where $\btheta_{\text{Pa}_i^{\mathcal{V}}}$ is the subset of parameters that are parents of node $i$ in the \emph{entanglement graph}.
\begin{equation}
(i,j) \notin \mathcal{V} \implies
\frac{\partial [\bv(\btheta,\bx_0)]_i}{\partial \theta_j} = 0 \quad \forall \boldsymbol{\theta}, \bx_0.
\end{equation}
Given, $\mathbf{h}(\cdot)$ is bijective (and therefore so is its inverse), the entanglement map must also be bijective, as the composition of two bijective functions is also bijective. Consequently, $\btheta$ and $\hat{\btheta}$ are reparametrisations of the same underlying manifold. %

\subsection{Disentanglement and Entanglement Graphs
}
The entanglement graph provides a mathematical description of the relationship between a known ground-truth parametrisation and a representation learnt by a neural network. 
If we show that the adjacency matrix of the entanglement graph must be a permutation of the identity matrix, then we have also equivalently demonstrated that the model has disentangled up to elementwise diffeomorphism.
To illustrate the point, consider $\mathcal{V}=\mathbf{I}$ and a two-dimensional parametrisation. This entanglement graph forces the entanglement mapping dependencies to be element-wise, since each output latent can depend on at most one input latent. Consequently, $\hat{\theta}_0 = v_0(\theta_0),\hat{\theta}_1 = v_1(\theta_1)$, or some permutation thereof.
Note that the entanglement mapping $\bm{v}(\cdot)$ can be highly non-linear, and constraining the entanglement graph does not allow us to make any assumptions about the functional form of the mapping, only that it is smooth, differentiable, and invertible, or in other words, a diffeomorphism.

Hence, we have established that showing an entanglement graph is either the identity matrix or a permutation of the identity matrix is sufficient to establish that a learnt representation is disentangled up to an element-wise diffeomorphism (see definition \ref{definition:diffeomorphism}) and permutation.

\new{
Without further assumptions upon the systems $\mathcal{S}$ and $\hat{\mathcal{S}}$, the entanglement graph connecting them is unconstrained. The following section introduces the necessary assumptions and the theorem under which the entanglement graph is constrained to be a permutation of the identity.
}

\subsection{
Assumptions
}
\setcounter{assumption}{3}
We formalize three core assumptions which in conjunction sufficiently constrain the entanglement graph and form the primary identifiability result (restated below). These assumptions are applied on top of the system preliminaries (assumptions \ref{assumption:existence}, \ref{assumption:obs_eq}, and \ref{assumption:markov}). Assumption \ref{assumption:path_connected} ensures that the representation of dynamics does not entangle with the state representation, by precluding possible sources of entanglement from existing\footnote{ For an intuition on path connectivity, refer to Appendix \ref{sec:appendix:example_path_connectedness}}. Assumption \ref{assumption:sparsity_reg} restricts the causal complexity of the learnt representation, thereby reducing the admissible representation space. Assumption \ref{assumption:sufficient_variability} enforces information diversity: if two parameters were to always affect the system in the exact same way, then no amount of data can tell them apart, preventing degenerate cases where the number of parameters is not representative of the true dimension of variation of the system.

\begin{assumption}[Path Connectedness]
    \label{assumption:path_connected}
We assume the state variable domain $\mathcal{X}$ is path-connected, such that there exists no disjoint decomposition of the state space separated by boundaries that are fundamentally unreachable or untraversable by the system's trajectories.
\begin{align}
\forall \mathcal{X}_A, \mathcal{X}_B \subset \mathcal{X} \text{ s.t. } \mathcal{X}_A \cup \mathcal{X}_B = \mathcal{X}, \mathcal{X}_A \cap \mathcal{X}_B = \emptyset, \exists (\bm{x}_0, \bm{\theta}, t) \text{ s.t. } \bm{x}_0 \in \mathcal{X}_A \land \bm{x}_t \in \mathcal{X}_B
\end{align}
\end{assumption}
\begin{assumption}[Sufficient Sparsity]
    \label{assumption:sparsity_reg}
    We assume that $\hat{\mathcal{S}}$ learns a representation and a system model that is at most as sparse as the underlying system, which in practise is enforced by assuming $\hat{\mathcal{S}}$    
    is a minimiser of $\Vert \hat{\mathcal{G}}\Vert_0$. This implies that $\Vert \hat{\mathcal{G}}\Vert_0 \leq \Vert \mathcal{G}\Vert_0$.
\end{assumption}
\begin{assumption}[Sufficient Variability of the Jacobian \citep{lachapelle22a}]
    \label{assumption:sufficient_variability}
    There must exist at least one $\btheta \in \Theta$ such that there exists a set of states $\{ \bx_p\}_{p=1}^{\Vert \mathcal{G} \Vert_0}, \bx \in \mathcal{X}$  such that 
    \begin{align}
        \text{span}\left\{ 
        \nabla_{\btheta} \bf ( \bx_p, \btheta )
        \right\}_{p=1}^{\Vert \mathcal{G} \Vert_0}
        &=
        \mathbb{R}^{\mathcal{G}}
    \end{align}
The real graph subset is defined in definition \ref{definition:real_subset}. This common formulation \citep{lachapelle22a,hyvarinen2019nonlinear} requires that the effect of each causal edge in the system graph be independently observable from the state space.
\end{assumption}

\subsection{
Definition and Proofs of Supporting Lemmas
}
\new{
The proof of Theorem \ref{theorem:1} is built from several smaller supporting lemmas defined here. The final proof combining these lemmas is stated in section \ref{sec:proof:main}. Lemma \ref{lemma:entanglement_map_independence} combines assumption \ref{assumption:path_connected}
}
\begin{clemma}[Entanglement Map Independence]
If a system satisfies path connectivity (assumption \ref{assumption:path_connected}), the entanglement mapping is independent of the starting state: $\bv(\boldsymbol{\theta},\boldsymbol{x}_0)=\bv(\boldsymbol{\theta})$.
\label{lemma:entanglement_map_independence}
\end{clemma}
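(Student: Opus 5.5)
The plan is to show that $\partial \bv(\btheta,\bx_0)/\partial \bx_0 = 0$ on all of $\mathcal{X}_0$. Equivalently, for fixed $\btheta$ the map $\bx_0 \mapsto \bv(\btheta,\bx_0)$ should be constant.

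\emph{Step 1: local constancy along trajectories.} Fix $\btheta$ and a starting state $\bx_0$. Observational equivalence (assumption \ref{assumption:obs_eq}) gives $\bh(\bx_0,\btheta)=\hat{\bh}(\bx_0,\bv(\btheta,\bx_0))$, so the two systems produce identical trajectories $\bx_0,\bx_1,\dots,\bx_T$. Parameters are constant along a trajectory under both $\bf$ and $\hat{\bf}$. Hence restarting the system from any intermediate state $\bx_t$ with the same $\btheta$ yields the shifted true trajectory. The learnt system started at $\bx_t$ with $\bv(\btheta,\bx_0)$ also reproduces this shifted trajectory, at least over the overlapping horizon. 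Here I would use injectivity of $\hat{\bh}$ (assumptions \ref{assumption:existence} and \ref{assumption:obs_eq} together make $\hat{\bh}$ bijective onto its image), so that the learnt parameter explaining that tail is unique. This should give $\bv(\btheta,\bx_t)=\bv(\btheta,\bx_0)$ for every state visited by a trajectory from $\bx_0$.

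\emph{Step 2: from trajectory-invariance to global constancy.} For fixed $\btheta$, declare $\bx\sim\bx'$ when $\bv(\btheta,\bx)=\bv(\btheta,\bx')$. The idea is to combine Step 1 with assumption \ref{assumption:path_connected}, arguing by contradiction. Suppose $\bv(\btheta,\cdot)$ takes at least two distinct values. Choose a value $c$ and set $\mathcal{X}_A=\{\bx:\bv(\btheta,\bx)=c\}$ and $\mathcal{X}_B=\mathcal{X}\setminus\mathcal{X}_A$. These form a disjoint decomposition of $\mathcal{X}$, so assumption \ref{assumption:path_connected} supplies a trajectory starting in $\mathcal{X}_A$ and reaching $\mathcal{X}_B$. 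By Step 1, $\bv$ is constant along that trajectory, which is a contradiction.

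The subtlety is that assumption \ref{assumption:path_connected} quantifies over some $(\bx_0,\btheta,t)$, possibly with a different $\btheta$ than the one fixed. I expect this to be the main obstacle. My first attempt would be to apply the partition argument jointly in $(\btheta,\bx)$: take the level-set partition of $\bv$ per-$\btheta$, and use that a trajectory with parameter $\btheta'$ crossing between the pieces contradicts Step 1 applied at $\btheta'$. To make this legitimate, I would choose the decomposition adapted to whichever $\btheta'$ the assumption returns, iterating over all $\btheta$. Failing that, I would read assumption \ref{assumption:path_connected} as holding for each fixed $\btheta$, which the intended intuition suggests. Continuity of $\bv$ (it is a composition of $C^1$ maps) can also be used to upgrade "constant on a dense or reachable set" to constant everywhere.

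\emph{Step 3: conclusion.} With $\bv(\btheta,\cdot)$ constant on $\mathcal{X}_0$ for every $\btheta$, define $\bv(\btheta):=\bv(\btheta,\bx_0)$ for any $\bx_0$. This is the claimed independence. It also gives $\partial\bv/\partial\bx_0=0$, so the entanglement graph $\mathcal{V}$ only involves parameter-to-parameter edges, as needed downstream.
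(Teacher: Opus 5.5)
\textbf{Gap in Step 2.} The problem is where you suspected it, and neither of your first two remedies closes it under Assumption~\ref{assumption:path_connected} as written. Granting Step~1, you only know that the level sets of $\bv(\btheta,\cdot)$ are forward-invariant under the single map $\bf(\cdot,\btheta)$. To contradict Assumption~\ref{assumption:path_connected}, you need a proper subset of $\mathcal{X}$ that no trajectory leaves for \emph{any} parameter. A crossing trajectory with parameter $\btheta'\neq\btheta$ only gives $\bv(\btheta',\bx_0)=\bv(\btheta',\bx_t)$, and that is compatible with $\bv(\btheta,\bx_0)\neq\bv(\btheta,\bx_t)$. Choosing the decomposition ``adapted to whichever $\btheta'$ the assumption returns'' is circular, because the decomposition is the input to the assumption. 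The continuity upgrade also has nothing to act on, since you never get constancy of $\bv(\btheta,\cdot)$ on any dense set. The paper gets past this point differently. After the single-trajectory step, it asserts $\bv(\bx_a,\btheta)=\bv(\bx_c,\btheta)$ for \emph{every} $\btheta$ and all states of $\mathcal{X}_{\btau}(\bx_a,\btheta_a)$. It then glues overlapping trajectory sets generated by different parameters into $\mathcal{X}_P$, which it identifies with $\mathcal{X}$.

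\textbf{The paper's bridge does not rescue this.} That extension to all $\btheta$ does not follow from the rollout argument, and it can fail. Suppose each $\bf(\cdot,\btheta)$ conserves a $\btheta$-dependent quantity $E_\btheta$, e.g.\ the exact time-$\Delta t$ map of a harmonic oscillator with stiffness $\btheta$ on the punctured plane. Take $\bv(\btheta,\bx_0)=\btheta+\epsilon\tanh(E_\btheta(\bx_0))$ with $\epsilon>0$ small, and $\hat{\bf}(\bx,\bht)=\bf(\bx,\kappa(\bx,\bht))$, where $\kappa(\bx,\cdot)$ inverts $\btheta\mapsto\btheta+\epsilon\tanh(E_\btheta(\bx))$.
\begin{itemize}
\item This learnt model is Markov and observationally equivalent.
\item $\bv$ is constant along every $\btheta$-trajectory, so your Step~1 holds, yet $\bv$ depends on $\bx_0$.
\item For an interval of stiffnesses, the maps generate $SL(2,\mathbb{R})$ as a semigroup, so no proper subset is invariant under all of them; Assumption~\ref{assumption:path_connected} therefore holds.
\end{itemize}
So your fallback, path connectedness for each fixed $\btheta$, is not a convenience but what the conclusion actually requires. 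State it as a strengthened hypothesis rather than as a reading of Assumption~\ref{assumption:path_connected}. With it, your level-set contradiction is a cleaner substitute for the paper's $\mathcal{X}_P$ gluing.

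\textbf{Step 1 also needs more than you cite.} Injectivity of $\hat{\bh}$ is not enough. Observational equivalence constrains $\hat{\bf}(\cdot,\bv(\btheta,\bx_0))$ only at $\bx_0,\dots,\bx_{T-1}$. So $\hat{\bh}(\bx_t,\bv(\btheta,\bx_0))$ and $\bh(\bx_t,\btheta)$ need only agree on the overlap $\bx_t,\dots,\bx_T$, and injectivity says nothing about two latents whose full rollouts differ. You need identifiability of $\bht$ from the shorter window, e.g.\ injectivity of $\hat{\bf}(\bx_s,\cdot)$ at some visited $\bx_s$ with $1\le s\le T-1$, and then chain one step at a time. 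The paper does not supply this either; it simply appeals to the Markov property.
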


\begin{proof}
The trajectory decoder function is composed of one-step Markovian predictions from a dynamical model, $\bf(\cdot)$.

\begin{align}
\boldsymbol{\tau}=\bh(\bx_0, \btheta)=[\btx_0,\ft(\btx_0), (\ft \circ \ft)(\btx_0),...,\ft^T (\btx_0)].
\end{align}

The proof follows two steps. First, we show that the entanglement mapping must be invariant to states that can originate from the same trajectory, and second, we show that it must be invariant between any states that could have originated from a chain of trajectories. 
For a trajectory $\btau = (\bx_0, \bx_1, \cdots, \bx_T )$ generated by a fixed ground truth parameter $\btheta$. Under autoregressive rollouts, the model reuses the same inferred latent parameter $\bht$ to decode the dynamics at every step along the trajectory. Hence, for the rollout to remain consistent with the underlying system (as is guaranteed by the uniqueness of each trajectory in assumption \ref{assumption:existence}), the entanglement mapping, $\bv (\cdot)$, cannot depend on which state along the same trajectory is used for inference. If the rollout were to be reinitialised from the middle of some trajectory, $\tilde{\bx}_0=\bx_t, \: \tilde{\btau}=\bh(\tilde{\bx}_0, \btheta) $, while keeping $\btheta$ unchanged, the inferred latent parameter must agree with the original $\hat{\bh}^{-1}(\btau)= \hat{\bh}^{-1}(\tilde{\btau})$. Otherwise, the model would assign different latent parameters to the same physical system depending on the temporal offset, thereby violating the Markovian assumption of the transition model.

Consequently, this implies $\bv(\btheta, \bx_0)=\bv(\btheta, \bx_1)=v(\btheta, \bx_i) \forall \btheta,\bx_i \in \btau$.
The corollary of this is that the entanglement mapping must be invariant to states within the same trajectory; therefore, defining a trajectory subset of the state space:

    \begin{equation}
        \mathcal{X}_{\btau}(\boldsymbol{x}_0,\boldsymbol{\theta}) = \left\{\tilde{\boldsymbol{x}}\mid \exists \boldsymbol{x}' \in \mathcal{X}_{\btau} \text{ s.t. } \tilde{\boldsymbol{x}}=\bf(\boldsymbol{x}', \boldsymbol{\theta}) \text{ and } \boldsymbol{x}_0 \in \mathcal{X}_{\btau} \right\}.
    \end{equation}

    Set $\mathcal{X}_{\btau}$ represents the set of all positions that could have led to or have been produced from $\bm{x}_0$ given a particular parametrisation of a system.  In the geometric deep learning literature, this concept is referred to as manifold invariance \citep{bronstein2021geometric, Gerken2023}. Furthermore, as the entanglement map is a function of state, then instantaneously evaluated at a specific state, $\bm{x}'$, the representation becomes independent of state. The invariance manifold can consequently be extended by considering local stationarity of the representation at fixed states. \new{For any two trajectories, $\btau_a= \bh(\bx_a, \btheta_a)$, $\btau_b= \bh(\bx_b,\btheta_b)$, such that at some point in their trajectories, both contain a state $\bx_c \in \btau_a \cap \btau_b$.
    }

From the definition of $\mathcal{X}_{\btau}$ and the consequential state invariance of the entanglement mapping, 
    \begin{align}
        \bv(\bx_a,\btheta) &= \bv(\bx_c,\btheta) \quad\forall \btheta, \bx_a,\bx_c \in \mathcal{X}_{\btau} (\bx_a, \btheta_a) 
        \\
        \bv(\bx_b,\btheta) &= \bv(\bx_c,\btheta) \quad\forall \btheta, \bx_b,\bx_c \in \mathcal{X}_{\btau} (\bx_b, \btheta_b)
        \\
        \bx_c \in \btau_a \cap \btau_b &\implies \mathcal{X}_{\btau} (\bx_a, \btheta_a) \cap \mathcal{X}_{\btau} (\bx_b, \btheta_b) \neq \emptyset
    \end{align}
Through the transitive property, whenever two trajectory manifolds overlap in state variable space, they must share the same inferred representation, implying that $\bv$ is invariant on $\mathcal{X}_{\btau_a} \cup \mathcal{X}_{\btau_b}$ (and, by transitive closure, on any union of manifolds connected through overlaps).

    \begin{equation}
        \mathcal{X}_P(\boldsymbol{x}_0) = \left\{\tilde{\boldsymbol{x}}\mid \exists \boldsymbol{x}' \in \mathcal{X}_{\btau}, \boldsymbol{\theta} \text{ s.t. } \tilde{\boldsymbol{x}}=\bf(\boldsymbol{x}', \boldsymbol{\theta}) \text{ and } \boldsymbol{x}_0 \in \mathcal{X}_{\btau} \right\}.
    \end{equation}

    The set $\mathcal{X}_P$ defines a partition of the state space. The set $\mathcal{X}_P$ lifts $\mathcal{X}_{\btau}$; the latter is defined as the set of states that are trajectory-connected to a specific starting point and dynamical parameter. The former extends $\mathcal{X}_{\btau}$ to include every dynamical system that is either dynamically or trajectory-connected to a point through any arrangement.
    A corollary of this is that the set $\mathcal{X}_P \subseteq \mathcal{X}$ defines a partition of the state space for which $\bv(\boldsymbol{\theta},\boldsymbol{x}')=\bv(\boldsymbol{\theta},\tilde{\boldsymbol{x}}), \forall \boldsymbol{x}',\tilde{\boldsymbol{x}}\in \mathcal{X}_P$.

     If the latent parameter is surjective (definition \ref{definition:surjective}) with respect to the dynamical model, then trivially $\mathcal{X}_P$ must span the whole domain $\mathcal{X}$ and global manifold invariance follows. If $\mathcal{X}=\mathcal{X}_P$, then the entanglement mapping must be invariant to the starting state, $\bv(\boldsymbol{x},\boldsymbol{\theta})=\bv(\boldsymbol{\theta})$. 
    \label{proof_entanglement_map_independence}
\end{proof}

Topological assumptions upon the data-generating distribution in representation learning are commonplace. Literature usually enforces stronger assumptions about the bijectivity of the latent space rather than a more relaxed connectivity requirement \citep{lachapelle2024nonparametric, fu2025identification}, although \citet{schug2024discovering} implements a discrete analogue to our path connectivity. Similar concepts and methods have been used more generally to study emergent structures in latent representational space \citep{gu2023original,li2025understanding}. \citet{lachapelle2023additive} applies a similar path-connectedness assumption on the mapping between observational and representational space without the constraint of connectedness through a dynamical model. 

\new{The independence of the entanglement mapping allows proof techniques from \citet{lachapelle2024nonparametric} to be imported and applied to analyse the constraints upon the entanglement mapping.}

\begin{clemma}[Causal Inclusion] Under assumption \ref{assumption:sufficient_variability} (Sufficient Variability), there must exist a permutation matrix, $\bm{P}$, such that every edge contained in the ground truth graph, $\mathcal{G}$, must also be contained within the permuted version of the learnt graph $\hat{\mathcal{G}}$.\footnote{Note all products and sums between adjacency matrices follow graph arithmetic (definition \ref{definition:graph_arithmetic}).}
\begin{align}
    \exists \bm{P} \text{ s.t. } \quad \bm{A}_{\gG} \subseteq \bm{A}_{\gGh} \bm{P}.
\end{align}
\label{lemma:inclusion}
\end{clemma}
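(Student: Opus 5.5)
Following \citet{lachapelle2022synergies,lachapelle2024nonparametric}, I would differentiate the observational-equivalence identity and read the lemma off a rank argument on Jacobians. By Lemma \ref{lemma:entanglement_map_independence} the entanglement map is state-independent, so observational equivalence (assumption \ref{assumption:obs_eq}) gives the one-step identity $\bf(\bx,\btheta)=\hat{\bf}(\bx,\bv(\btheta))$ for all $\bx\in\mathcal{X}$ and $\btheta\in\Theta$. Differentiating in $\btheta$ yields
\begin{equation}
\nabla_{\btheta}\bf(\bx,\btheta)=\nabla_{\bht}\hat{\bf}(\bx,\bv(\btheta))\,\bm{V}(\btheta),\qquad \bm{V}(\btheta)=\nabla_{\btheta}\bv(\btheta).
\end{equation}
The matrix $\bm{V}(\btheta)$ is invertible because $\bv$ is a diffeomorphism, being a composition of bijections as established above. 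By assumption \ref{assumption:markov}, the left factor $\nabla_{\bht}\hat{\bf}$ lies in $\mathbb{R}_{\hat{\gG}}$ (with support contained in $\hat{\gG}$) for every $\bx$.

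Next I fix the $\btheta$ and the states $\{\bx_p\}$ supplied by assumption \ref{assumption:sufficient_variability}. Write $\hat{\bm{J}}_p=\nabla_{\bht}\hat{\bf}(\bx_p,\bv(\btheta))$. Every $\hat{\bm{J}}_p\bm{V}$ has support in the set $S=\{(i,k):\exists j,\ (i,j)\in\hat{\gG},\ V_{jk}\neq 0\}$, which in graph arithmetic is $\bm{A}_{\hat{\gG}}\bm{A}_{\mathcal{V}}$. By assumption \ref{assumption:sufficient_variability}, the matrices $\nabla_{\btheta}\bf(\bx_p,\btheta)=\hat{\bm{J}}_p\bm{V}$ span $\mathbb{R}_{\gG}$. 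Therefore each indicator $\bm{e}_{i,k}$ with $(i,k)\in\gG$ is a linear combination of these products, and its support lies in $S$. This gives $\bm{A}_{\gG}\subseteq \bm{A}_{\hat{\gG}}\bm{A}_{\mathcal{V}}$. A sharper consequence also holds: each $\bm{e}_{i,k}$ can be written as $\bm{M}\bm{V}$ for some $\bm{M}$ in the span of the $\hat{\bm{J}}_p$, so $\bm{M}=\bm{e}_{i,k}\bm{V}^{-1}$ is supported in $\hat{\gG}$. Row $i$ of $\bm{V}^{-1}$ restricted to row $k$ therefore forces $(i,j)\in\hat{\gG}$ for every $j$ with $(\bm{V}^{-1})_{kj}\neq 0$.

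The permutation comes from invertibility of $\bm{V}^{-1}$. A nonsingular square matrix has a nonzero Leibniz term, so there exists a permutation $\sigma$ with $(\bm{V}^{-1})_{k\sigma(k)}\neq 0$ for all $k$. Combining this with the previous step, every edge $(i,k)\in\gG$ implies $(i,\sigma(k))\in\hat{\gG}$. Taking $\bm{P}$ to be the permutation matrix with $\bm{P}_{\sigma(k),k}=1$, this reads $[\bm{A}_{\hat{\gG}}\bm{P}]_{i,k}=[\bm{A}_{\hat{\gG}}]_{i,\sigma(k)}=1$. That is precisely $\bm{A}_{\gG}\subseteq\bm{A}_{\hat{\gG}}\bm{P}$ in the sense of definition \ref{definition:graph_subset}.

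The main obstacle is the second step, namely turning the spanning condition into a support statement for $\bm{M}=\bm{e}_{i,k}\bm{V}^{-1}$. The concern is that the span is taken at a single fixed $\btheta$, so $\bm{V}$ is a single fixed matrix. Because of that, linearity lets $\bm{V}$ factor out of the combination, and the argument goes through. I would state this factoring explicitly. I would also note that we only need $\bm{M}$ to lie in the span of the $\hat{\bm{J}}_p$, not to be a Jacobian itself, and that this span is contained in $\mathbb{R}_{\hat{\gG}}$ (as a support-bounded subspace).
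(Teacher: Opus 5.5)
Your proof is correct and follows the same skeleton as the paper's. Both arguments differentiate the observational-equivalence identity, choose the permutation $\bm{P}$ that makes the diagonal of $\bm{V}^{-1}\bm{P}=\nabla_{\bht}\bv^{-1}\bm{P}$ nonzero (your Leibniz permutation $\sigma$ is precisely the one giving the paper's $\bC$ a nonzero diagonal), and use sufficient variability to rule out cancellations. The difference is one of rigour: the paper only asserts that sufficient variability stops $\sum_{k\neq j}[\nabla_{\bt}\bf]_{i,k}[\bC]_{k,j}$ from cancelling $[\nabla_{\bt}\bf]_{i,j}[\bC]_{j,j}$, whereas you prove it by isolating $\bm{e}_{i,k}$ in the span at one fixed $\btheta$ and factoring out the single matrix $\bm{V}$, which is state-independent by Lemma \ref{lemma:entanglement_map_independence}. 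As a bonus, this gives at that $\btheta$ the row-wise inclusion ``$(i,k)\in\gG$ and $(\bm{V}^{-1})_{k,j}\neq 0$ imply $(i,j)\in\gGh$'', which is exactly what the paper's Lemma \ref{lemma:graph_consistency} implicitly relies on when it ``tightens to equality''.
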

\begin{proof}
    We begin from the observational equivalence assumption (assumption \ref{assumption:obs_eq}) but consider the inverse of $\bm{v}(\cdot)$ which must exist as $\bm{v}(\cdot)$ is a diffeomorphism.  
\begin{align}
    \bf(\bx_{t}, \rv(\bht)) &= \hat{\bf}(\bx_t, \bht),
    \\
     \nabla_{\bht} \hat{\bf} &= \nabla_{\bt} \bf \nabla_{\bht} \rv \quad \forall \bx_t,\: \bht,
     \\
     [ \nabla_{\bht} \hat{\bf}]_{i,j} &= \sum_k [\nabla_{\bt} \bf]_{i,k} [\nabla_{\bht} \rv]_{k,j},
     \\
     \nabla_{\bht} \hat{\bf}_{i,j} &= (\nabla_{\bt} \bf_{i,\cdot})^T \nabla_{\bht} \rv_{\cdot,j}.
\end{align}

From here, we show that given the invertibility of $\rv$ and following assumption \ref{assumption:sufficient_variability}, every edge contained in $\gG$ must also be contained within some permutation of the learnt graph $\hat{\gG}$. First, consider, rather than $\rv$, a permutation $\rv$. As $\rv$ is invertible, there exists a permutation thereof such that there are no zeros on the diagonal. This fact follows simply from the \textit{Inverse Function Theorem} \citep{intro_to_smooth_manifolds}. The \textit{Inverse Function Theorem} proves that the Jacobian of the entanglement mapping must be of full row rank. Consequently, it must be possible to rearrange the entanglement graph such that there are no zeros on the diagonal. The permuted composition will be denoted as $\nabla_{\bht} \rv \bm{P} = \bC$. Formally, $\bC$ is a function of $\bht$, but the inputs of $\bC$, $\bf$, and $\hat{\bf}$ are omitted for notational clarity.
\begin{align}
    [ \nabla_{\bht} \hat{\bf}\bm{P}]_{i,j} &= \sum_k [\nabla_{\bt} \bf]_{i,k} [\nabla_{\bht} \rv \bm{P}]_{k,j},
    \\ &=\sum_k [\nabla_{\bt} \bf]_{i,k} [\bC]_{k,j}, \label{eq:36_hfp=fc}
    \\
    &= [\nabla_{\bt} \bf]_{i,j} [\bC]_{j,j} + \sum_{k\neq j} [\nabla_{\bt} \bf]_{i,k} [\bC]_{k,j}
    .
    \label{equation:41}
\end{align}
From the inverse function theorem, $\bm{C}_{j,j}\neq 0 \forall j$. This implies that if $[\nabla_{\bt} \bf]_{i,j} \neq 0$ then ${[\nabla_{\bt} \bf]_{i,j} [\bC]_{j,j} \neq 0}$. This is not sufficient to imply that edge $(i,j)$ is in $\bm{A}_{\gGh} \bm{P}$ since the domains $\bx \in \mathcal{X}$, $\btheta \in \Theta$ may constrain $\nabla_{\bt} \bf_{i,\cdot}$ to the subspace of $\bm{C}_{\cdot,j}$. To combat this, we introduce assumption \ref{assumption:sufficient_variability}. The sufficient variability assumption ensures that there exists at least one input for which the Jacobian of the ground truth system is not in the subspace of the permuted inverse entanglement map. Formally:
\begin{align}
    \forall (i,j) \in \{i,j \mid  [ \nabla_{\bht} \hat{\bf}\bm{P}]_{i,j} \neq 0\}, \quad \exists (\btheta, \bx_t) \text{ s.t } \sum_k [\nabla_{\bt} \bf]_{i,k} [\bC]_{k,j} \neq 0.
\end{align}

 There must exist a permutation $\bm{P}$ such that there exists any point in the input space where $[\nabla_{\bt} \bf]_{i,j}\neq 0$ necessarily implies $[ \nabla_{\bht} \hat{\bf}\bm{P}]_{i,j} \neq 0$. Consequently every edge in $\bm{A}_\mathcal{G}$ must exist in $\bm{A}_{\hat{\mathcal{G}}}\bm{P}$. This is not necessarily true in the other direction; not every edge in $\gGh$ must exist in a permuted form of $\gG$.\footnote{Toy example to showcase this is that $\gGh$ can be fully connected, and $\gG$ can be sparse. Since only the sufficient influence of the true dynamics model is constrained and not that of the learnt model.} Formally,
 \begin{align}
    \exists \bm{P} \text{ s.t. } \quad \bm{A}_{\gG} \subseteq \bm{A}_{\gGh} \bm{P}.
 \end{align}
\end{proof}
\begin{clemma}[Graph Equivalence] Under assumptions \ref{assumption:sparsity_reg} and \ref{assumption:sufficient_variability}, the learnt graph $\mathcal{G}$ is equivalent to the ground truth graph up to permutation. 
\begin{align}
\exists \bm{P} \text{ s.t. }\quad \bm{A}_{\hat{\mathcal{G}}}\bm{P} \equiv \bm{A}_\mathcal{G}.    
\end{align}
    \label{lemma:equivalence}
\end{clemma}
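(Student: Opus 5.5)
The plan is to combine Lemma \ref{lemma:inclusion} with the edge-count bound of assumption \ref{assumption:sparsity_reg}, using a simple counting argument.

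First, invoke Lemma \ref{lemma:inclusion}. It gives a permutation matrix $\bm{P}$ with $\bm{A}_{\gG} \subseteq \bm{A}_{\gGh}\bm{P}$ in the sense of definition \ref{definition:graph_subset}. That is, every nonzero entry of $\bm{A}_{\gG}$ is also a nonzero entry of $\bm{A}_{\gGh}\bm{P}$.

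Second, note that right multiplication by a permutation matrix only reorders the columns of $\bm{A}_{\gGh}$. This holds equally under the Boolean graph arithmetic of definition \ref{definition:graph_arithmetic}, since each column of $\bm{P}$ has exactly one unit entry, so each OR-sum has a single nonzero term. Hence $\Vert \bm{A}_{\gGh}\bm{P}\Vert_0 = \Vert \gGh \Vert_0$.

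Third, the inclusion gives $\Vert \gG \Vert_0 \leq \Vert \bm{A}_{\gGh}\bm{P}\Vert_0 = \Vert \gGh\Vert_0$. Assumption \ref{assumption:sparsity_reg} gives the reverse inequality $\Vert \gGh\Vert_0 \leq \Vert \gG\Vert_0$, so the two edge counts are equal. Now view both adjacency matrices as sets of edges, i.e.\ as the supports of binary matrices of the same shape. The support of $\bm{A}_{\gG}$ is a subset of the support of $\bm{A}_{\gGh}\bm{P}$, and the two finite sets have equal cardinality, so they coincide. Therefore $\bm{A}_{\gGh}\bm{P} \equiv \bm{A}_{\gG}$.

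The main point requiring care is bookkeeping rather than depth. The permutation in Lemma \ref{lemma:inclusion} acts on the parameter (column) index, and $\Vert\cdot\Vert_0$ must be permutation invariant under the Boolean product. One must also check that the inclusion is entrywise on supports, so that the cardinality argument applies. The lemma holds as a global graph identity only because both graphs are defined as Jacobian supports over all $(\bx_t,\btheta)$ via assumption \ref{assumption:markov}. Sufficient variability (assumption \ref{assumption:sufficient_variability}) enters only through Lemma \ref{lemma:inclusion}.
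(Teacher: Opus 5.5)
Your proposal is correct and follows the paper's own argument: Lemma~\ref{lemma:inclusion} gives $\bm{A}_{\gG} \subseteq \bm{A}_{\gGh}\bm{P}$, the sparsity assumption gives $\Vert \gGh \Vert_0 \leq \Vert \gG \Vert_0$, and nested edge sets of equal size must coincide. Your check that right-multiplication by $\bm{P}$ preserves $\Vert\cdot\Vert_0$ under Boolean arithmetic spells out a step the paper leaves implicit in its ``trivially true'' proof.
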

\begin{proof}
    Trivially true. From lemma \ref{lemma:inclusion} it is established that there exists a permutation such that every edge contained in $\gG$ must also be contained within the permuted $\gGh$. Assumption \ref{assumption:sparsity_reg} states that the learnt representation's graph must contain at most as many edges as the underlying graph. Therefore, they must be the same.
\end{proof}
\begin{clemma}[The Inverse Entanglement Graph is Right Graph Consistent] Given all prior assumptions, the permuted 
        Given lemma \ref{lemma:equivalence}, the permuted inverse entanglement mapping must be right $\gG$ consistent (definition \ref{definition_graph_consistency}). Which, in the form of adjacency matrix arithmetic, requires 
        \begin{align}
            \exists \bm{P} \text{ s.t. }\quad\bm{A}_\gG = \bm{A}_\gG \bm{A}_{\mathcal{V}^{-1}} \bm{P}. 
        \end{align}
    \label{lemma:graph_consistency}
\end{clemma}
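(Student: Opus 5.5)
The plan is to start from the Jacobian identity already derived in the proof of Lemma \ref{lemma:inclusion}, now read with Lemma \ref{lemma:equivalence} in hand. Observational equivalence together with Lemma \ref{lemma:entanglement_map_independence} gives $\bf(\bx_t,\rv(\bht)) = \hat{\bf}(\bx_t,\bht)$, with an inverse entanglement map that does not depend on $\bx_0$. Differentiating gives
\begin{align}
\nabla_{\bht}\hat{\bf}\,\bm{P} = \nabla_{\bt}\bf\,\bC, \qquad \bC = \nabla_{\bht}\rv\,\bm{P},
\end{align}
for all $\bx_t$ and $\bht$. Here $\bm{P}$ is the permutation from Lemma \ref{lemma:inclusion}, chosen so that $\bC$ has a nonzero diagonal. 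Lemma \ref{lemma:equivalence} says the sparsity pattern of the left-hand side is contained in $\bm{A}_{\hat{\gG}}\bm{P} = \bm{A}_\gG$ for every input. So every product $\nabla_{\bt}\bf\,\bC$ must lie in $\mathbb{R}_\gG$, meaning its entries vanish off the edges of $\gG$.

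The key step is to use Assumption \ref{assumption:sufficient_variability} to turn this pointwise support condition into a condition on $\bC$ alone. Fix the $\btheta$ given by that assumption, which supplies states $\{\bx_p\}$ whose Jacobians $\nabla_{\bt}\bf(\bx_p,\btheta)$ span $\mathbb{R}_\gG$. The map $M \mapsto M\bC$ is linear, so for every $M \in \mathbb{R}_\gG$ we get $M\bC \in \mathbb{R}_\gG$, by linear combination of the $\bx_p$ cases. Now take $M = \bm{e}_{i,k}$ for each edge $(i,k) \in \gG$. The product $\bm{e}_{i,k}\bC$ is row $i$ filled with $\bC_{k,\cdot}$. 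So $\bC_{k,j} \neq 0$ forces $(i,j) \in \gG$ for every child $i$ of $k$.

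Translated into Boolean adjacency arithmetic (Definition \ref{definition:graph_arithmetic}), this gives $\bm{A}_\gG \bm{A}_{\mathcal{C}} \subseteq \bm{A}_\gG$, where $\mathcal{C}$ is the graph of $\bC$, that is, of $\mathcal{V}^{-1}$ permuted by $\bm{P}$. For the reverse inclusion, $\bC$ has a nonzero diagonal, so $\bm{A}_{\mathcal{C}} \supseteq \bm{I}$ and hence $\bm{A}_\gG \subseteq \bm{A}_\gG\bm{A}_{\mathcal{C}}$. Together these give equality, which is the right consistency claimed in the statement, with the permutation written on the right as $\bm{A}_{\mathcal{V}^{-1}}\bm{P}$.

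The main obstacle is that this span argument only holds at the single $\btheta$ (equivalently, the single $\bht$) where sufficient variability is assumed, while $\bC$ varies with $\bht$. I would handle this in one of two ways. The first is to read Assumption \ref{assumption:sufficient_variability} as holding at each $\btheta$ at which the entanglement graph is evaluated. The second is to define $\mathcal{V}^{-1}$ as the union of supports over $\bht$ and argue through continuity of $\bC$ and of $\nabla_{\bt}\bf$ (Assumption \ref{assumption:markov}) on a neighbourhood. I would adopt the first reading and state it explicitly. A smaller point is the ordering of products. The same $\bm{P}$ from Lemma \ref{lemma:equivalence} must be used on both sides, so that $\bm{A}_{\hat{\gG}}\bm{P}$ and $\nabla_{\bht}\rv\,\bm{P}$ are compatible.
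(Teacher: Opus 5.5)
Your argument is sound, and it reaches the statement by a more explicit route than the paper's.

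\textbf{The paper's route.} It starts from the same factorisation $\nabla_{\bht}\hat{\bf}\,\bm{P}=\nabla_{\bt}\bf\,\nabla_{\bht}\rv\,\bm{P}$. It records the easy inclusion $\bm{A}_{\gGh}\bm{P}\subseteq\bm{A}_\gG\bm{A}_{\mathcal{V}^{-1}}\bm{P}$, since the support of a product lies in the Boolean product of the supports. It then asserts that sufficient variability rules out ``cancellations'' at some point, upgrades the inclusion to equality, and substitutes $\bm{A}_{\gGh}\bm{P}=\bm{A}_\gG$ from Lemma~\ref{lemma:equivalence}.

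\textbf{Your route.} You prove the hard direction $\bm{A}_\gG\bm{A}_{\mathcal{C}}\subseteq\bm{A}_\gG$ directly, using the spanning Jacobians, linearity of $M\mapsto M\bC$, and testing on the indicator matrices $\bm{e}_{i,k}$. This is the standard mechanism-sparsity argument of \citet{lachapelle22a}. You get the easy direction from the non-zero diagonal of $\bC$. The paper's support inclusion plus Lemma~\ref{lemma:equivalence} would serve equally well there and would not rely on that diagonal.

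\textbf{What your route buys.} ``No cancellations'' becomes a checkable statement. It shows that Lemma~\ref{lemma:entanglement_map_independence} is what makes a single $\bC$ shared across all the $\bx_p$. It also exposes that Assumption~\ref{assumption:sufficient_variability}, stated for only one $\btheta$, constrains $\bC$ only at $\bht^\ast=\bv(\btheta^\ast)$. The paper's proof makes the same silent jump from one point to the global graph $\mathcal{V}^{-1}$. Your explicit strengthening to every $\btheta$ is therefore exactly what the lemma as stated needs. Once you have it, the pointwise inclusions combine over $\bht$, because Boolean products distribute over unions.

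Your continuity alternative would not suffice on its own. The spanning condition is open, so it propagates only to a neighbourhood of $\bht^\ast$, not to all of $\hat{\Theta}$.
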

\begin{proof}
    Beginning from the full form of equation \ref{eq:36_hfp=fc}, 
    \begin{align}
    \underbrace{\nabla_{\bht} \hat{\bf}}_{\subseteq \gGh = \gG \bm{P}^T}
    \bm{P}
    &=
     \underbrace{\nabla_{\bt}\bf}_{\subseteq \gG} 
     \underbrace{\nabla_{\bht} \rv}_{\subseteq \mathcal{V}^{-1}} 
     \bm{P},
     \label{eq:42}
    \end{align}
    From Lemma \ref{lemma:inclusion}, we know that there exists a point in the input space such that no cancellations occur in the product between the ground truth system Jacobian and the Jacobian of the inverse entanglement map (the Jacobian of the ground truth graph varies sufficiently and consequently it is not constrained to the null-space of the Jacobian of the entanglement mapping). The relation below can be rewritten in adjacency matrix arithmetic by considering the set of non-zero values each element of equation \ref{eq:42} can take. The set of non-zero values is given by the elements of the graph in each underbrace.
    \begin{align}
        \bm{A}_{\gGh} \bm{P} &\subseteq \bm{A}_\gG \bm{A}_{\mathcal{V}^{-1}} \bm{P}. 
    \end{align}
    The lack of cancellations allows the subset to be tightened to equality:
    \begin{align}
        \bm{A}_{\gGh} \bm{P} &= \bm{A}_\gG \bm{A}_{\mathcal{V}^{-1}} \bm{P},
        \\
        \bm{A}_{\gG} &= \bm{A}_\gG \bm{A}_{\mathcal{V}^{-1}} \bm{P}.
    \end{align}
    
\end{proof}
\begin{clemma}[Graphical Criterion] If $\gG$ satisfies the following graphical criterion: 
\begin{align}
   \forall i: \quad  \cap_{a\in\text{Ch}(i)}\text{Pa}(a)=\{i\}.
\label{equation:graphical_criterion}
\end{align}
Then the only valid right consistent graph on $\mathcal{G}$, $\mathcal{R}: \: \bm{A}_\gG = \bm{A}_\gG \bm{A}_\mathcal{R}$ is the identity.
\label{lemma:graphical_criterion}
\end{clemma}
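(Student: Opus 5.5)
The plan is to read the Boolean identity $\bm{A}_\gG = \bm{A}_\gG \bm{A}_\mathcal{R}$ column by column and show that every off-diagonal entry of $\bm{A}_\mathcal{R}$ must vanish and every diagonal entry must be present.

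\textbf{Step 1 (column form).} I index $\bm{A}_\gG$ with rows as outputs and columns as parameters, and $\bm{A}_\mathcal{R}$ with rows and columns as parameters. Under the graph arithmetic of definition \ref{definition:graph_arithmetic}, column $j$ of $\bm{A}_\gG \bm{A}_\mathcal{R}$ is the element-wise OR of the columns $k$ of $\bm{A}_\gG$ for which $[\bm{A}_\mathcal{R}]_{k,j}=1$. The support of column $k$ of $\bm{A}_\gG$ is exactly $\text{Ch}(k)$. Right consistency therefore says that, for every $j$,
\begin{align}
\text{Ch}(j) = \bigcup_{k:\,[\bm{A}_\mathcal{R}]_{k,j}=1} \text{Ch}(k).
\end{align}
In particular, $[\bm{A}_\mathcal{R}]_{k,j}=1$ implies $\text{Ch}(k)\subseteq \text{Ch}(j)$.

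\textbf{Step 2 (off-diagonal entries vanish).} Suppose $[\bm{A}_\mathcal{R}]_{k,j}=1$. Then every $a\in\text{Ch}(k)$ also lies in $\text{Ch}(j)$, so $j\in\text{Pa}(a)$ for all such $a$. Hence
\begin{align}
j\in\bigcap_{a\in\text{Ch}(k)}\text{Pa}(a).
\end{align}
By the criterion \eqref{equation:graphical_criterion} applied at node $k$, this intersection equals $\{k\}$, so $j=k$. Thus $\bm{A}_\mathcal{R}$ has no off-diagonal entries.

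The one subtlety is a node $k$ with $\text{Ch}(k)=\emptyset$. The empty intersection is conventionally the whole parameter set, so the criterion already forces every node to have at least one child whenever there is more than one parameter. I will state this remark explicitly, and treat the single-parameter case trivially.

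\textbf{Step 3 (diagonal entries present).} Because $\text{Ch}(j)\neq\emptyset$, the union in Step 1 must be nonempty. So some $k$ has $[\bm{A}_\mathcal{R}]_{k,j}=1$, and by Step 2 that $k$ must equal $j$, giving $[\bm{A}_\mathcal{R}]_{j,j}=1$. Hence $\bm{A}_\mathcal{R}=\bm{I}$.

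The main obstacle is only bookkeeping: fixing the row and column convention so that "right consistency" really acts on parameter columns, and handling the empty-children edge case. The diagonal could alternatively be guaranteed by the permutation chosen in lemma \ref{lemma:inclusion}, which gives the permuted inverse entanglement Jacobian a nonzero diagonal, but Step 3 makes that appeal unnecessary.
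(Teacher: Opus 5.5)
Your argument is correct and is essentially the paper's: both read the Boolean product entrywise to get that $[\bm{A}_\mathcal{R}]_{k,j}=1$ forces $\text{Ch}(k)\subseteq\text{Ch}(j)$, so row $k$ of $\bm{A}_\mathcal{R}$ is supported in $\bigcap_{a\in\text{Ch}(k)}\text{Pa}(a)=\{k\}$. Your Step 3 makes explicit something the paper's proof leaves out: the paper only shows $\bm{A}_\mathcal{R}\subseteq\bm{I}$ and relies on the permutation from Lemma \ref{lemma:inclusion} for a nonzero diagonal. However, the single-parameter case is not quite trivial, because if that parameter is childless then $\bm{A}_\mathcal{R}=0$ is also right consistent, so you must exclude this case, e.g.\ via the injectivity in Assumption \ref{assumption:existence}.
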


\begin{proof}
We will now formalise the allowed structures on $\mathcal{R}$, given $\mathcal{G}$. Consider a generic 3x3 example, where a $\star$ symbol denotes that any possible value could be taken. 
\[
\underbrace{\begin{bmatrix}
\star & \star & \star \\
\star & \star & \star \\
\star & \star & \star
\end{bmatrix}}_\mathcal{G}
=
\underbrace{\begin{bmatrix}
\star & \star & \star \\
\star & \star & \star \\
\star & \star & \star
\end{bmatrix}}_\mathcal{G}
\underbrace{
\begin{bmatrix}
R_{11} & R_{12} & R_{13} \\
R_{21} & R_{22} & R_{23} \\
R_{31} & R_{32} & R_{33}
\end{bmatrix}}_\mathcal{R}.
\]

In adjacency matrix arithmetic, cancellations of edges are impossible; consequently, the product can be written out in Boolean logic form as:
\begin{equation}
    \mg_{ij}=0 \implies (\mg_{i1} \land R_{1j}=0) \lor (\mg_{i2} \land R_{2j}=0)\lor...\lor(\mg_{ik}\land R_{kj})=0 .
\end{equation}

By iterating over all appearances of $R_{ij}$, it can be shown through boolean algebra that $R_{ij}$ must be $0$ if and only if:
\begin{equation}
    \bigvee_{k} \left ( \mg_{kj}=0 \land \mg_{ik} \neq 0  \right ) = 1.
\end{equation}

Further rearrangement shows that the set of nonzero elements in each row of the entanglement mapping is given by $\mathcal{R}_{i,:}$:
\begin{equation}
    \cap_{a\in\text{Ch}(i)}\text{Pa}(a).
\end{equation}
Where $\text{Ch}(\cdot)$ and $\text{Pa}(\cdot)$ are the children and parents of a node in each graph, respectively.

Therefore, the set of permissible elements in each row of $\bm{A}_\mathcal{R}$ is restricted to the diagonal (equivalently the elements of the identity), by iterating over row elements in the graphical criterion.

\begin{align}
   \forall i: \quad  \cap_{a\in\text{Ch}(i)}\text{Pa}(a)=\{i\}.
\label{equation:graphical_criterion}
\end{align}
\end{proof}

\subsection{Proof of Theorem 1}
\label{sec:proof:main}

\new{}
\begin{proof}

\paragraph{Step 1.} %
We define an entanglement map $\bv: \Theta \times \mathcal{X} \to \hat{\Theta}$ such that $\bht = \bv(\bt, \bx)$. This mapping describes how the learned parameters relate to the ground truth at any given state. The dependencies in this mapping are captured by an entanglement graph $\mathcal{V}$, where an edge exists if a ground-truth parameter influences a learned latent dimension:
\begin{align}
    \hat{\boldsymbol{\theta}}=\bv(\boldsymbol{\theta},\boldsymbol{x}_0)= (\hat{\bh}^{-1} \circ \bh)
(\boldsymbol{\theta},\boldsymbol{x}_0).
\end{align}
\paragraph{Step 2.} %
By assumption \ref{assumption:path_connected}, any two states in the domain are connected through some sequence of trajectories. Lemma \ref{lemma:entanglement_map_independence} uses the assumption to show that the entanglement map must be invariant to the starting state:
\begin{align}
    \bv(\boldsymbol{\theta},\boldsymbol{x}_0)=\bv(\boldsymbol{\theta}) \quad\forall \bx_0, \bt
\end{align}

\paragraph{Step 3.} We establish the conditions under which the entanglement map $\bm{v}$ implies representational disentanglement. By definition, $\bm{v} = \hat{\bm{h}}^{-1} \circ \bm{h}$. From Assumption \ref{assumption:existence}, $\bm{h}(\cdot)$ must be injective, and from Assumption \ref{assumption:obs_eq}, $\hat{\bm{h}}(\cdot)$ must be surjective; together with the requirement that both models are defined over the same image, this ensures that $\bm{v}$ is a bijection. Furthermore, Assumption \ref{assumption:markov} requires that the transition model (and consequently $\bm{h}$) and its first derivative are continuous, rendering $\bm{v}$ a diffeomorphism. If the entanglement graph is shown to be a permutation of the identity, $\bm{A}_\mathcal{V} = \bm{P}$, it implies that each learned parameter $\hat{\theta}_i$ is a function of exactly one unique ground-truth parameter $\theta_j$. Because the requirements of bijectivity and differentiability must hold, such a structural constraint necessitates that $\bm{v}$ is an element-wise diffeomorphism, satisfying the criteria for disentanglement (Definition \ref{definition_disentanglement}). By the same logic, if the inverse entanglement graph is a permutation of the identity ($\bm{A}_{\mathcal{V}^{-1}} = \bm{P}$), then each ground-truth parameter is mapped to a single learned latent, which similarly forces an element-wise diffeomorphic relationship between the two representations.

\paragraph{Step 4.} %
From observational equivalence (assumption \ref{assumption:obs_eq}), we relate the Jacobians of the ground truth and learnt transition models via the inverse entanglement mapping,
\begin{align}
    \nabla_{\bht} \hat{\bf}(\bx_t, \bht) = \nabla_{\bt} \bf(\bx_t, \bv^{-1}(\bht)) \nabla_{\bht} \bv^{-1}(\bht).
\end{align}

\paragraph{Step 5.} %
We use lemma \ref{lemma:inclusion} to show that given sufficient variability of the Jacobian of the transition model (assumption \ref{assumption:sufficient_variability}), there must exist a point in the input space such that every non-zero entry in the Jacobian of the ground truth transition model must also be non-zero up to permutation in the learnt transition model. In terms of the structural graphs, this implies that:
\begin{align}
    \exists \bm{P} \text{ s.t. } \quad \bm{A}_{\gG} \subseteq \bm{A}_{\gGh} \bm{P}.
\end{align}

\paragraph{Step 6.} %
Additionally, introducing assumption 5 (Sparsity), which posits that the learned graph's cardinality is bounded by the ground truth ($\|\gGh\|_0 \le \|\gG\|_0$). By lemma \ref{lemma:equivalence}, since $\bm{A}_\gG \subseteq \bm{A}_{\gGh} \bm{P}$ and the number of edges in $\gGh$ cannot exceed those in $\gG$, the two graphs must be equivalent up to permutation: $\bm{A}_{\gGh} \bm{P} \equiv \bm{A}_{\gG}$.

\paragraph{Step 7.} %
Using the graph equivalence, we return to the Jacobian formulation from step 4. The sufficient-variability assumptions require that the transition model not be constrained to the nullspace of the Jacobian of the inverse entanglement mapping. Consequently, lemma \ref{lemma:graph_consistency} shows that the inverse entanglement mapping must be right consistent with the ground truth graph:
    \begin{align}
        \bm{A}_{\gG} &= \bm{A}_\gG \bm{A}_{\mathcal{V}^{-1}} \bm{P}.
    \end{align}

\paragraph{Step 8.} %
The requirement of right consistency means that the ground truth graph, $\gG$, constrains the allowed structure upon $\bm{A}_{\mathcal{V}^{-1}}\bm{P}$. Lemma \ref{lemma:graphical_criterion} proves that if $\gG$ satisfies the graphical criterion:
\begin{align}
   \forall i: \quad  \cap_{a\in\text{Ch}(i)}\text{Pa}(a)=\{i\}.
\label{equation:graphical_criterion}
\end{align}
Then the only matrix $\bm{A}_\mathcal{R}$ that satisfies the consistency equation, $\bm{A}_\gG = \bm{A}_\gG \bm{A}_\mathcal{R}$ is the identity matrix. Therefore $\bm{A}_{\mathcal{V}^{-1}}\bm{P}=\bm{I}$.

\paragraph{Step 9.} Since $\bm{A}_{\mathcal{V}^{-1}}\bm{P} = \bm{I}$, it follows that the entanglement graph is a permutation of the identity. This forces the mapping $\bv(\bt)$ to be an element-wise diffeomorphism, meaning the system parameters have been disentangled.
This concludes the proof.
\end{proof}

\newpage
\subsection{Extension to Local Causal Graphs}
\label{sec:proof:local_graphs}
\label{appendix:local_causal_graph_proof}

Many dynamical systems exhibit state-dependent causal structures, where only a subset of the edges in the global graph are active at a given state. Let the state space admit a finite cover by non-zero measure partitions $\{ \mathcal{X}_l\}_{l=1}^b$, such that for $\boldsymbol{x}_t \in \mathcal{X}_l$, the one-step transition $\boldsymbol{x}_{t+1}=\bf(\boldsymbol{x}_t,\boldsymbol{\theta})$ is Markov not only with respect to the global DAG, but also the local DAG $\mathcal{G}_l \subseteq \mathcal{G}$ associated with that partition. The partition must have non-zero measure so the sufficient variability assumption can be applied. In practice, this means that when dealing with continuous systems, instantaneous events such as collisions can not be factored into discrete local graph partitions. However, when operating in discrete time, collisions occupy a non-zero measure.

\begin{equation}
\big[\bm{A}_{\mathcal{G}_l}\big]_{i,j} = 0 \iff \frac{\partial [\bf (\bx_t, \btheta )]_i}{\partial \theta_j} = 0 \quad \forall \boldsymbol{x}_t \in \mathcal{X}_l, \mathbf{\theta}.
\end{equation}

We further assume:
\begin{itemize}
    \item Assumptions \ref{assumption:existence}, and \ref{assumption:obs_eq} hold globally. This is required to prevent trivial solutions. 
    \item Assumption \ref{assumption:path_connected} must also hold globally. In practice, this imposes the additional constraint that the system must be able to navigate between state space partitions. Otherwise, a separate representation might be learnt for each partition. Global path connectedness ensures the same representation is used for each partition and allows theoretical reasoning about the reuse of representations across partitions.
    \item Assumption \ref{assumption:sparsity_reg}-\ref{assumption:sufficient_variability} must hold \textit{within} each partition $\mathcal{X}_l$. The former allows local representations to be sparser than global representations, whilst the latter ensures that the effect of each edge in a local causal graph is measurable.
\end{itemize}

\subsubsection{Pointwise Consistency of the Entanglement Mapping.}

Recall the global \textit{right-consistency} relation from lemma \ref{lemma:graph_consistency}, which forces the inverse entanglement map to respect the system graph in adjacency arithmetic: $\bm{A}_{\gG} = \bm{A}_\gG \bm{A}_{\mathcal{V}^{-1}} \bm{P}$, obtained from the Jacobian factorisation, $\nabla_{\bht}\hat{\bf} = \nabla_{\bt}\bf \nabla_{\bht} \bv^{-1}$, and $\nabla_{\bt} \bf = \nabla_{\bht} \hat{\bf} \nabla_{\bt} \bv$. Building the same calculation, restricted to a partition $\mathcal{X}_l$, gives, pointwise in $\bx_t \in \mathcal{X}_l$:
\begin{align}
    \bm{A}_{\gG_l} &= \bm{A}_{\gG_l}  \bm{A}_{\mathcal{V}^{-1}}\bm{P}.
    \label{equation:local_consistency}
\end{align}
Intuitively, whenever an edge is active in $\gG_l$, the corresponding influence must be representable through the inverse entanglement mapping, $\bv^{-1}$, at that state.

\subsubsection{Allowed Structure of $V$ under local graphs.}
In the global case, Boolean adjacency arithmetic shows that the set of indices allowed to be non-zero in row $i$ of $\mathcal{V}^{-1}$ is:
\begin{align}
    \text{supp}(\mathcal{V}^{-1}_{i,:}) &\subseteq \bigcap_{a \in \text{Ch}(i)} \text{Pa}(a).
\end{align}

Which yields the global graph criterion for disentanglement $\bigcap_{a \in \text{Ch}(i)} \text{Pa}(a) = \{i\} \forall i $. Note that the same applies to each and every partition:
\begin{align}
    \text{supp}(\mathcal{V}^{-1}_{i,:}) &\subseteq \bigcap_{a \in \text{Ch}_{\gG_l}(i)} \text{Pa}_{\gG_l}(a) \quad \forall l.
\end{align}

Since the representation is reused globally and throughout partitions, the allowed supports must satisfy \textbf{all} local constraints simultaneously. Hence, the allowed support upon the entanglement graph is given as:
\begin{align}
    \text{supp}(\mathcal{V}^{-1}_{i,:}) &\subseteq\bigcap_{l=1}^b  \bigcap_{a \in \text{Ch}_{\gG_l}(i)} \text{Pa}_{gG_l}(a)
\end{align}

Therefore, the local-graph graphical criterion guaranteeing disentanglement is:
\begin{align}
        \forall i: \bigcap_{l=1}^b  \bigcap_{a \in \text{Ch}_{\gG_l}(i)} \text{Pa}_{gG_l}(a) = \{ i \}.
\end{align}

\subsubsection{Corollaries and Edge Cases.}
\begin{itemize}
    \item \textbf{Monotonicity}. Adding more local graphs can only shrink the admissible set for each row of the entanglement graph, so incorporating additional partitions can never worsen disentanglement and may only help it.
    \item \textbf{Sufficient Subgraph}. If full latent parameter disentanglement holds in \textit{any} one local graph, the representation must be entangled globally.
    \item \textbf{Insufficient Subgraph}. If full latent parameter disentanglement holds in \textit{none} of the individual sub-graphs, or the global graph, disentanglement throughout the intersections is still feasible given the right structure.  
    \item \textbf{Overspecification}. The number of dynamical parameters that can be disentangled can be significantly larger than the number of state parameters considered. Theroretically in the limit it can be infinite as long as all assumptions are still met, and the full set of dynamical parameters is inferable from a trajectory.
    \item \textbf{Nodes without children}. If $\text{Ch}_{\gG_l} = \emptyset$ in some sub-graph, the local constraint does not restrict row $i$. When evaluating the intersection, these should be ignored. In practice, it is easiest to replace such a case with the set of all nodes, as this best aligns with the allowed elements of the entanglement mapping. Suppose a node has no children in any subset of the global causal graph. In that case, the invertibility of dynamics is unmet, and the latents are not deterministically encodable from a trajectory. The theory does not apply in cases where the number of learnt parameters exceeds the minimal set of parameters required to describe a system. 
\end{itemize}

\newpage
\section{
An Example of Path Connectedness
}
\label{sec:appendix:example_path_connectedness}

\begin{tcolorbox}[colback=red!5!white,colframe=white,arc=5pt]
\textbf{Example.}
\begin{figure}[H] %
    \centering
    \includegraphics[width=0.4\linewidth]{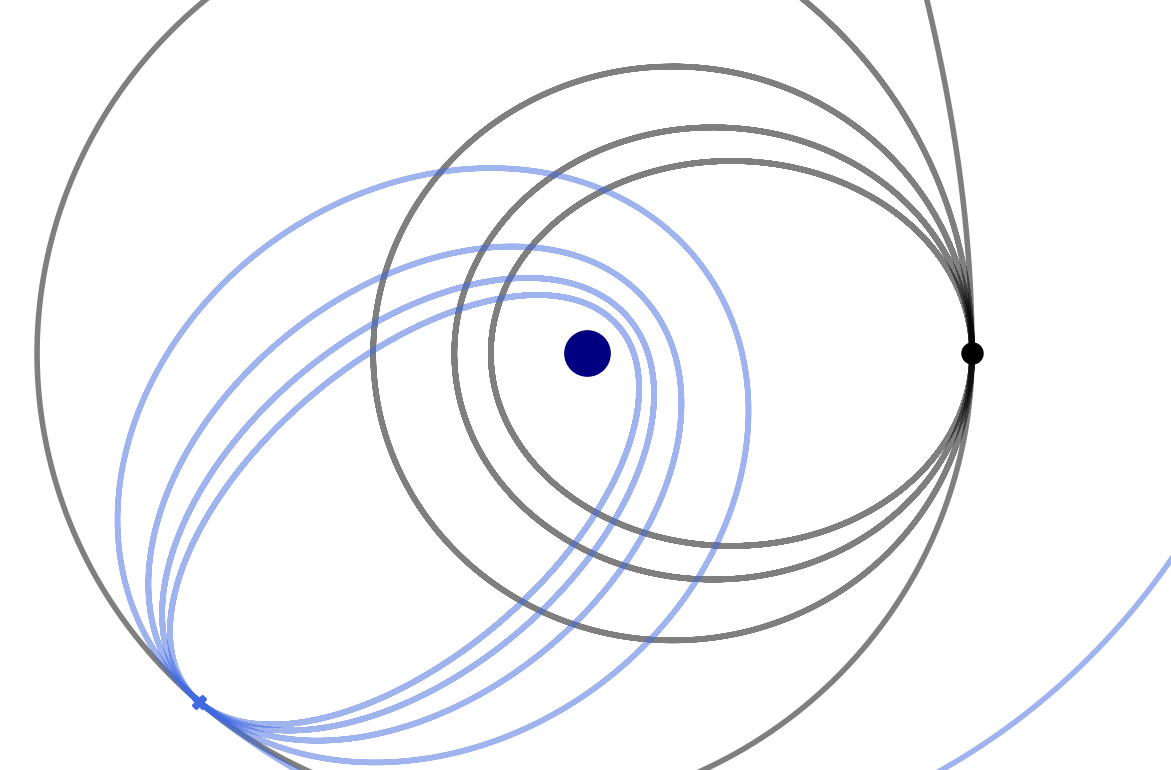}
    \caption{Illustration of how partition sets are defined using a planetary orbit example. Consider this system where the dynamical factor of variation is the (central) sun's mass. From an initial starting position and velocity, a large set of trajectories can unfold depending on the sun's mass. Throughout every point in black, the learnt representation must be independent of the system state. Each black line represents one trajectory that could have originated from the black point by varying the dynamical parameter. This process can be repeated iteratively at any point along any of the black lines to expand the set. In the figure, we choose the blue point from the set of black trajectories and, from the same starting position and velocity, change the dynamical parameter again. The result is a set of states, which can be expanded to cover a partition of the state-space.}
    \label{fig:planet_example}
\end{figure}
    Path connectivity uncovers trajectory-invariant symmetries in dynamical families. This means that the usefulness of a representation to our theorem is highly sensitive to the available state representation. Trajectory invariant symmetries can be challenging to spot. Consider the example from figure \ref{fig:planet_example}. It is evident that, given a broad enough set of allowable starting conditions and sun masses, the entire position space can be covered by one singular partition, as there exist trajectories between any two positions. However, if the state space were to include the planet's velocity, then two partitions would exist in the system. One for clockwise circular motion and one for counter-clockwise circular motion, since no dynamical parameter or state can transfer the system from one partition to another. Therefore, we expect any dynamical model trained on this system to learn up to two state-dependent distinct representations, one for clockwise circular motion and one for counter-clockwise circular motion.
    This also means that invariant or redundant system information cannot be part of the state space. For example, imagine a dynamical model predicting the behaviour of cubes on a table, except sometimes the cube is red and sometimes the cube is blue. If this property were encoded into the state space, it would be invariant along a trajectory, and any dynamical parameter can be a function of the cube colour.
\end{tcolorbox}

\clearpage

\newpage
\new{
\section{Discussion}
\subsection{Comparison with the Recent Works \citet{lachapelle2024nonparametric} and \citet{yao2024marrying}}
}
The proof presented in Appendix \ref{appendix:proof_of_theorem} builds upon the proposed concepts by \citet{yao2024marrying} and the proof technique of \citet{lachapelle2024nonparametric}. \citet{lachapelle2024nonparametric} introduces \emph{Mechanism Sparsity} a principle for nonlinear ICA that achieves disentanglement by assuming latent factors depend only on a small subset of past factors or auxiliary variables. They derive an identifiability theorem showing that latent variables can be recovered up to permutation and element-wise diffeomorphism if the ground-truth causal structure satisfies a graphical criterion. To implement this in practice, they propose a method that simultaneously learns latent representations and their underlying causal graph structure via $L_1$ regularisation. 
We adapt the framework of \citet{lachapelle2024nonparametric} to a novel setting, necessitating three key theoretical extensions: the incorporation of auxiliary variable representation learning, the introduction of state-dependent local-causal graphs, and the application of a path-connectedness assumption.
\paragraph{Learning the Representation of Auxiliary Variables.} 
Prior works, such as \citep{lachapelle2024nonparametric,hyvarinen2019nonlinear}, are primarily focused on learning state representations while assuming auxiliary variables are directly observed. We show that in the framework of mechanism sparsity, these results can be extended to learning representations of auxiliary variables, which are of particular importance in settings of modelling dynamical systems, as dynamical parameters can be seen as auxiliary variables.
\paragraph{Path Connectedness Assumption.} Switching to auxiliary representational learning does introduce additional challenges not addressed by \citet{lachapelle2024nonparametric} and not discussed in \citet{yao2024marrying}. The representation of parameters can become entangled with the representations of state. We formalise path-connectedness, an assumption that resolves this issue. To our knowledge, we are the first to implement such an assumption in the context of a mechanism sparsity identifiability theorem. 
\paragraph{Local Graphs.} Finally, we introduce the notion of local graphs, which are state-dependent sub-graphs of a system's global graph. To the best of our knowledge, ours is the first work to establish a representational disentanglement identifiability theorem incorporating local graphs.
Critically, we show that incorporating subgraphs must be equiperformant with global graph methods or improve upon them. We anticipate that the shift from global to local intersections is a generalizable insight applicable to other graphical criteria within the mechanism sparsity framework, although we leave the formal proof of this extension to future work.

\citet{yao2024marrying} propose an isomorphism between Causal Representation Learning (CRL) and dynamical systems, arguing that integrating the two fields equips system identification with theoretical guarantees and improves the real-world applicability of CRL. Empirically, they rely on \emph{Mechanistic Neural Networks} \citep{pervezmechanistic} to model the parameters of an Ordinary Differential Equation (ODE) via differentiable solvers. This reliance on ODEs fundamentally distinguishes their domain from ours. While ODEs model continuous vector fields, our theorem allows for object-centric or factored representations that are not bound by global continuity. Consequently, our method is inherently capable of capturing sparse (both spatially and temporally), discontinuous interactions, such as collisions, that lie outside the standard expressivity of continuous ODE formulations. \\
~\\
Regarding identifiability, \citet{yao2024marrying} presents two key propositions. The first, applicable when the functional form is known (see their Corollary 3.1), aligns with the framework of sparse identification methods such as SINDy \citep{sindy}. By assuming a fixed dictionary of terms, this approach maps predicted parameters to pre-defined equation components; thus, identifiability arises from the imposed structural priors rather than through representation learning of latent variables.
\\
Their second proposition (Corollary 3.2) addresses the case of unknown functional forms and relies on \emph{multi-view} assumptions. They show that optimising an alignment loss across paired trajectories partially identifies shared parameters. In contrast, our work does not consider multi-view data or paired counterfactuals. Instead, we prove that \emph{mechanism sparsity} is sufficient in some settings to disentangle parameters from raw trajectories.

\subsection{Discussion on the Sufficient Influence Assumption}

The sufficient variability assumption (alternatively termed sufficient influence, assumption \ref{assumption:sufficient_variability}) emerges as a theoretical necessity. In this section, we demonstrate that this condition possesses a clear, intuitive interpretation within our framework. We support this claim through illustrative examples and a comparative analysis of how analogous constraints are utilised in the broader causal representation learning literature (section \ref{sec:appendix:D:related}). The assumption is restated below for convenience.

\setcounter{assumption}{5}
\begin{assumption}[Sufficient Variability of the Jacobian \citep{lachapelle22a}]
    There must exist at least one $\btheta \in \Theta$ such that there exists a set of states $\{ \bx_p\}_{p=1}^{\Vert \mathcal{G} \Vert_0}, \bx \in \mathcal{X}$  such that 
    \begin{align}
        \text{span}\left\{ 
        \nabla_{\bt} \bf ( \bx_p, \btheta )
        \right\}_{p=1}^{\Vert \mathcal{G} \Vert_0}
        &=
        \mathbb{R}^{\mathcal{G}}
    \end{align}
The real graph subset is defined in definition \ref{definition:real_subset}.
\end{assumption}

In essence, the assumption requires that every causal edge exerts a unique influence that is separable from the influence of every other edge. Practically, this requires that the graph $\mathcal{G}$ and our model of the ground-truth environment $\bf$ are irreducible.
Consequently, the assumption is mild and additionally imposes that the assumed cardinality of the parameter space is correct.

In the setting of physical parameters, irreducibility can be expressed as a lack of redundancy. For example, given a system under the influence of two springs with strengths $\theta_0$, and $\theta_1$, as per equation \ref{eq:appendix:D:example:A:1}.

\paragraph{Example A. Redundant Causal Edge}
\begin{align}
    \bx_{t+1}&= \bf(\bx_t, \bt) = -(\theta_0+\theta_1)\bx_t
    \label{eq:appendix:D:example:A:1}
    \\
    \nabla_{\btheta} \bf &=
    \begin{bmatrix}
        -\bx_t & -\bx_t
    \end{bmatrix}
    \label{eq:appendix:D:example:A:2}
\end{align}
\begin{figure}[t]
    \centering
    \begin{minipage}[b]{0.23\textwidth}
        \centering
        \begin{tikzpicture}[
            scale=1.0,
            every node/.style={transform shape},
            node distance=0.5cm and 2.5cm, 
            >=Stealth,
            leftnode/.style={circle, draw=oxfordred, fill=oxfordred, text=white, minimum size=7mm, font=\bfseries, inner sep=0pt},
            rightnode/.style={circle, draw=oxfordblue, fill=oxfordblue, text=white, minimum size=7mm, font=\bfseries, inner sep=0pt}
        ]
            \node[leftnode] (t0) at (0,-1.0) {$\theta_0$};
            \node[leftnode] (t1) at (0,-0.0) {$\theta_1$};
            \node[rightnode] (x) at (2.5, 0) {$\bx$};

            \draw[->] (t0) -- (x);
            \draw[->] (t1) -- (x);
        \end{tikzpicture}
        \par\vspace{5pt} %
        (A)
        \label{fig:graph:appendix:D:A}
    \end{minipage}
    \hfill
    \begin{minipage}[b]{0.23\textwidth}
        \centering
        \begin{tikzpicture}[
            scale=1.0,
            every node/.style={transform shape},
            node distance=0.5cm and 2.5cm, 
            >=Stealth,
            leftnode/.style={circle, draw=oxfordred, fill=oxfordred, text=white, minimum size=7mm, font=\bfseries, inner sep=0pt},
            rightnode/.style={circle, draw=oxfordblue, fill=oxfordblue, text=white, minimum size=7mm, font=\bfseries, inner sep=0pt}
        ]
            \node[leftnode] (t0) at (0,-1.0) {$\theta_0$};
            \node[leftnode] (t1) at (0,-0.0) {$\theta_1$};
            \node[leftnode] (t2) at (0, 1.0) {$\theta_2$};
            \node[rightnode] (x) at (2.5, 0) {$\bx$};

            \draw[->] (t0) -- (x);
            \draw[->] (t1) -- (x);
            \draw[->] (t2) -- (x);
        \end{tikzpicture}
        \par\vspace{5pt}
        (B)
    \end{minipage}
    \hfill
    \begin{minipage}[b]{0.23\textwidth}
        \centering
        \begin{tikzpicture}[
            scale=1.0,
            every node/.style={transform shape},
            node distance=0.5cm and 2.5cm, 
            >=Stealth,
            leftnode/.style={circle, draw=oxfordred, fill=oxfordred, text=white, minimum size=7mm, font=\bfseries, inner sep=0pt},
            rightnode/.style={circle, draw=oxfordblue, fill=oxfordblue, text=white, minimum size=7mm, font=\bfseries, inner sep=0pt}
        ]
            \node[leftnode] (t0) at (0,-1.0) {$\theta_0$};
            \node[leftnode] (t1) at (0,-0.0) {$\theta_1$};
            \node[leftnode] (t2) at (0, 1.0) {$\theta_2$};
            \node[rightnode] (x) at (2.5, 0) {$\bx$};

            \draw[->] (t0) -- (x);
            \draw[->] (t1) -- (x);
        \end{tikzpicture}
        \par\vspace{5pt}
        (C)
        \label{fig:environments:springs_c}
    \end{minipage}
    \hfill
    \begin{minipage}[b]{0.23\textwidth}
        \centering
        \begin{tikzpicture}[
            scale=1.0,
            every node/.style={transform shape},
            node distance=0.5cm and 2.5cm, 
            >=Stealth,
            leftnode/.style={circle, draw=oxfordred, fill=oxfordred, text=white, minimum size=7mm, font=\bfseries, inner sep=0pt},
            rightnode/.style={circle, draw=oxfordblue, fill=oxfordblue, text=white, minimum size=7mm, font=\bfseries, inner sep=0pt}
        ]
            \node[leftnode] (t0) at (0,-1.0) {$\theta_0$};
            \node[leftnode] (t1) at (0,0.0) {$\theta_1$};
            \node[rightnode] (x) at (2.5, 0) {$\bx$};

            \draw[->] (t0) -- (x);
            \draw[->] (t1) -- (x);
        \end{tikzpicture}
        \par\vspace{5pt}
        (D)
        \label{fig:environments:springs_d}
    \end{minipage}
    \caption{
    \new{The causal DAGs pertaining to examples A,B,C and D.} 
    }    \label{fig:causal_graph_D2}
\end{figure}
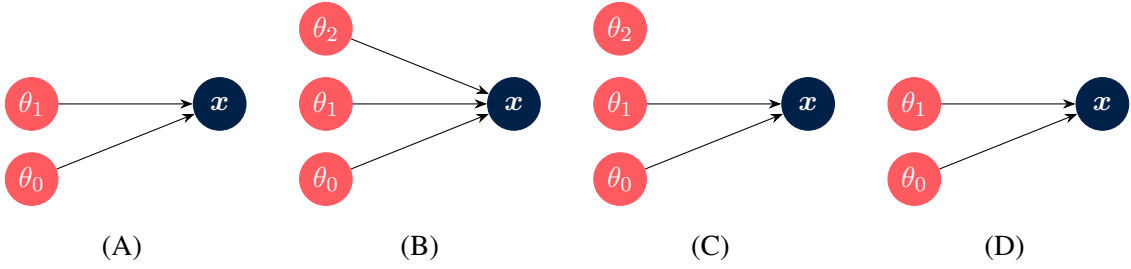

Equation \ref{eq:appendix:D:example:A:2} expresses the corresponding Jacobian of the transition model. As per our definition of causal graphs (non-zero Jacobian at any point), the graph is fully connected. The graph is depicted in figure 7A.  

From the Jacobian, it is evident that the assumption can never be satisfied with this system. Intuitively, because both springs have the same effect upon the system, their combined influence can be aggregated into one parameter. Another implication is that, given any observed trajectory, the ground-truth parameters are not uniquely recoverable. Satisfying the sufficient variability assumption by itself does not guarantee disambiguation of the influence of parameters given any specific trajectory, but it does guarantee that there exist states where the influence can be disambiguated.
\paragraph{Example B. Redundant Causal Edge}
\begin{align}
    \bx_{t+1}&= \bf(\bx_t, \bt) = \frac{\theta_0}{\theta_1}\bx_t + \theta_2
    \\
    \nabla_{\btheta} \bf &=
    \begin{bmatrix}
        \frac{1}{\theta_1}\bx_t & -\frac{\theta_0}{{\theta_1}^2} \bx_t & 1 
    \end{bmatrix}
\end{align}
Example B showcases the same effect in a slightly more complex setting. Many physical effects are functions of the ratios of natural variables. For example, elastic collisions are a function of the ratio of the masses of the colliding objects. In the same setting, using two parameters introduces linearly dependent columns, which violate the assumption. 
\paragraph{Example C. Non-Redundant Causal Edge}
\begin{align}
    \bx_{t+1}&= \bf(\bx_t, \bt) = \theta_0 \bx_t + \theta_1
    \\
    \nabla_{\btheta} \bf &=
    \begin{bmatrix}
        \bx_t & 1 & 0 
    \end{bmatrix}
\end{align}
In contrast, if the redundant parameter is replaced with a single parameter, the first two columns become linearly separable. Note that in this new example, the Jacobian with respect to $\theta_2$ is zero. Consequently, the parameter no longer exerts causal influence on $\bx$ and the assumption no longer requires that the column is linearly independent.

\newpage
\subsubsection{Sufficient Influence Assumption in Related Works}
\label{sec:appendix:D:related}

A notion of sufficient influence (or variability) is a recurring requirement across causal representation learning (CRL) methods, as it is fundamental for resolving ambiguity between causal influences. While this assumption is intuitive in physical systems, where parameters exert identifiable effects on system states,its interpretation becomes less direct in more abstract representation learning settings.

In the temporal CRL setting of \citet{lachapelle2024nonparametric}, the sufficient variability assumption is formalized as a requirement that the family of functions influencing each variable be linearly independent. In their framework, these functions correspond to components of the transition function, which closely parallels our setting, in which identifiability depends on the distinguishability of parameter-induced effects on state transitions. Linear dependence among these functions introduces redundancy, preventing parameters from being uniquely identified.
Recent work on hierarchical temporal CRL \citep{li2025towards}, employs a similar modified sufficient variability assumption.

Similar principles appear outside of temporal CRL. In iVAE, \citep{khemakhem2020variational}, identifiability is achieved by leveraging an observed auxiliary variable that modulates a latent distribution. There, sufficient influence is enforced by requiring independence between the sufficient statistics and the natural parameters of an assumed exponential family. As in our setting, this condition rules out redundant or indistinguishable latent effects that would otherwise break identifiability.

More broadly, related assumptions are commonplace across a wide range of CRL approaches. In interventional CRL, for example, methods typically require interventions to induce sufficiently distinct changes in the latent variables. The work of \citet{von2023nonparametric} formally assumes that interventions is sufficiently distinguishable such that only the correct causal graph can explain a set of observations. CITRIS \citep{lippe2022citris} requires the independence of intervention targets, preventing scenarios in which it is ambiguous which intervention produced a given outcome. Although expressed differently, these conditions serve a common purpose, preventing ambiguity in causal influence.

\newpage

\section{Dataset Details}
\label{appendix:dataset}
\begin{figure}[h]
\centering
\setlength{\tabcolsep}{2pt}%
\begingroup
  \setlength{\abovecaptionskip}{4pt}%
  \setlength{\belowcaptionskip}{0pt}%
  \begin{tabular}{@{}cccc@{}}
    \includegraphics[trim={0cm 0cm 0 1.0cm},clip,width=.24\linewidth]{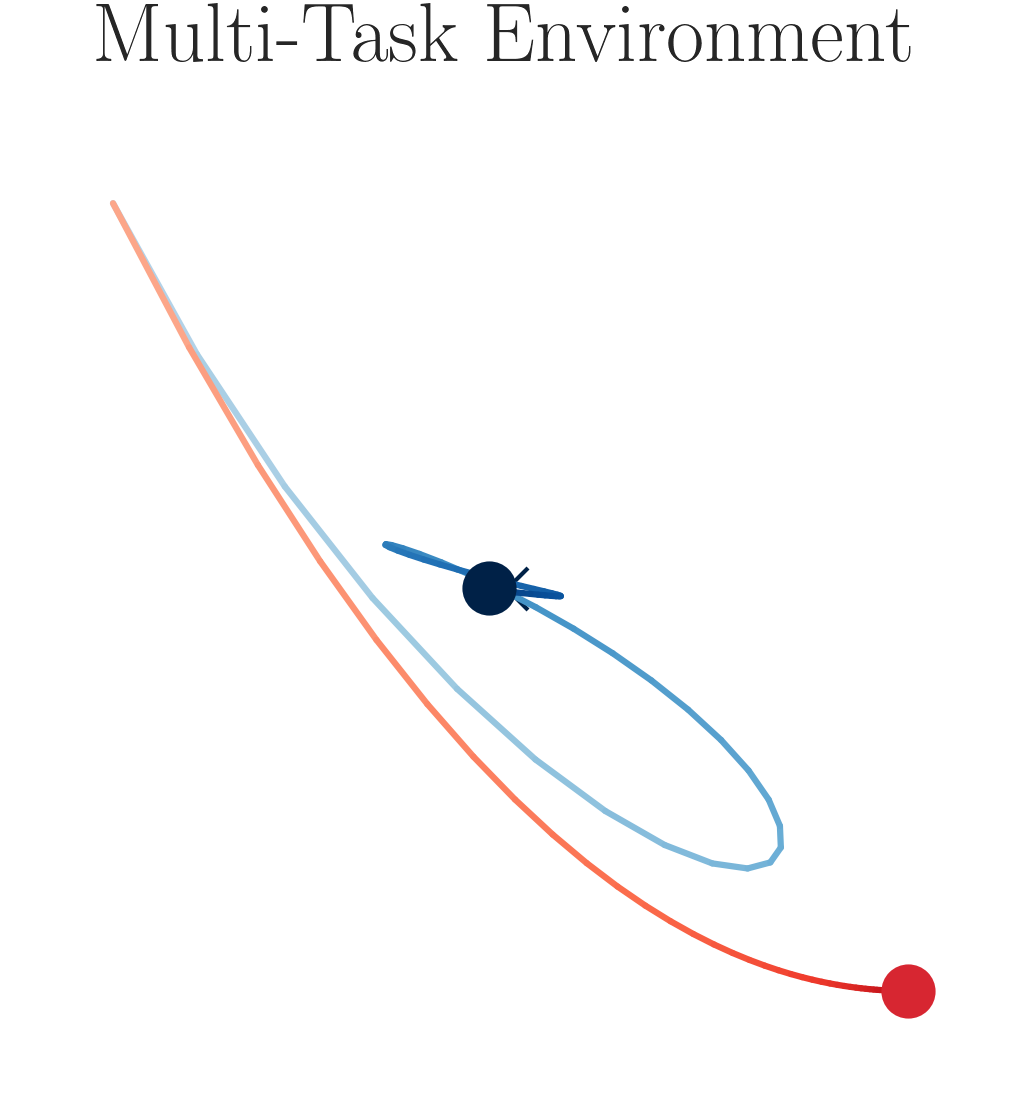} &
    \includegraphics[trim={0cm 0cm 0 1.0cm},clip,width=.24\linewidth]{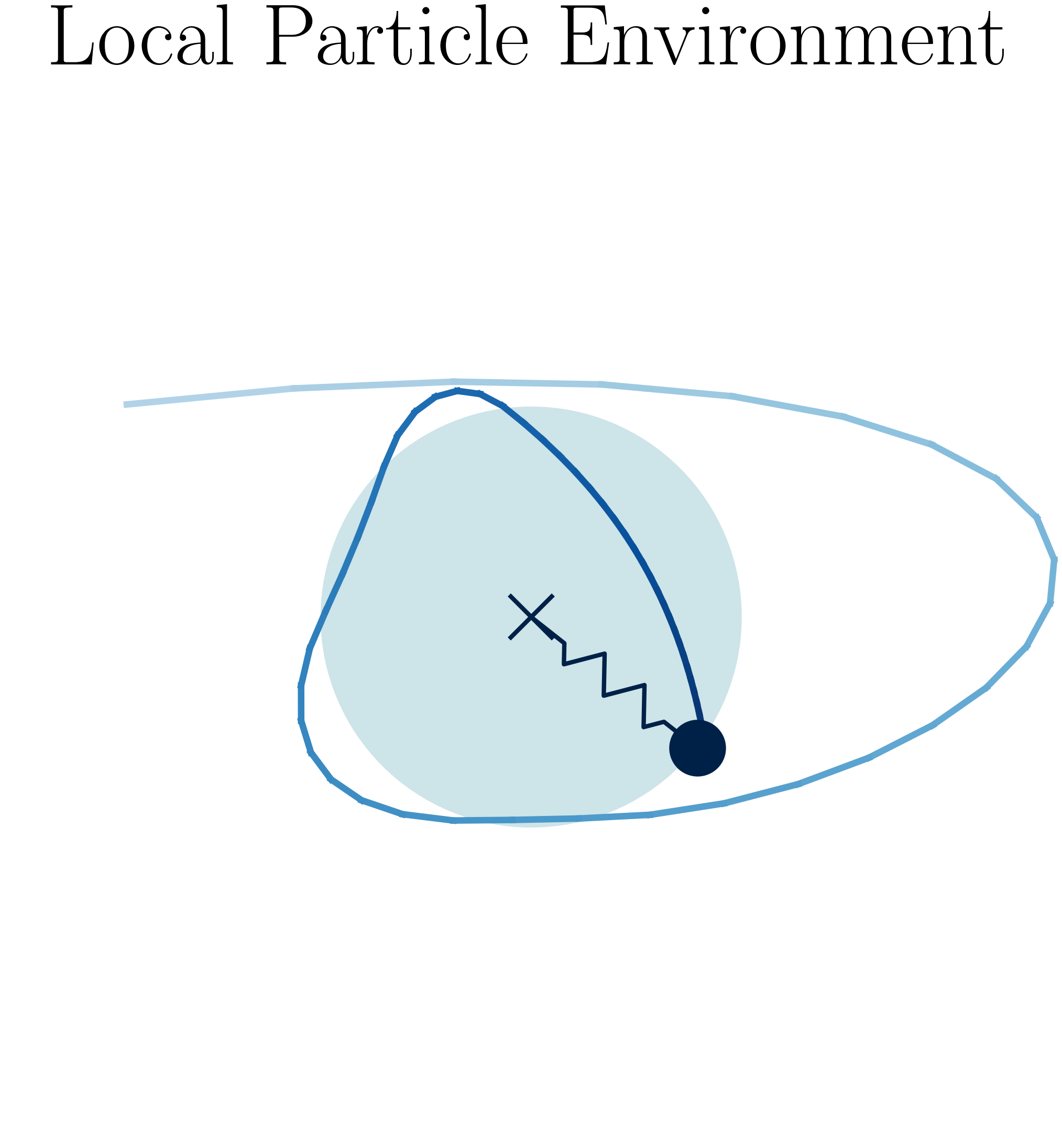} &
    \includegraphics[trim={0cm 0cm 0 1.0cm},clip,width=.24\linewidth]{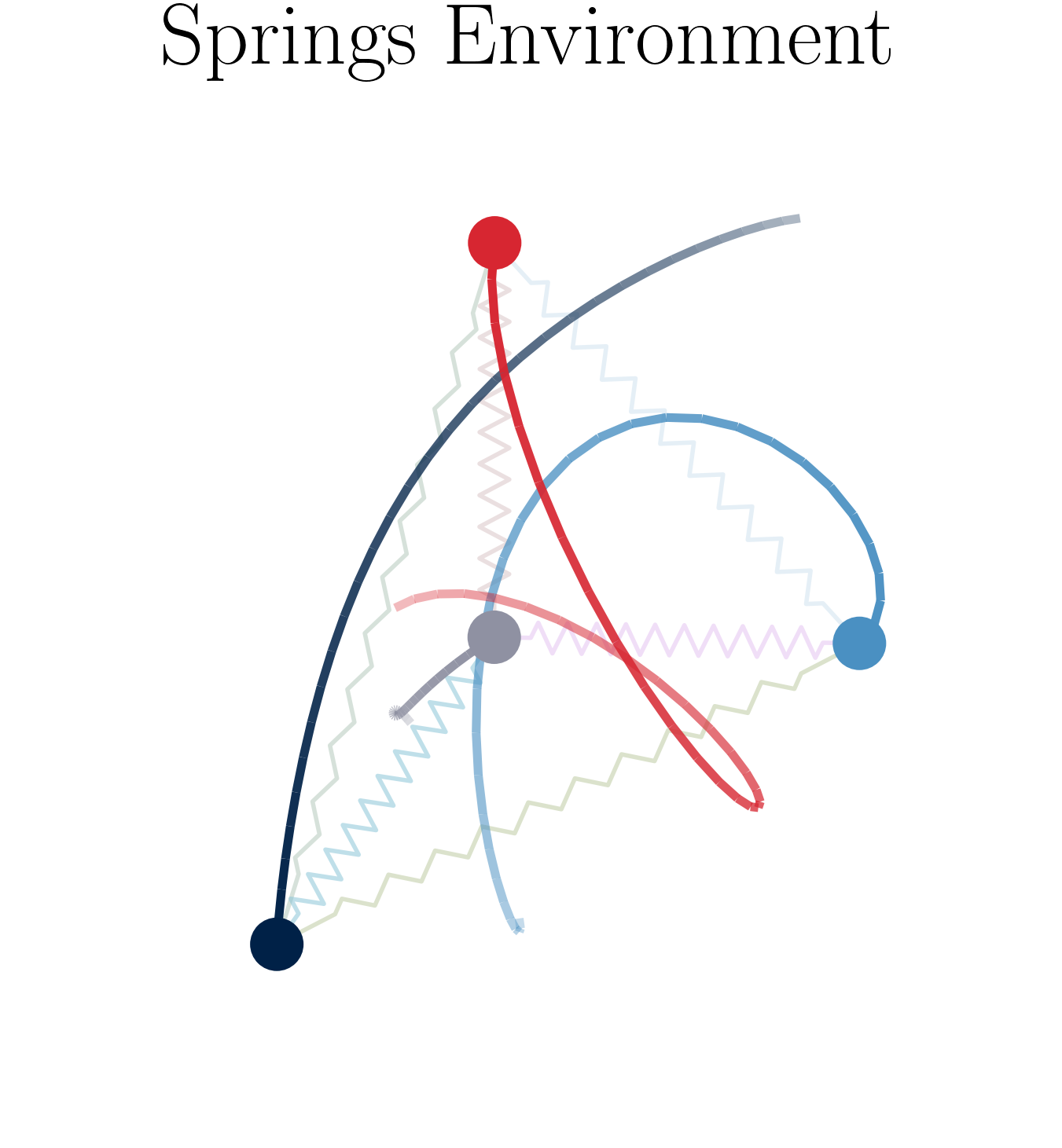} &
    \includegraphics[trim={0cm 0cm 0 1.0cm},clip,width=.24\linewidth]{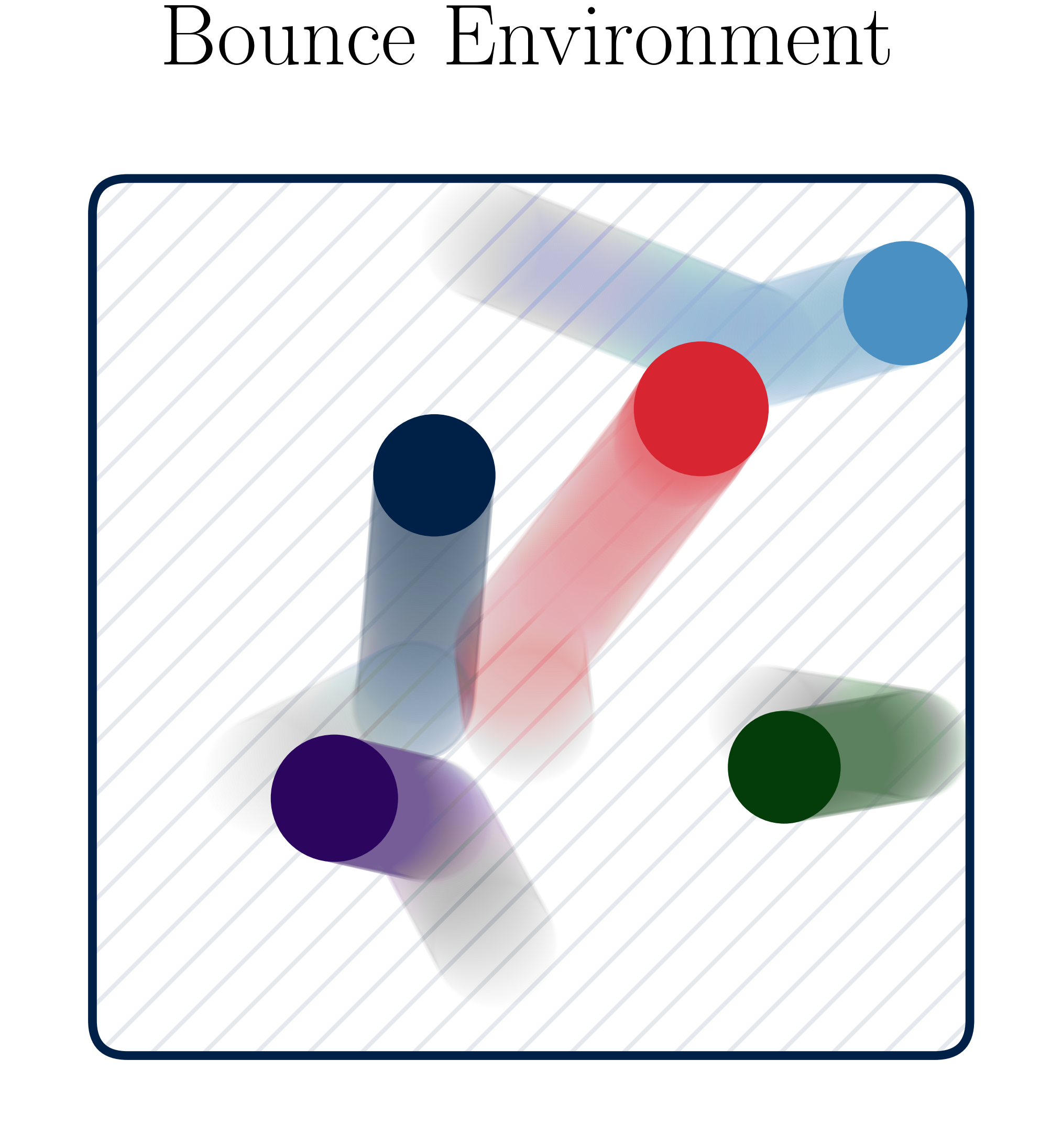} \\
  \end{tabular}
  \caption{Graphical representation of the four evaluation environments, in order from left to right. \textit{Left:} Dual Particle. \textit{Center-Left:} Local Particle. \textit{Center-Right:} Springs. \textit{Right:} Bounce.}
  \label{fig:environments_visualisation}
\endgroup
\end{figure}

\subsubsection*{Dual Particle}
Consider a point-mass that is connected to the origin by a spring (controlled by dynamical parameter, $k$) and under the influence of separate x-axis and y-axis damping (parameters $c_x$, $c_y)$. The state information is taken as a four-dimensional vector containing position and velocity information. As per the graphical criterion derived in section \ref{section:theory}, the dynamical factors are not disentangle-able using the system's causal graph. To achieve disentanglement, we modify the system to perform two counterfactual tasks: general prediction, and then counterfactually what would have happened if the spring were disabled. The resultant DAG is visualised in figure \ref{fig:particles_dag_three} and satisfies the graphical criterion globally. This environment also showcases an interesting edge case in the theory. The path-connectedness condition is not technically satisfied. Consider the environment as visualised in Figure \ref{fig:environments_visualisation}. The red particle represents the counterfactual trajectory with no spring connected. The entire state-space can be partitioned into four, depending upon the sign of the x and y velocity of the non-spring-connected particle. The path-connectedness assumption is violated, as there exists no position in state space or no (positive) damping parameters that would change the sign of the velocity components of the red particle. The assumption being violated is not catastrophic in this scenario, as it only affects the damping parameters, which the blue particle must also reuse, and those are well-behaved relative to the path-connectedness assumption. The result is a representation which is surjective, simply many-to-one. Within the set of experiments, this instantiates itself as a $V$ or inverted-$V$ learnt representation. In practice, we also regularise the state-to-state causal DAG, which isolates the dynamical behaviour of each particle, and forces the learnt representation to be diffeomorphism again. 

\subsubsection*{Local Particle}
The graphical criterion can also be met locally, with state-dependent local causal graphs. The local particle environment implements the same physics as the multi-task particle environment, but disables the spring within a ball around the origin. Consequently, locally within the ball the causal graph is the same as the counterfactual graph of the multi-task environment, and outside the same as the primary task. Interestingly, \textbf{neither} local graph is sufficient to disentangle all parameters by itself, but when used in conjunction they satisfy the local graphical criterion.

\subsubsection*{Springs}
The springs environment contains four-point particles connected by six springs, one for every possible pairing. The causal connections are quantised at the per-object level, meaning each token input to SPARTAN corresponds to an object, not a state dimension. The spring constants are distributed via the absolute value of a standard normal variable, but there is only a seventy per cent probability of a given spring being active at all. The global causal graph for the environment is consequently heavily populated, and can be seen in figure \ref{fig:particles_dag_three}. This environment is an example in which there are more dynamic factors of variation than direct observables, yet the theory guarantees disentanglement with respect to the global graph. Variations of this environment, usually with static spring constants, are commonplace across works in causal discovery and representation learning \citep{ACD,NRI,battaglia2016interaction,li2020causal}.

\subsubsection*{Bounce}
The bounce environment consists of five round balls constrained in a square box, observable through object-centric representations. Each ball can collide elastically with another ball and with the wall. The characteristics of each collision are governed by the mass of each object, which is the dynamical factor of variation, sampled from a non-zero mean normal distribution. For ease of observation, the radius of each ball is also proportional to its mass, creating a dynamic environment that is sensitive to its mass. As with the spring environment, variations of this environment are prevalent testing grounds for works within compositional world models, causal discovery, and representation learning \citep{spartan,battaglia2016interaction,chang2017compositional,jiang2024slot}, and were initially introduced in video form by \citet{van2018relational}. The environment is much more difficult to learn as the causal graphs occur discontinuously only upon collision events. The global graph for this environment is fully connected and provides zero disentanglement guarantees; however, as visualised in figure \ref{fig:particles_dag_three}, there exists a combinatorial cardinality of local causal graphs. Only a small subset of them is required to prove the theory guarantees disentanglement. 

\section{Extended Results}
\label{appendix:extended_results}
\subsection{MCC Metric for Disentanglement}
\label{appendix:mcc_metric}

The MCC disentanglement metric is computed by encoding all trajectories in a validation dataset into the learnt parameters, $\hat{\boldsymbol{\theta}}$, and fitting a small MLP onto every combination of marginalised predictions and marginalised ground truth parameters, $\theta_i \approx \text{MLP}_{ij}(\hat{\theta}_j)$. The predictions from this collection of MLPs are used to form a matrix of non-linear correlation coefficients, $R^2\in \mathbb{R}^{I\times J}$. The MCC metric is calculated as $\text{MCC}= \frac{1}{I}\sum_i^I \max_j(R^2_{i,j})$. The MCC metric is permutation, scale, and shift invariant in representations and accounts for any non-linearities. It does not determine whether a representation is a diffeomorphism, but it does enforce surjectivity. Consequently, an MCC of $\approx 1$ indicates that there exists at least one element in the learnt representation that perfectly specifies, functionally up to continuous, and differentiable surjection, every element in the ground-truth representation. The use of MCC to measure disentanglement is commonplace in causal representation learning literature \citep{lachapelle2022synergies,yao2022temporally,lachapelle2022partial,liidentifying,li2024disentangled,gresele2021independent,khemakhem2020variational}.

The metric is an approximation as exact values depend upon the size and convergence of the MLP used. If not enough data points are used to calculate the mapping, overfitting might also skew the results. Consequently, we opt to use a one-hidden-layer MLP with a hidden dimension of 32 and 5,000 sampled training points, as well as cross-validation with a $10\%$ and $90\%$ split to prevent overfitting and ensure that the model converges quickly.

\subsection{Examples of Learnt Representations}
\label{appendix:representations}

To illustrate the learnt representations, the empirical entanglement mapping can be plotted. We encode over a validation dataset of trajectories all trajectories into pairs of latent vectors and ground-truth dynamical factor vectors. Then in a grid by plotting the marginals we can represent the multidimensional mapping in a way that visually shows disentanglement. On the y axis we depict the ground truth parameters and on the x axis the learnt parameters. Surjective disentanglement (as measured by MCC) is evident when specifying one learnt parameter allows for a precise specification of a ground truth factor of variation. Each diagram is a representative sample from one of the trials in the main results.

\begin{figure}[h]
    \centering
    \includegraphics[width=1.0\linewidth]{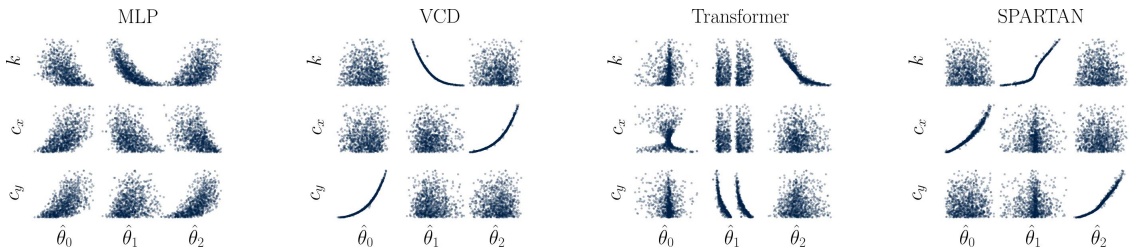}
    \caption{Representative samples of representations learnt on the Dual Particle environment. Each subplot plots the marginal between one ground-truth factor of variation on the vertical axis against a learnt parameter dimension on the horizontal axis. Increasingly sharp one-to-one mappings show disentanglement. The MLP learns a representation where parameters are correlated with the factors of variation but not disentangled. The Transformer learns a non-bijective distribution in which only the spring constant, $k$, exhibits clear diffeomorphic disentanglement. The multi-form nature of the Transformers distribution indicates entanglement with the state. Both SPARTAN and VCD show clear, sharp, and crisp disentanglement.}
    \label{fig:representations_two_particle}
\end{figure}
\begin{figure}[h]
    \centering
    \includegraphics[width=1.0\linewidth]{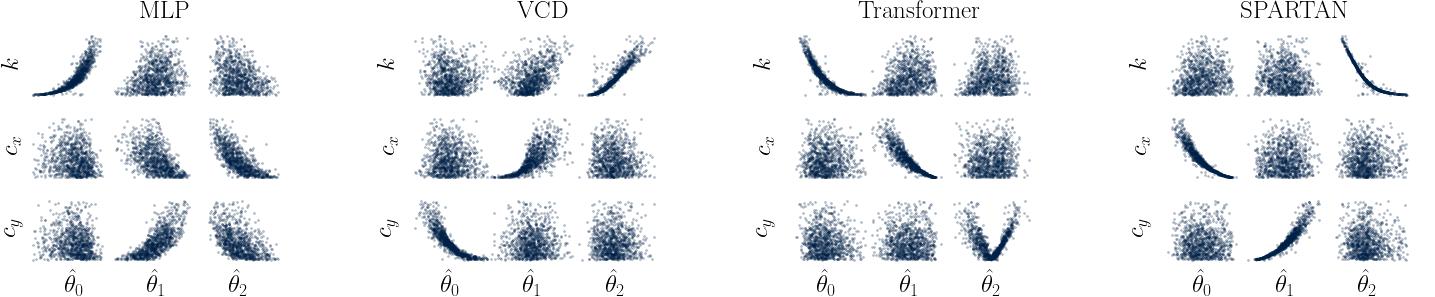}
    \caption{Representative samples of representations learnt on the Local Particle Environment. Each subplot plots the marginal between one ground-truth factor of variation on the vertical axis against a learnt parameter dimension on the horizontal axis. Increasingly sharp one-to-one mappings show disentanglement. Both the MLP and VCD baselines learn representations that are correlated with but not fully disentangled. In this instance, the global graphical criterion predicts partial disentanglement; the spring constant $k$ is guaranteed to be disentangled, but not the damping constants. The VCD model strongly disentangles $k$ and $c_y$, but $c_x$ remains entangled. 
    SPARTAN learns a diffeomorphism mapping as predicted by the theory, whilst a Transformer learns a surjective mapping. Both perform well on the MCC metric.}
    \label{fig:representations_local_particle}
\end{figure}
\begin{figure}[h]
    \centering
    \includegraphics[width=0.75\linewidth]{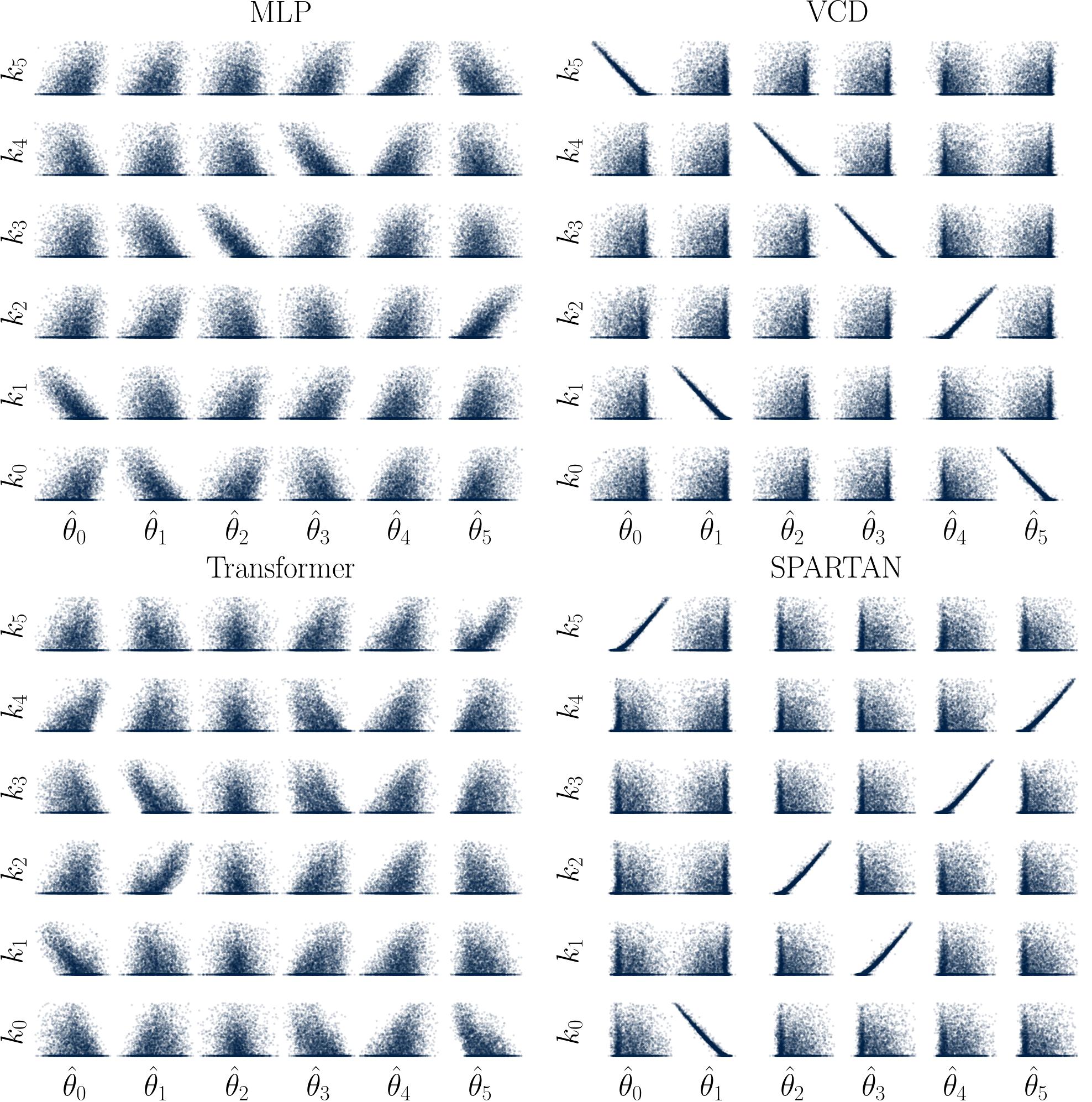}
    \caption{Representative samples of representations learnt in the Springs environment. Each subplot plots the marginal between one ground-truth factor of variation on the vertical axis against a learnt parameter dimension on the horizontal axis. Increasingly sharp diagonals in each subplot showcase disentanglement. VCD and SPARTAN both learn highly disentangled representations with evident structure that is missing from the MLP and Transformer baselines.}
    \label{fig:representations_springs}
\end{figure}
\begin{figure}[h]
    \centering
    \includegraphics[width=0.75\linewidth]{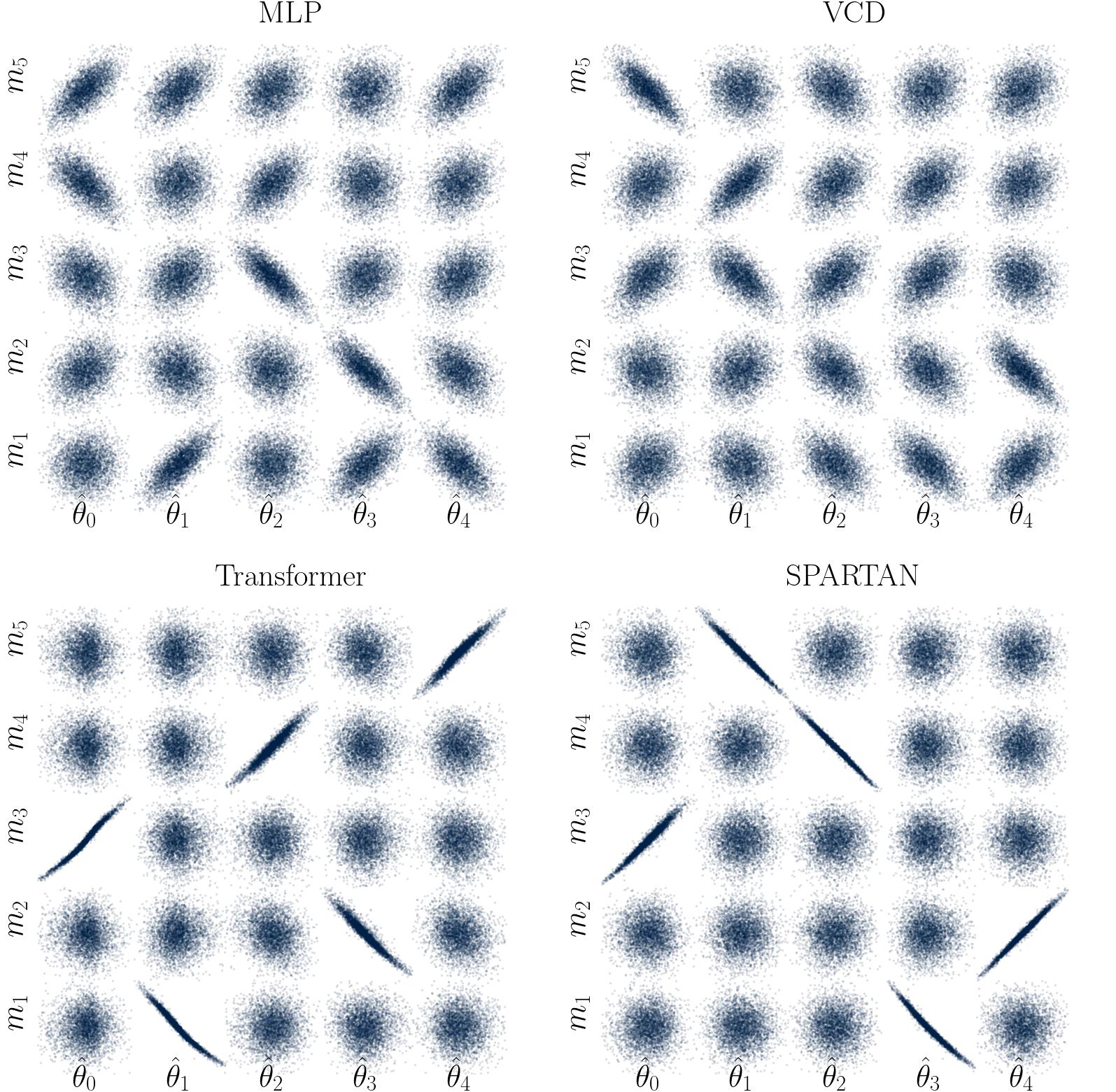}
    \caption{Representative samples of representations learnt in the Bounce environment. Each subplot plots the marginal between one ground-truth factor of variation on the vertical axis against a learnt parameter dimension on the horizontal axis. Increasingly sharp diagonals indicate disentanglement. As this environment requires adherence to local-causal graphs for disentanglement, both the MLP and VCD baselines fail to disentangle. Both SPARTAN and a Transformer selectively attend to parameters, and in this instance, the Transformer's softer bias is sufficient for both to disentangle strongly.}
    \label{fig:representations_bounce}
\end{figure}
\clearpage
\subsection{Examples of Learnt Causal Graphs}
\label{appendix:causal_graphs}
\begin{figure}[h]
    \centering
    \includegraphics[width=0.5\linewidth]{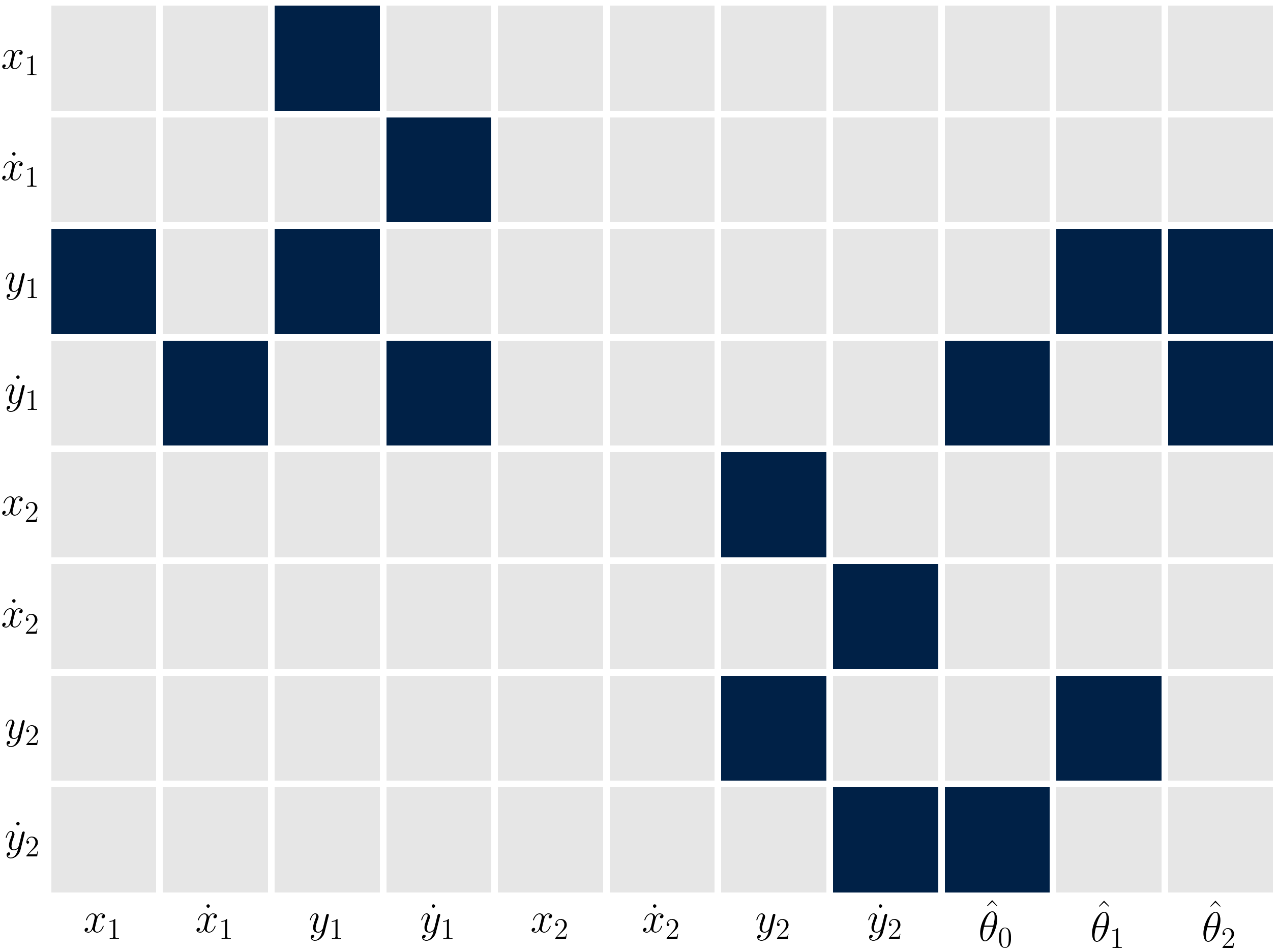}
    \caption{Representative \emph{global} causal graph learnt by the VCD architecture on the dual particle environment. State space tokenisation is used. The parameter influence graph (referred to as $\hat{\mathcal{G}}$ in the theory) is represented by the final three columns. The sparse representation satisfies the global graph criterion for disentanglement. Dark blue squares represent an edge probability of approximately $1$, light grey squares an edge probability of approximately $0$, and colours in between represent values in between.}
    \label{fig:causal_grapg_vcd_dual}
\end{figure}
\begin{figure}[h]
    \centering
    \includegraphics[width=0.5\linewidth]{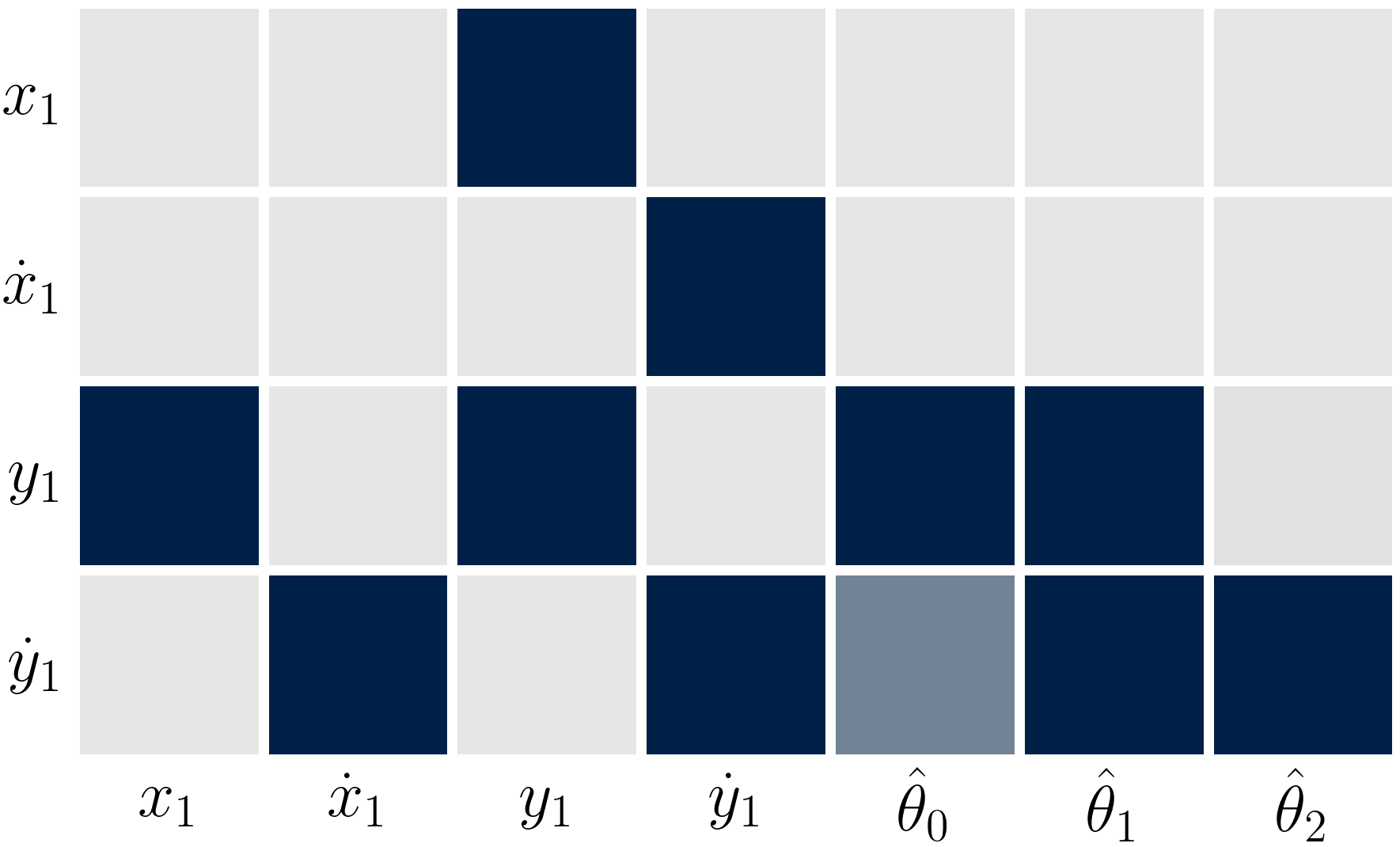}
    \caption{Representative \emph{global} causal graph learnt by the VCD architecture on the local particle environment. State space tokenisation is used. The parameter influence graph (referred to as $\hat{\mathcal{G}}$ in the theory) is represented by the final three columns. In contrast to the local-graphs learnt by SPARTAN, the global graph displayed does not satisfy the disentanglement condition. Dark blue squares represent an edge probability of approximately $1$, light grey squares an edge probability of approximately $0$, and colours in between represent values in between.}
    \label{fig:causal_grapg_vcd_dual}
\end{figure}
\begin{figure}[h]
    \centering
    \includegraphics[width=0.70\linewidth]{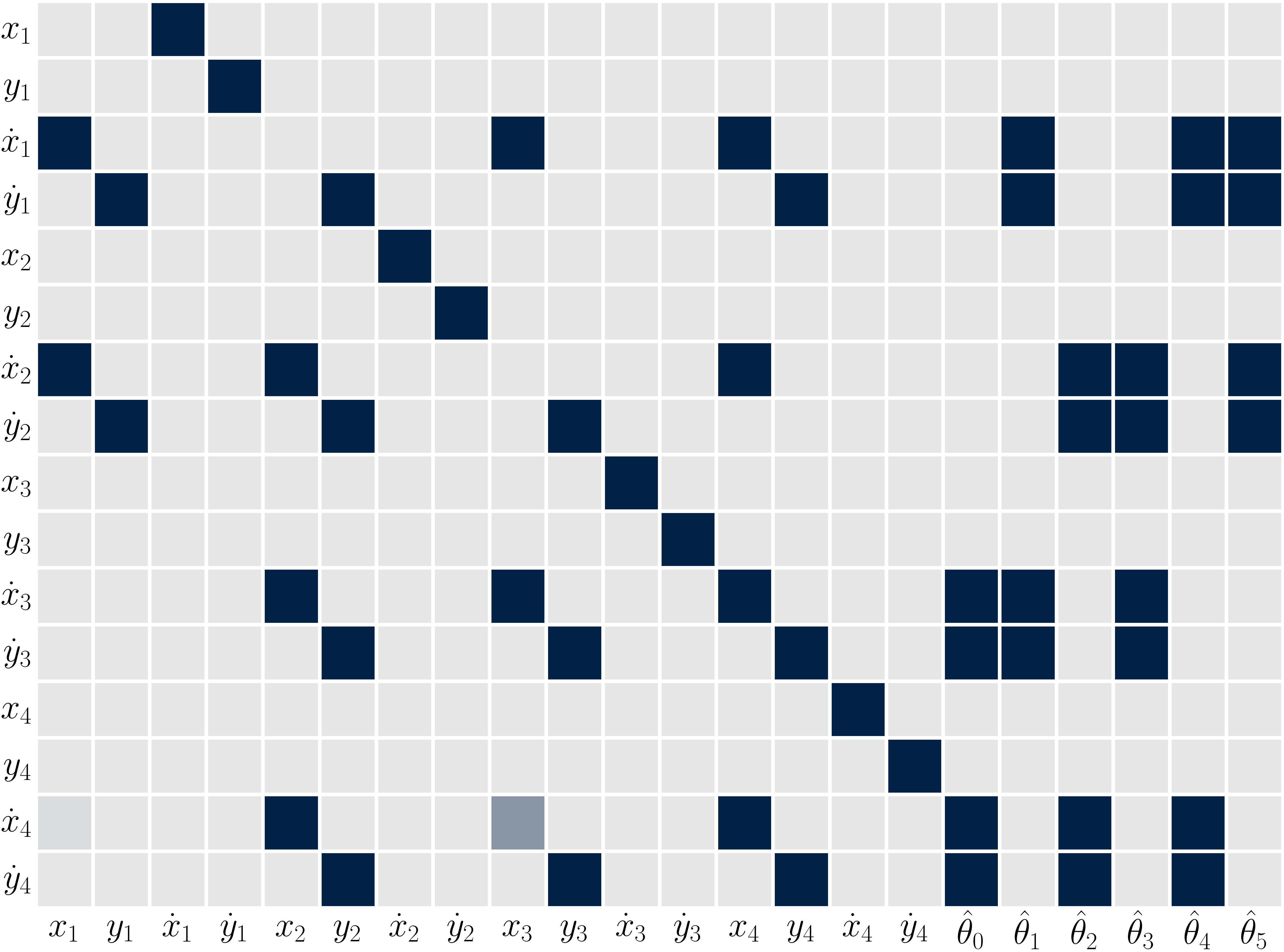}
    \caption{Representative \emph{global} causal graph learnt by the VCD architecture on the springs environment. State space tokenisation is used. The parameter influence graph (referred to as $\hat{\mathcal{G}}$ in the theory) is represented by the final six columns. The sparse representation satisfies the global graph criterion for disentanglement. Dark blue squares represent an edge probability of approximately $1$, light grey squares an edge probability of approximately $0$, and colours in between represent values in between.}
    \label{fig:causal_grapg_vcd_dual}
\end{figure}
\begin{figure}[h]
    \centering
    \includegraphics[width=0.5\linewidth]{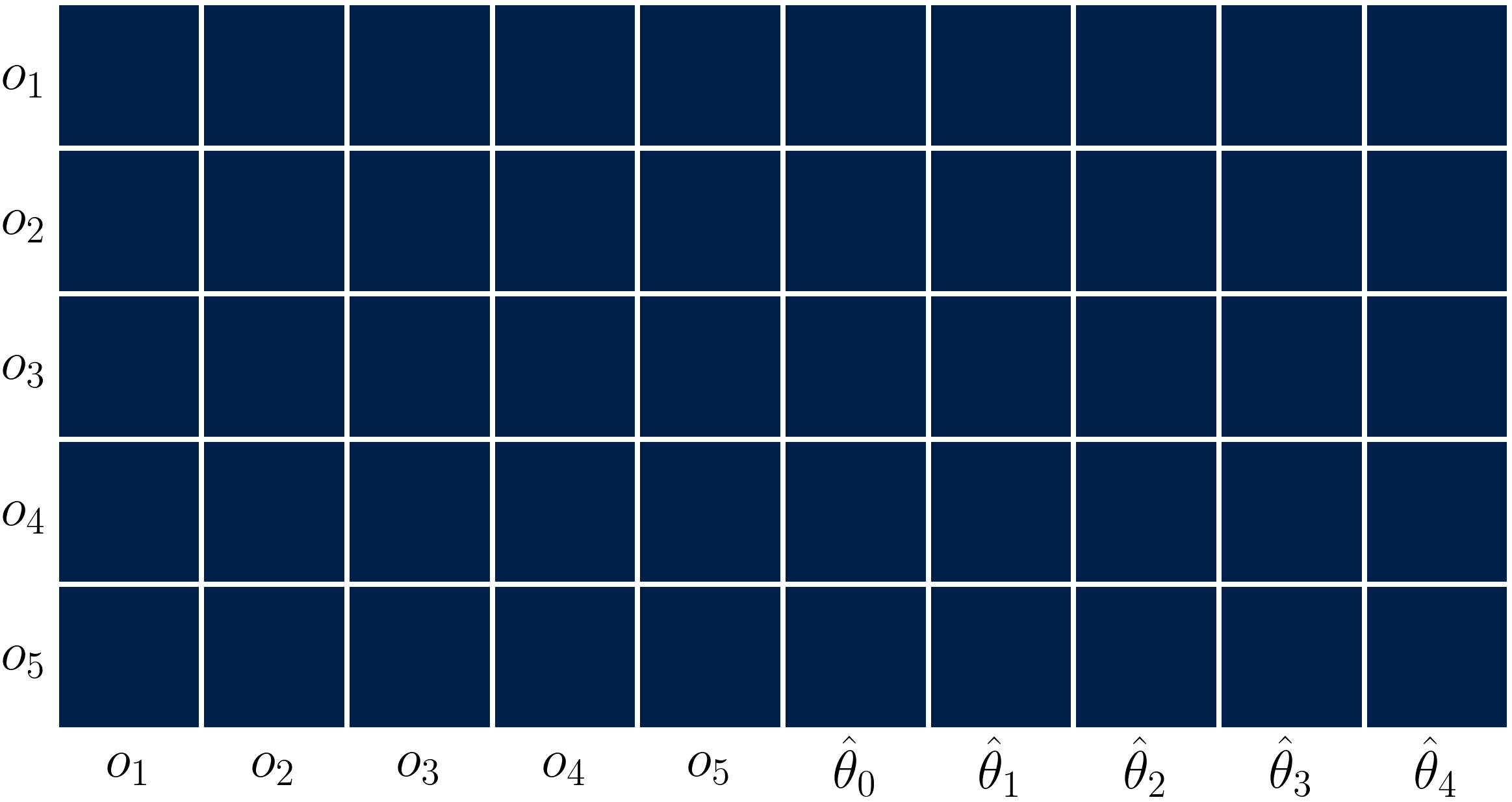}
    \caption{Representative \emph{global} causal graph learnt by the VCD architecture on the bounce environment. Object-level tokenisation is used. The parameter influence graph (referred to as $\hat{\mathcal{G}}$ in the theory) is represented by the final five columns. The graph is fully connected, as any pair of objects and any set of parameters \emph{can} interact at some point in the dataset, even though interactions are very sparse in practice. The \emph{local} sparsity is taken into account by the SPARTAN baseline. Dark blue squares represent an edge probability of approximately $1$, light grey squares an edge probability of approximately $0$, and colours in between represent values in between. 
    }
    \label{fig:causal_grapg_vcd_dual}
\end{figure}

\clearpage

\subsection{Ablation of the Logit Regularisation}

\label{appendix:attention_logit_ablation}
\begin{figure}[tb]
    \centering
    \includegraphics[width=0.8\linewidth]{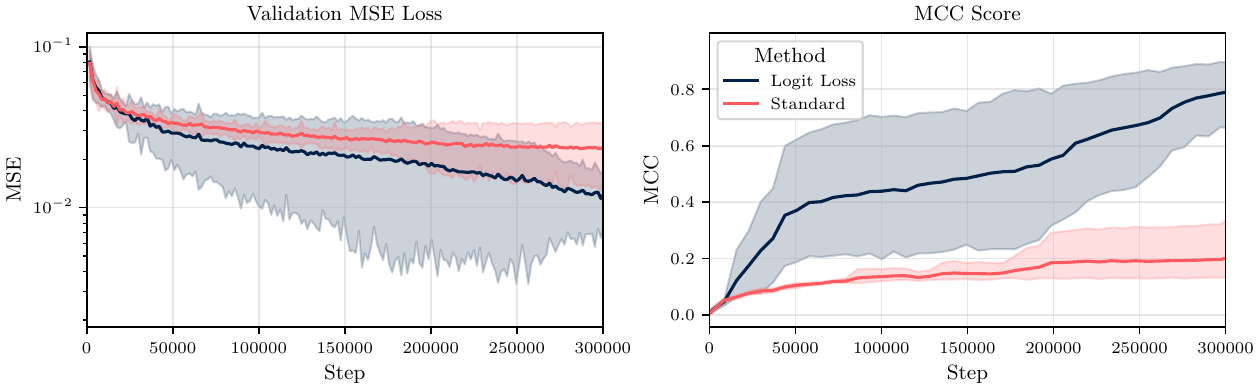}
    \caption{Comparison of the impact that the attention logit regularisation loss has on training a transformer in the bounce environment. Both lines consists of 8 trials completed with different random seeds, and the shaded region represents plus-minus one standard deviation.}
    \label{fig:logitlossregcomparison}
\end{figure}

We demonstrate the necessity of attention logit regularisation through an ablation study within the bounce environment, comparing performance with and without across eight random seeds. Because the encoder is bootstrapped to the decoder, its capacity to encode dynamical behaviour emerges only after the decoder has adequately modelled the system and identified the relevant factors of variation. This results in a two stage information adoption process akin to the dual optimisation procedure used with the SPARTAN and VCD experiments. Without the attention logit regularisation, the decoder struggles to incorporate the new information. We conjecture that this is due to vanishing gradients in the softmax of the transformer. The effect is visualised in figure \ref{fig:logitlossregcomparison}, which shows that the transformer can only disentangle in this setting when introducing the logit loss. We observed in experimental settings that undertuning or removing the logit loss hyperparameter, $\lambda_{\text{logit}}$, in the SPARTAN experimental runs results in the path loss plateauing during training, and consequently, the model does not sparsify sufficiently for representational disentanglement. 
\clearpage
\subsection{Disentanglement Induced by the ELBO Loss.}
\label{appendix:kl_div_disentanglement}

\begin{figure}[tb]
    \centering
    \includegraphics[width=1.0\linewidth]{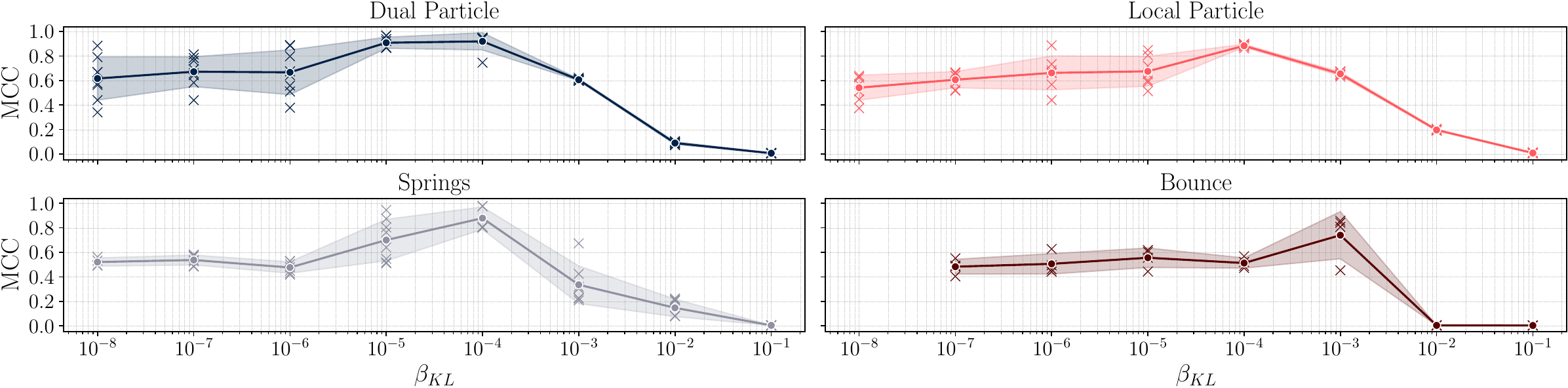}
    \caption{Comparison of the disentanglement achieved by tuning the weighting on the KL regularisation and using an MLP decoder. Each data point represents one trial over a random seed, and the lines represent the mean over all trials at that value. The shaded region covers plus-minus one standard deviation.}
    \label{fig:klbeta_sweep}
\end{figure}
During all experiments, the weighting on the KL regularisation $\beta_{KL}$ is set to $10^{-6}$. This is to ensure that latent regularisation does not contribute to the disentanglement of factors, an issue widely studied in the non-linear ICA and representation learning literature \citep{burgess2018understanding,kim2018disentangling,khemakhem2020variational,chen2018isolating,esmaeili2019structured,mathieu2019disentangling}. The impact of varying the KL regularisation parameter is shown in figure \ref{fig:klbeta_sweep}.

\end{document}